\DeclarePairedDelimiter\floor{\lfloor}{\rfloor}
\newcommand{\punif}{p^{\rm unif}}
\newcommand\numberthis{\addtocounter{equation}{1}\tag{\theequation}}
\newcommand\blfootnote[1]{%
  \begingroup
  \renewcommand\thefootnote{}\footnote{#1}%
  \addtocounter{footnote}{-1}%
  \endgroup
}
\newcommand{\ifcomments}{\iftrue}
\newtheorem{assump}[theorem]{Assumption}
\begin{document}

\title{Adaptive Client Sampling in Federated Learning via \\Online Learning with Bandit Feedback}

\author{\name Boxin Zhao \email boxinz@uchicago.edu \\
       \addr University of Chicago Booth School of Business       \\
       Chicago, IL, 60637, USA
       \AND
       \name Lingxiao Wang \email lw324@njit.edu \\
       \addr New Jersey Institute of Technology Department of Data Science \\
       Newark, NJ, 07106, USA
       \AND
       \name Ziqi Liu \email ziqiliu@antfin.com \\
       \name Zhiqiang Zhang \email lingyao.zzq@antfin.com \\
       \name Jun Zhou \email jun.zhoujun@antfin.com \\
       \addr Ant Financial Group\\
       Hangzhou, Zhejiang, China
       \AND
       \name Chaochao Chen \email zjuccc@zju.edu.cn \\
       \addr College of Computer Science and Technology \\
       Zhejiang University\\
       Hangzhou, Zhejiang, China
       \AND
       \name Mladen Kolar \email mkolar@marshall.usc.edu  \\
       \addr University of Southern California Marshall School of Business \\
       Los Angeles, CA, 90089, USA
       }

\editor{Francesco Orabona}

\maketitle

\begin{abstract}
\blfootnote{Correspondence: Mladen Kolar (mkolar@marshall.usc.edu) and Jun Zhou (jun.zhoujun@antfin.com).}
Due to the high cost of communication, federated learning (FL) systems need to sample a subset of clients that are involved in each round of training. As a result, client sampling plays an important role in FL systems as it affects the convergence rate of optimization algorithms used to train machine learning models. Despite its importance, there is limited work on how to sample clients effectively. In this paper, we cast client sampling as an online learning task with bandit feedback, which we solve with an online stochastic mirror descent (OSMD) algorithm designed to minimize the sampling variance. We then theoretically show how our sampling method can improve the convergence speed of federated optimization algorithms over the widely used uniform sampling.
Through both simulated and real data experiments, we empirically illustrate the advantages of the proposed client sampling algorithm over uniform sampling and existing online learning-based sampling strategies. The proposed adaptive sampling procedure is applicable beyond the FL problem studied here and can be used to improve the performance of stochastic optimization procedures such as stochastic gradient descent and stochastic coordinate descent.
\end{abstract}

\begin{keywords}
federated learning, client sampling, online learning, optimization, \\ variance reduction
\end{keywords}

\section{Introduction}
\label{sec:intro}

Modern edge devices, such as personal mobile phones, wearable devices, and sensor systems in vehicles, collect large amounts of data that are valuable for training of machine learning models. If each device only uses its local data to train a model, the resulting generalization performance will be limited due to the number of available samples on each device. Traditional approaches where data are transferred to a central server, which trains a model based on all available data, have fallen out of fashion due to privacy concerns and high communication costs. Federated Learning (FL) has emerged as a paradigm that allows for collaboration between different devices (clients) to train a global model while keeping data locally and only exchanging model updates~\citep{mcmahan2017communication}.

In a typical FL process, we have clients that contain data and a central server that orchestrates the training process~\citep{kairouz2021advances}. The following process is repeated until the model is trained: (i) the server selects a subset of available clients; (ii) the server broadcasts the current model parameters and sometimes also a training program (e.g., a Tensorflow graph~\citep{abadi2016tensorflow}); (iii) the selected clients make updates to the model parameters based on their local data; (iv) the local model updates are uploaded to the server; (v) the server aggregates the local updates and makes a global update of the shared model. In this paper, we focus on the first step and develop a practical strategy for selecting clients with provable guarantees. 

To train a machine learning model in a FL setting with $M$ clients, we would like to minimize the following objective\footnote{We use $[M]$ to denote the set $\{1,\dots,M\}.$}:
\begin{equation}
\label{eq:SingleFL-obj}
\min_{w} F (w) \coloneqq \sum_{m \in [M]} \lambda_m \phi \left( w  ;  \mathcal{D}_m \right),
\end{equation}
where $\phi (w ; \mathcal{D}_m)$ is the loss function used to assess the quality of a machine learning model parameterized by the vector $w$ based on the local data $\mathcal{D}_m$ on the client $m \in [M]$. The parameter $\lambda_m$ denotes the weight for client $m$. Typically, we have $\lambda_m=n_m/n$, where $n_m=\vert \mathcal{D}_m \vert$ is the number of samples on the client $m$, and the total number of samples is $n = \sum^M_{m=1} n_m$. At the beginning of the $t$-th communication round, the server uses the sampling distribution $p^t=(p^t_1,\dots,p^t_M)^{\top}$ to choose $K$ clients by sampling with replacement from $[M]$\footnote{In this paper, we assume that all clients are available in each round and the purpose of client sampling is to reduce the communication cost, which is also the case considered by some previous research~\citep{chen2020optimal}. However, in practice, it is possible that only a subset of clients are available at the beginning of each round due to physical constraint. In Appendix~\ref{sec:samp-phy-constraint}, we discuss how to extend our proposed methods to deal with such situations. Analyzing such an extension is highly non-trivial and we leave it for further study. See detailed discussion in Appendix~\ref{sec:samp-phy-constraint}.}. Let $S^t \subseteq [M]$ denote the set of chosen clients with $|S^t| = K$. The server transmits the current model parameter vector $w^t$ to each client $m \in S^t$. The client $m$ computes the local update $g^t_m$\footnote{Here by model update, we actually mean the negative direction of model update. For example, when applying gradient descent, we refer the gradient as the model update, while the model parameter makes an update at the direction of the negative gradient. We stick with the term model update since it is more commonly used.} and sends it back to the server\footnote{Throughout the paper, except in Section~\ref{sec:convg-analysis-batchSGD}, we do not specify how $g^t_m$ is obtained. One possibility that the reader could keep in mind for concreteness is the LocalUpdate algorithm~\citep{charles2020outsized}, which covers well-known algorithms such as mini-batch SGD and FedAvg~\citep{mcmahan2017communication}.}. After receiving local updates from clients in $S^t$, the server constructs a stochastic estimate of the global gradient as
\begin{equation}
\label{eq:grad-est}
g^t = \frac{1}{K} \sum_{m \in S^t} \frac{\lambda_m}{p^t_m} g^t_m,
\end{equation}
and makes the global update of the parameter $w^t$ using $g^t$. For example, $w^{t+1} = w^t - \mu^t g^t$, if the server is using stochastic gradient descent (SGD) with the stepsize sequence $\{\mu^t\}_{t\geq 1}$ \citep{bottou2018optimization}. However, the global update can be obtained using other procedures as well.

The sampling distribution in FL is typically uniform over clients, that is, $p^t=\punif=(1/M,\dots,1/M)^{\top}$. However, nonuniform sampling (also called importance sampling) can lead to faster convergence, both in theory and practice, as has been illustrated in stochastic optimization \citep{zhao2015stochastic,needell2014stochastic}. While the sampling distribution can be designed based on prior knowledge \citep{zhao2015stochastic,johnson2018training,needell2014stochastic,stich2017safe}, we cast the problem of choosing the sampling distribution as an online learning task and need no prior knowledge about~\eqref{eq:SingleFL-obj}.

Existing approaches to designing a sampling distribution using online learning takes a stationary online learning framework and focus on matching the best sampling distribution that does not change over the training process.
As a result, the obtained algorithms update the sampling distribution by treating all history information equally.
However, as the training proceeds, the best sampling distribution changes with iterations.
To address this problem, we take a non-stationary online learning framework and use an online stochastic mirror descent (OSMD) algorithm that puts more emphasize on recent feedback and learns to 'forget' the history.
Consequently, our method shows empirical advantages over the previous methods.
Besides, we derive a dynamic regret upper bound that allows the comparators to change with iterations, which generalizes the theoretical results on static regret of previous research.
Moreover, we provably show how our sampling method improves the convergence guarantee of federated optimization methods 
over uniform sampling by reducing the dependency on the heterogeneity of the problem, which is also new to the best of our knowledge.

\subsection{Notation}
Let $\mathbb{R}^M_{+}=[0,\infty)^M$ and $\mathbb{R}^M_{++}=(0,\infty)^M$. For $M \in \mathbb{N}^{+}$, let $\mathcal{P}_{M-1} \coloneqq \{ x \in \mathbb{R}^M_{+} : \sum^M_{i=1} x_i = 1 \}$ be the $(M-1)$-dimensional probability simplex. We use $p=(p_1,\dots,p_M)^{\top}$ to denote a sampling distribution with support on $[M] \coloneqq \{1,\dots,M\}$. We use $p^{1:T}$ to denote a sequence of sampling distributions $\{p^t\}_{t=1}^T$. We use $\Vert \cdot \Vert_p$ to denote the $L_p$-norm for $1\leq p \leq \infty$. For $x \in \mathbb{R}^n$, we have $\Vert x \Vert_p = ( \sum^n_{i=1} x^p_i )^{1/p}$ when $1<p<\infty$,  $\Vert x \Vert_1 = \sum^n_{i=1} \vert x_i \vert $, and $\Vert x \Vert_{\infty} = \max_{1\leq i \leq n} \vert x_i \vert $. Given any $L_p$-norm $\Vert \cdot \Vert$, we define its dual norm as $\Vert z \Vert_{\star} \coloneqq \sup \{ z^{\top}x : \Vert x \Vert \leq 1  \}$. We use $|\mathcal{B}|$ to denote the cardinality of the index set $|\mathcal{B}|$.

Let $\Phi:\mathcal{D}\subseteq\mathbb{R}^M \mapsto\mathbb{R}$ be a differentiable convex function defined on $\mathcal{D}$, where $\mathcal{D}$ is a convex open set, and we use $\bar{\mathcal{D}}$ to denote the closure of $\mathcal{D}$. The Bregman divergence between any $x,y \in \mathcal{D}$ with respect to the function $\Phi$ is given as $D_{\Phi} \left(x \,\Vert\, y\right) = \Phi (x) - \Phi (y) - \langle \nabla \Phi (y), x - y \rangle$. The unnormalized negative entropy is denoted as $\Phi(x)=\sum^M_{m=1} x_m \log x_m - \sum^M_{m=1}x_m$, $x=(x_1,\dots,x_M)^{\top} \in \mathcal{D}=\mathbb{R}^M_{+}$, with $0\log0$ defined as $0$.

For two positive sequences $\{a_n\}$ and $\{b_n\}$, we use $a_n=O(b_n)$ to denote that there exists $C>0$ such that $a_n/b_n \leq C$ for all $n$ large enough.
Similarly, we use $a_n=\Omega(b_n)$ to denote that there exists $c>0$ such that $a_n / b_n \geq c$ for all $n$ large enough.
We denote $a_n=\Theta(b_n)$ if $a_n=O(b_n)$ and $a_n=\Omega(b_n)$ simultaneously.
Besides, we use $a_n=o(b_n)$ if $\lim_n a_n/b_n =0$.
In addition, $a_n=\tilde{O} (b_n)$ if $a_n=O(b_n \log^k b_n)$ for some $k\geq 0$.


\subsection{Organization of the Paper}

We summarize the related work in Section~\ref{sec:related-work}.
We motivate importance sampling in FL and introduce an adaptive client sampling algorithm in Section~\ref{sec:client-sampling-online}.
We develop optimization guarantees of mini-batch SGD and FedAvg using our sampling scheme in Section~\ref{sec:convg-analysis-batchSGD}.
In Section~\ref{sec:thm-osmd-sampler}, we discuss regret analysis which serves as a key component for optimization analysis.
In Section~\ref{sec:adp-osmd-sampler}, we propose an extension of the sampling method that is adaptive to the unknown problem parameters. 
We provide the experimental results on simulated data in Section~\ref{sec:experiments} and real-world data in Section~\ref{sec:real-data-exp}.
We conclude our paper with Section~\ref{sec:conclusion}.
We leave all the technical proofs and additional discussions in Appendix.
Code to replicate the results in this paper is available at\\[1em]
\hphantom{.................}\url{https://github.com/boxinz17/FL-Client-Sampling}.



\section{Related Work}
\label{sec:related-work}

Our paper is related to client sampling in FL, importance sampling in stochastic optimization, and online convex optimization. We summarize only the most relevant literature, without attempting to provide an extensive survey.

For client sampling, \citet{chen2020optimal} proposed to use the theoretically optimal sampling distribution to choose clients. However, their method requires all clients to compute local updates in each round, which is impractical due to stragglers. \citet{ribero2020communication} modelled the parameters of the model during training by an Ornstein-Uhlenbeck process, which was then used to derive an optimal sampling distribution. \citet{cho2020client} developed a biased client selection strategy and analyzed its convergence property. As a result, the algorithm has a non-vanishing bias and is not guaranteed to converge to optimum. Moreover, it needs to involve more clients than our method and is thus communication and computational more expensive. \citet{kim2020accurate, cho2020bandit, yang2020federated} considered client sampling as a multi-armed bandit problem, but provided only limited theoretical results. \citet{wang2020optimizing} used reinforcement learning for client sampling with the objective of maximizing accuracy, while minimizing the number of communication rounds.

Our paper is also closely related to importance sampling in stochastic optimization. \citet{zhao2015stochastic,needell2014stochastic} illustrated that by sampling observations from a nonuniform distribution when using a gradient-based stochastic optimization method, one can achieve faster convergence. They designed a fixed sampling distribution using prior knowledge on the upper bounds of gradient norms. \citet{csiba2018importance} extended the importance sampling to mini-batches. \citet{stich2017safe, johnson2018training, gopal2016adaptive} developed adaptive sampling strategies that allow the sampling distribution to change over time. \citet{nesterov2012efficiency, perekrestenko2017faster, allen2016even, salehi2018coordinate} discussed importance sampling in stochastic coordinate descent methods. \citet{namkoong2017adaptive, salehi2017stochastic, borsos2018online, borsos2019online, el2020adaptive} illustrated how to design the sampling distribution by solving an online learning task with bandit feedback. \citet{namkoong2017adaptive, salehi2017stochastic} designed the sampling distribution by solving a multi-armed bandit problem with the EXP3 algorithm~\citep[Chapter 11]{lattimore2020bandit}. \citet{borsos2018online} used the follow-the-regularized-leader algorithm~\citep[Chapter 28]{lattimore2020bandit} to solve an online convex optimization problem and make updates to the sampling distribution. \citet{borsos2019online} restricted the sampling distribution to be a linear combination of distributions in a predefined set and used an online Newton step to make updates to the mixture weights. The above approaches estimate a stationary distribution, while the best distribution is changing with iterations and, therefore, is intrinsically dynamic. In addition to having suboptimal empirical performance, these papers provide theoretical results that only establish a regret relative to a fixed sampling distribution in hindsight. To address this problem, \citet{el2020adaptive} took a non-stationary approach where the most recent information for each client was kept. A decreasing stepsize sequence is required to establish a regret bound. 
In comparison, we establish a regret bound relative to a dynamic comparator---a sequence of sampling distributions---without imposing assumptions on the stepsize sequence, and this bound includes the dependence on the total variation term characterizing how strong the comparator is.

Our paper also contributes to the literature on online convex optimization. We cast the client sampling problem as an online learning problem~\citep{hazan2016introduction} and adapt algorithms from the dynamic online convex optimization literature to solve it. \citet{hall2015online, yang2016tracking, daniely2015strongly} proposed methods that achieve sublinear dynamic regret relative to dynamic comparator sequences. In particular, \citet{hall2015online} used a dynamic mirror descent algorithm to achieve sublinear dynamic regret with total variation characterizing the intrinsic difficulty of the environment. 
Compared with the problem settings in the above studies, there are two key new challenges that we need to address. First, we only have partial information---bandit feedback---instead of the full information about the loss functions. Second, the loss functions in our case are unbounded, which violates the common boundedness assumption in the online learning literature. To overcome the first difficulty, we construct an unbiased estimator of the loss function and its gradient, which are then used to make an update to the sampling distribution. We address the second challenge by first bounding the regret of our algorithm when the sampling distributions in the comparator sequence lie in a region of the simplex for which the loss is bounded, and subsequently analyze the additional regret introduced by projecting the elements of the comparator sequence to this region.

\section{Adaptive Client Sampling}
\label{sec:client-sampling-online}

We show how to cast the client sampling problem as an online learning task. Subsequently, we solve the online learning problem using the OSMD algorithm.

\subsection{Client Sampling as an Online Learning Problem}
\label{sec:recast-as-ol-problem}

Recall that at the beginning of the $t$-th communication round, the server uses a sampling distribution $p^t$ to choose a set of clients $S^t$, by sampling with replacement $K$ clients from $[M]$, to update the parameter vector $w^t$. For a chosen client $m \in S^t$, the local update is denoted as $g^t_m$. For example, the local update $g^t_m=\nabla \phi(w^t;\mathcal{D}_m)$ may be the full gradient; when mini-batch SGD/FedSGD is used, then $g^t_m=(1/B)\sum^B_{b=1}\nabla \phi ( w^t ; \xi^{t,b}_m )$, where $\xi^{t,b}_m \overset{i.i.d.}{\sim} \mathcal{D}_m$ and $B$ is the batch size; when FedAvg~\citep{mcmahan2017communication} is used, then $g^t_m = w^t - w^{t.B}_m$, where $w^{t,b}_m = w^{t,b-1}_m - \mu^t_l \nabla f \left( w^{t,b-1}_m ; \xi^{t,b-1}_m \right)$, $b =0,\ldots,B-1$, $w^{t,0}_m=w^t$, $\xi^{t,b}_m \overset{i.i.d.}{\sim} \mathcal{D}_m$, and $\mu^t_l$ is the local stepsize at $t$-th communication round.

The randomness comes from two sampling processes. The first sampling happens on clients level, and the second sampling happens locally when computing local updates.
Client sampling is dealing with the first randomness.
Since the two sampling process are independent, we may treat $g^t_m$ as deterministic in this section to ease the understanding. We will include the second randomness when analyzing regret and specific optimization algorithms in following sections.

We define the aggregated oracle update at the $t$-th communication round as
\begin{equation*}
J^t = \sum^M_{m=1} \lambda_m g^t_m.
\end{equation*}
The oracle update $J^t$ is constructed only for theoretical purposes and is not computed in practice. The stochastic estimate $g^t$, defined in~\eqref{eq:grad-est}, is an unbiased estimate of $J^t$, that is, $\mathbb{E}_{S^t}\left[ g^t \right]=J^t$. 
The variance of $g^t$ is
\begin{equation}
\label{eq:var-grad-est}
\begin{aligned}
\mathbb{V}_{S^t}[g^t]
= \frac{1}{K} \left( \sum^M_{m=1} \frac{\lambda_m^2 \Vert g^t_m \Vert^2_2}{p^t_m} - \left\Vert J^t \right\Vert^2_2 \right).
\end{aligned}
\end{equation}
Our goal is to design the sampling distribution $p^t$, used to sample $S^t$, to minimize the variance in \eqref{eq:var-grad-est}. In doing so, we can ignore the second term, as it is independent of $p^t$. 
Minimizing variance is our goal in designing the sampling distribution because we require $g^t$ to be an unbiased estimate of $J^t$. Allowing $g^t$ to be biased, as in the biased client selection literature~\citep{cho2020client,qu2022context,ribero2020communication}, may render minimizing variance ineffective. Our focus is on unbiased client selection, leaving biased client selection for future research.

Let $a^t_m=\lambda^2_m \Vert g^t_m \Vert^2_2$.
For any sampling distribution $q = (q_1,\dots,q_M)^{\top}$, the \emph{variance reduction loss}\footnote{The variance reduction loss $l_t(\cdot)$ should be distinguished from the training loss $\phi(\cdot)$. While the former is always convex, $\phi(\cdot)$ can be non-convex.} is defined as
\begin{equation}
\label{eq:var-reduc-loss}
l_t(q) = \frac{1}{K} \sum^M_{m=1} \frac{a^t_m}{q_m}.
\end{equation}
Then for $S^t$ sampled via $q$, we have
\begin{equation*}
\mathbb{V}_{S^t}[g^t] = l_t(q) - \frac{1}{K} \left\Vert J^t \right\Vert^2_2.
\end{equation*}
Given a sequence of sampling distributions $q^{1:T}$, the cumulative variance reduction loss is defined as $L(q^{1:T}) \coloneqq \sum^T_{t=1} l_t(q^t)$. When the choice of $q^{1:T}$ is random, the expected cumulative variance reduction loss is defined as $\bar{L} (q^{1:T})  \coloneqq \mathbb{E}\left[ L(q^{1:T}) \right]$.

The variance reduction loss appears in the bound on the sub-optimality of a stochastic optimization algorithm. As a motivating example, suppose $F(\cdot)$ in \eqref{eq:SingleFL-obj} is $\nu$-strongly convex. Furthermore, suppose the local update $g^t_m=\nabla \phi(w^t;\mathcal{D}_m)$ is the full gradient of the local loss and the global update is made by SGD with stepsize $\mu^t=2/(\nu t)$. Theorem 3 of~\cite{salehi2017stochastic} then states that for any $T\geq 1$:
\begin{equation}
\label{eq:thm3-salehi2017stochastic}
\mathbb{E} \left[  F \left( \frac{2}{T(T+1)} \sum^T_{t=1}t \cdot w^t \right) \right] - F(w^{\star}) \leq \frac{2}{\nu T (T+1)} \bar{L}(p^{1:T}),
\end{equation}
where $w^{\star}$ is the minimizer of the objective in~\eqref{eq:SingleFL-obj}. Therefore, by choosing the sequence of sampling distributions $p^{1:T}$ to make the $\bar{L}(p^{1:T})$ small, one can achieve faster convergence. This observation holds in other stochastic optimization problems as well.
We develop an algorithm that creates a sequence of sampling distributions $p^{1:T}$ to minimize $\bar{L}(p^{1:T})$ using only the norm of local updates, and without imposing assumptions on the loss functions or how the local and global updates are made. 
As a result, the proposed sampling algorithm is agnostic to both optimization algorithms and optimization problems.
In Section~\ref{sec:convg-analysis-batchSGD}, we show how our sampling method improves over uniform sampling by providing tighter upper bounds on mini-batch SGD and FedAvg with non-convex objective $\phi(\cdot)$.

Suppose that at the beginning of the $t$-th communication round we know all $\{ a^t_m \}^M_{m=1}$. Then the optimal sampling distribution 
\[
p^t_{\star} = (p^t_{\star,1}, \dots, p^t_{\star,M})^{\top} = \arg\min_{p \in \mathcal{P}_{M-1}} l_t(p)
\]
is obtained as $p^t_{\star,m} = \sqrt{a^t_m} / (\sum^M_{m=1}\sqrt{a^t_m})$. Computing the distribution $p^t_{\star}$ is impractical as it requires local updates of all clients, which eradicates the need for client sampling. From the form of $p^t_{\star}$, we observe that clients with a large $a^t_m$ are  more ``important'' and should have a higher probability of being selected. Since we do not know $\{ a^t_m \}^M_{m=1}$, we will need to explore the environment to learn about the importance of clients before we can exploit the best strategy. Finally, we note that the relative importance of clients will change over time, which makes the environment dynamic and challenging.

Based on the above discussion, we cast the problem of creating a sequence of sampling distributions as an online learning task with bandit feedback, where a game is played between the server and environment. Let $p^1$ be the initial sampling distribution. At the beginning of iteration $t$, the server samples with replacement $K$ clients from $[M]$, denoted by $S^t$, using $p^t$. The environment reveals $\{a^t_m\}_{m \in S^t}$ to the server, where $a^t_m=\lambda^2_m \Vert g^t_m \Vert^2_2$. The environment also computes $l_t(p^t)$; however, this loss is not revealed to the server. The server then updates $p^{t+1}$ based on the feedback $\{ \{ a^u_m \}_{m \in S^u} \}^t_{u=1}$ and sampling distributions $\{ p^u \}^t_{u=1}$. Note that in this game, the server only gets information about the chosen clients and, based on this partial information, or bandit feedback, needs to update the sampling distribution. On the other hand, we would like to be competitive with an oracle that can calculate the cumulative variance reduction loss. We will design $p^t$ in a way that is agnostic to the generation mechanism of $\{a^t\}_{t\geq 1}$, and will treat the environment as deterministic, with randomness coming only from $\{S^t\}_{t\geq1}$ when designing $p^t$. We describe an OSMD-based approach to solve this online learning problem.

\subsection{OSMD Sampler}
\label{sec:osmd-sampler}

Note that the variance-reduction loss function $l_t$ is a convex function on $\mathcal{P}_{M-1}$ and
\begin{equation*}
\nabla l_t (q) = - \frac{1}{K} \left( \frac{ a^t_1 }{ (q_1 )^2},\dots,\frac{ a^t_M }{ (q_M )^2} \right)^{\top} \in \mathbb{R}^M \quad \text{for all}\quad q=(q_1,\dots,q_M)^{\top} \in \mathbb{R}^M_{++}.
\end{equation*}
Since we do not observe $a^t$, we cannot compute $l_t (\cdot)$ or $\nabla l_t (\cdot)$. Instead, we can construct unbiased estimates of them. For any $q \in \mathcal{P}_{M-1}$, let $\hat{l}_t (q ; p^t )$ be an estimate of $l_t (q)$ defined as 
\begin{equation}
\label{eq:est-loss}
\hat{l}_t (q ; p^t ) = \frac{1}{K^2} \sum^M_{m=1} \frac{ a^t_m }{ q_m p^t_m } \mathcal{N} \left\{ m \in S^t \right\},
\end{equation}
and $\nabla \hat{l}_t (q ; p^t ) \in \mathbb{R}^M$ has the $m$-th entry defined as 
\begin{equation}
\label{eq:est-grad}
\left[ \nabla \hat{l}_t (q ; p^t ) \right]_m = - \frac{1}{K^2} \cdot \frac{ a^t_m }{ q^2_m p^t_m } \mathcal{N} \left\{ m \in S^t \right\}.
\end{equation}
The set $S^t$ is sampled with replacement from $[M]$ using $p^t$ and $\mathcal{N} \left\{ m \in S^t \right\}$ denotes the number of times that a client $m$ is chosen in $S^t$. Thus, $0 \leq \mathcal{N} \left\{ m \in S^t \right\} \leq K$.
Given $q$ and $p^t$, $\hat{l}_t (q;p^t)$ and $\nabla \hat{l}_t (q;p^t)$ are random variables in $\mathbb{R}$ and $\mathbb{R}^M$ that satisfy
\begin{align*}
\mathbb{E}_{S^t} \left[ \hat{l}_t (q ; p^t) \mid p^t \right] = l_t (q), \qquad
\mathbb{E}_{S^t} \left[ \nabla \hat{l}_t (q ; p^t) \mid p^t \right] = \nabla l_t (q).
\end{align*}
When $S^t$ and $p^t \in \mathbb{R}^M_{++}$ are given, $\hat{l}_t (q;p^t)$ is a convex function with respect to $q$ on $\mathbb{R}^{M}_{++}$ and satisfies $\hat{l}_t (q;p^t)-\hat{l}_t (q^{\prime};p^t) \leq \langle \nabla \hat{l}_t (q;p^t),q - q^{\prime}  \rangle$, for $q,q^{\prime} \in \mathbb{R}^{M}_{++}$. The constructed estimates $\hat{l}_t (q ; p^t) $ and $\nabla \hat{l}_t (q ; p^t )$ are crucial for designing updates to the sampling distribution.

\begin{algorithm}[t]
\caption{OSMD Sampler}
\label{alg:OSMD-sampler}
\begin{algorithmic}[1]
\STATE {\bfseries Input:} Learning rate $\eta$, parameter $\alpha \in (0,1]$, 
and number of iterations $T$.
\STATE {\bfseries Output:} $\hat{p}^{1:T}$.
\STATE {\bfseries Initialize:} $\hat{p}^1=\punif$.
\FOR{$t=1,2,\dots,T-1$}{
\STATE Sample $S^t$ by $\hat{p}^t$. \label{alg:sample-subset}
\STATE Compute $\nabla \hat{l}_t (\hat{p}^t ; \hat{p}^t)$ via \eqref{eq:est-grad}.
\STATE
$\displaystyle \hat{p}^{t+1} = \arg\min_{ p \in \mathcal{A} } \, \eta \langle p,\nabla \hat{l}_t (\hat{p}^t ; \hat{p}^t)  \rangle + D_{\Phi} \left( p \,\Vert\, \hat{p}^t  \right)$. \label{alg:solve-mirror-descent}
}
\ENDFOR
\end{algorithmic}
\end{algorithm}

OSMD Sampler is an online stochastic mirror descent algorithm
for updating the sampling distribution, detailed in Algorithm~\ref{alg:OSMD-sampler}. The sampling distribution is restricted to lie in the space $\mathcal{A}=\mathcal{P}_{M-1}\cap[\alpha/M,\infty)^M$, $\alpha \in (0,1]$, to prevent the server from assigning too small probabilities to devices. 
The learning rates $\{\eta_t\}_{t \geq 1}$ are positive\footnote{We use the term \emph {learning rate} when discussing an online algorithm that learns a sampling distribution, while the term \emph{stepsize} is used in the context of an optimization algorithm.}.
$\Phi(x)=\sum^M_{m=1} x_m \log x_m - \sum^M_{m=1}x_m$, $x=(x_1,\dots,x_M)^{\top} \in \mathcal{D}=\mathbb{R}^M_{+}$, with $0\log0$ defined as $0$, is the unnormalized negative entropy.
The Bregman divergence between any $x,y \in \mathcal{D}$ with respect to the function $\Phi$ is given as $D_{\Phi} \left(x \,\Vert\, y\right) = \Phi (x) - \Phi (y) - \langle \nabla \Phi (y), x - y \rangle$.
Line~\ref{alg:solve-mirror-descent} of Algorithm~\ref{alg:OSMD-sampler} provides an update to the sampling distribution using the mirror descent update. The available feedback is used to construct an estimate of the loss, while the Bregman divergence between the current and next sampling distribution is used as a regularizer, ensuring that the updated sampling distribution does not change too much. The update only uses the most recent information, while forgetting the history, which results in nonstationarity of the sequence of sampling distributions. 

An efficient algorithm to solve the mirror descent update in Line~\ref{alg:solve-mirror-descent} of Algorithm~\ref{alg:OSMD-sampler} is shown in Algorithm~\ref{alg:OSMD-sampler-solver}, justified by Proposition~\ref{prop:OSMD-solver} in Appendix~\ref{sec:proof-prop-1}.
The main cost comes from sorting the sequence $\{ \tilde{p}^{t+1}_m \}^M_{m=1}$, which can be done with the computational complexity of $O(M\log M)$. However, note that we only update a few entries of $\hat{p}^t$ to get $\tilde{p}^{t+1}$ and $\hat{p}^t$ is sorted. Therefore, most entries of $\tilde{p}^{t+1}$ are also sorted. Using this observation, we can usually achieve a much faster running time, for example, by using an adaptive sorting algorithm~\citep{estivill1992survey}.




\begin{algorithm}[t]
\caption{Solver of Step~\ref{alg:solve-mirror-descent} of Algorithm~\ref{alg:OSMD-sampler}}
\label{alg:OSMD-sampler-solver}
\begin{algorithmic}[1]
\STATE {\bfseries Input:} $\hat{p}^t$, $S^t$, $\{ a^t_m \}_{m \in S^t}$, and $\mathcal{A}=\mathcal{P}_{M-1}\cap[\alpha/M,\infty)^M$.
\STATE {\bfseries Output:} $\hat{p}^{t+1}$.
\STATE Let $\tilde{p}^{t+1}_m = p^t_m \exp \left\{ \mathcal{N}\left\{ m \in S^t \right\} \eta_t a^t_m / (K^2 (p^t_m)^3)  \right\}$ for $m \in [M]$.
\STATE Sort $\{ \tilde{p}^{t+1}_m \}^M_{m=1}$ in a non-decreasing order: $\tilde{p}^{t+1}_{\pi(1)}\leq\tilde{p}^{t+1}_{\pi(2)}\leq\dots\leq\tilde{p}^{t+1}_{\pi(M)}$.
\STATE Let $v_m=\tilde{p}^{t+1}_{\pi(m)} \left( 1 -  \frac{m-1}{M} \alpha \right)$ for $m \in [M]$.
\STATE Let $u_m=\frac{\alpha}{M} \sum^M_{j=m} \tilde{p}^{t+1}_{\pi(j)}$ for $m \in [M]$.
\STATE Find the smallest $m$ such that $v_m > u_m$, denoted as $m^t_{\star}$.
\STATE Let $\hat{p}^{t+1}_m =
\begin{cases}
\alpha / M & \text{if } \pi(m) < m^t_{\star} \\
\left((1 - ((m^t_{\star}-1) / M) \alpha ) \tilde{p}^{t+1}_{m}\right)/\left(\sum^M_{j=m^t_{\star}} \tilde{p}^{t+1}_{\pi(j)}\right) & \text{otherwise}.
\end{cases}
$
\end{algorithmic}
\end{algorithm}

\section{Application of OSMD Sampler on Federated Optimization Algorithms}
\label{sec:convg-analysis-batchSGD}

We illustrate how OSMD Sampler can be used to provably improve the convergence rates of federated optimization algorithms by reducing the heterogeneity. We choose two algorithms that are most commonly used in federated learning as our illustrative examples, namely the SGD mini-batch and FedAvg~\citep{mcmahan2017communication}. We use these two algorithms as motivational examples to show how adaptive sampling improves the convergence guarantees of optimization algorithms. However, the analysis here could be generalized to other optimization algorithms as well.

To simplify the notation, we denote $F_m(w)\coloneqq \phi \left( w  ;  \mathcal{D}_m \right)$ and let $\lambda_m=1/M$ for all $m \in [M]$ in problem \eqref{eq:SingleFL-obj}. 
We assume that the client objectives are differentiable and $L$-smooth functions. 
\begin{assump}
\label{assump:diff-smooth}
For all $m \in [M]$, $F_m(\cdot)$ is differentiable and $L$-smooth, that is,
\begin{equation*}
\left\Vert \nabla F_m(x) - \nabla F_m(y) \right\Vert_2 \leq L \Vert x - y \Vert_2, \quad \text{ for all } x,y \in \mathbb{R}^d.
\end{equation*}
\end{assump}
Note that we allow $F_m(\cdot)$ to be non-convex.
We also assume that the objective function $F(\cdot)$ is lower-bounded.
\begin{assump}
\label{assump:lower-bounded}
We assume that $\inf_w F(w) > -\infty$. We then denote $F^{\star} \coloneqq \inf_{w } F(w)$.
\end{assump}
In addition, we make the following assumption about the local stochastic gradient.
\begin{assump}
\label{assump:local-sg}
We assume that
\begin{equation*}
\mathbb{E}_{\xi \sim \mathcal{D}_m} \left[ \nabla \phi ( w ; \xi )  \right] = \nabla F_m (w) \quad\text{and} \quad \mathbb{E}_{\xi \sim \mathcal{D}_m} \left[ \left\Vert \nabla \phi ( w ; \xi ) - \nabla F_m (w) \right\Vert^2_2  \right] \leq \sigma^2
\end{equation*}
for all $w$ and $m \in [M]$; besides, $\Vert \nabla \phi ( w ; \xi ) \Vert_2 \leq G$ for all $w$ and $\xi$.
\end{assump}

{
While bounded gradient variance is often assumed in federated learning literature~\citep{patel2022towards}, bounded gradient norm is less common. We use this assumption to simplify regret analysis, making the loss in~\eqref{eq:var-reduc-loss} bounded. Removing it is feasible but complex, diverging from the main focus of this paper. From a practical viewpoint, many loss functions, like logistic regression loss, naturally satisfy this assumption. For others, one can project the gradient into a bounded norm subspace. If any minimizer $w^{\star}$ has a bounded gradient norm $\Vert \nabla \phi ( w^{\star} ; \xi ) \Vert_2$ and is within this subspace, the projection will not increase the distance to the minimizer. However, while the projection step can provide a slightly stronger theoretical guarantee, it brings few practical benefits and makes the algorithm harder to follow. Therefore, we adopt a stronger assumption to simplify the presentation.
}

We start our analysis by building the connection between the heterogeneity and client sampling.
We first introduce quantities that characterize the heterogeneity of the optimization problem. Specifically, heterogeneity characterizes how the objective functions of different clients differ from each other. 
In a federated learning problem, heterogeneity can be large and it
is important to understand its effect on the convergence of algorithms.
Let
\begin{equation}
\label{eq:def-hetero}
\zeta^2_{\text{unif}} \coloneqq \sup_{w } \frac{1}{M} \sum^M_{m=1} \Vert \nabla F_m(w) - \nabla F(w) \Vert^2_2 = \sup_{w } \left\{ \frac{1}{M} \sum^M_{m=1} \Vert \nabla F_m(w)  \Vert^2_2 - \Vert \nabla F(w) \Vert^2_2 \right\}.
\end{equation}
The quantity $\zeta_{\text{unif}}$ has been commonly used to quantify the first-order heterogeneity in the literature~\citep{karimireddy2020mime,karimireddy2020scaffold}.
To understand the relationship between heterogeneity and client sampling, let $\tilde{m}$ be a random index drawn from $[M]$, with $\mathbb{P}\{\tilde{m}=m\}=p_m$ for all $m\in [M]$. Then a natural unbiased estimator of $\nabla F(w)$ is $\nabla F_{\tilde{m}}(w) / (M \tilde{p}_m)$.
We define
\begin{equation*}
V(p,w) \coloneqq \mathbb{E}_{\tilde{m}} \left[ \left\Vert \frac{1}{M p_{\tilde{m}}} \nabla F_{\tilde{m}}(w) - \nabla F(w)  \right\Vert^2_2 \right],
\end{equation*}
to be the variance of the estimator at parameter $w$, where $\mathbb{E}_{\tilde{m}}[\cdot]$ denotes the expectation taken with respect to the random index $\tilde{m}$.
Note that we have
\begin{equation}
\label{eq:Vpw}
V(p,w) = \frac{1}{M^2} \sum^M_{m=1} \frac{1}{p_m} \left\Vert \nabla F_m(w) \right\Vert^2_2 - \left\Vert \nabla F(w) \right\Vert^2_2.
\end{equation}
Thus, it is clear that $\zeta^2_{\text{unif}} = \sup_w V(\punif,w)$. In other words, the common definition of heterogeneity can be viewed as the worst-case variance of uniform client sampling.

When we use adaptive sampling to do client sampling, there are two sources of flexibility that allow us to reduce the heterogeneity: (i) we can use non-uniform sampling that may depend on parameter $w$; (ii) we allow the sampling distribution to change over iterations. To reflect the consequential effect, we introduce a new concept termed \emph{dynamic heterogeneity}. Let $\text{TV}(q^{1:T})=\sum^{T-1}_{t=1}\Vert q^{t+1} - q^t \Vert_1$ be the total variation of any sequence of sampling distributions $q^{1:T} \in \mathcal{P}^T_{M-1}$. The dynamic heterogeneity is defined as
\begin{align*}
& \zeta^2_T (\alpha, \beta) =\frac{1}{T}  \sup_{w^1} \min_{p^1 \in \mathcal{A}} \cdots \sup_{w^T} \min_{p^T \in \mathcal{A}} \,  \sum^T_{t=1} V(p^t,w^t) \quad \text{subject to } \text{TV} \left( p^{1:T} \right) \leq \beta, 
\end{align*}
where $V(p,w)$ is defined in~\eqref{eq:Vpw} and $\beta \geq 0$ is the total variation budget. The dynamic heterogeneity $\zeta^2_T (\alpha, \beta)$ can be regarded as the worst-case variance of dynamic samplings in $\mathcal{A}^T$ with the total variation budget $\beta$. To see how $\zeta^2_T (\alpha, \beta)$ improves over $\zeta^2_{\text{unif}}$, let
\begin{equation}
\label{eq:fix-hetero}
\zeta^2_{\text{fix}}(\alpha) = \min_{p \in \mathcal{A}} \sup_{w} V(p, w).
\end{equation}
The quantity $\zeta^2_{\text{fix}}(\alpha)$ can be regarded as the minimum heterogeneity by using the best fixed sampling distribution in $\mathcal{A}$ that does not depend on parameter $w$. Let $p_f$ be the solution of $p$ to the min-max problem~\eqref{eq:fix-hetero}, that is, $\sup_w V(p_f, w)=\zeta^2_{\text{fix}}(\alpha)$. Since $\punif \in \mathcal{A}$ for all $\alpha \geq 0$, it is easy to see that
\begin{equation*}
\zeta^2_{\text{fix}}(\alpha) = \min_{p \in \mathcal{A}} \sup_{w} V(p, w) \leq \sup_{w} V(\punif, w) = \zeta^2_{\text{unif}}.
\end{equation*}
Note that $\zeta^2_T (\alpha, \beta)$ is a non-increasing function of $\beta$ with any given $\alpha$. Thus, we have
\begin{equation}
\label{eq:dyn-hetero-smaller-hetero-fix}
\zeta^2_T (\alpha, \beta) \leq \zeta^2_{\text{fix}}(\alpha) \leq \zeta^2_{\text{unif}} \quad \forall\, 0 \leq \alpha \leq 1, \, \beta \geq 0.
\end{equation}
See the proof in Appendix~\ref{sec:proof-dyn-hetero-smaller-hetero-fix}. Note that the above inequality also implies that dynamic sampling distribution may potentially improve over a fixed sampling distribution. As we will see shortly, when $\zeta^2_{\text{unif}} > \zeta^2_{\text{fix}}(\alpha)$, OSMD Sampler can always improve over uniform sampling asymptotically.

When $\beta \geq 2(T-1)$ and $\alpha$ is small enough such that 
\begin{equation*}
p^{\star}_m(w) \coloneqq \frac{ \Vert \nabla F_m (w) \Vert_2 }{ \sum^M_{m^{\prime}=1} \Vert \nabla F_{m^{\prime}} (w) \Vert_2 }  \geq \frac{\alpha}{M}  \quad \text{for all } w \text{ and } m \in [M],
\end{equation*}
we have
\begin{equation*}
\zeta^2_T (\alpha, \beta) = \sup_w \min_{p \in \mathcal{A}} V(w,p) = \sup_{w } \left\{ \left( \frac{1}{M} \sum^M_{m=1}  \Vert \nabla F_m(w)  \Vert_2 \right)^2 - \Vert \nabla F(w) \Vert^2_2 \right\} \overset{\Delta}{=} \zeta^2_{\min},
\end{equation*}
which is the smallest heterogeneity possibly achievable.



\subsection{Convergence Analysis of Mini-batch SGD with OSMD Sampler}
\label{sec:convg-analysis-osmd-batch-sgd}

We introduce the convergence analysis of mini-batch SGD with OSMD Sampler. The detailed algorithm is given in Algorithm~\ref{alg:min-batch-osmd}. Compared to the classical mini-batch SGD, the key ingredients of Algorithm~\ref{alg:min-batch-osmd} are Line~\ref{step:samp-update}, where the server updates the sampling distribution by OSMD Sampler, and Line~\ref{step:sample-nonuniform}, where the server samples clients from a non-uniform sampling distribution. In~\eqref{eq:global-gradient}, we use a weighted average to compute the global gradient.

\begin{algorithm}[t]
\caption{Mini-batch SGD with OSMD Sampler}
\label{alg:min-batch-osmd}
\begin{algorithmic}[1]
\STATE {\bfseries Input:} Number of communication rounds $T$, number of clients chosen in each round $K$, local batch size $B$, initial model parameter $w^1$, stepsizes $\{ \mu^t \}^{T}_{t=1}$, learning rate $\eta$ and parameter $\alpha \in (0,1]$.
\STATE {\bfseries Output:} The final model parameter $w^R$.
\STATE {\bfseries Initialize:} $\hat{p}^1=\punif$.
\FOR{$t=1,\ldots,T$}{
\STATE Sample $S^t$ with replacement from $[M]$ with probability $\hat{p}^t$, such that $\vert S^t \vert = K$. \label{step:sample-nonuniform} \;
\FOR{$m \in S^t$}{
\STATE Download the current model parameter $w^t$.
\STATE Locally sample a mini-batch $\mathcal{B}^t_m=\{ \xi^{t,1}_{m},\ldots,\xi^{t,B}_{m} \}$ i.i.d. uniformly random from $[n_m]$, where $|\mathcal{B}^t_m|=B$. \;
\STATE Locally compute and upload $g^t_m=(1/B)\sum^B_{b=1}\nabla \phi ( w^t ; \xi^{t,b}_{m} )$ to the server.
}
\ENDFOR
\STATE Server computes $a^t_m=\lambda^2_m \Vert g^t_m \Vert^2 $ for $m \in S^t$ and
\begin{equation}
\label{eq:global-gradient}
g^t = \frac{1}{K} \sum_{m \in S^t} \frac{\lambda_m}{\hat{p}^t_m} g^t_m.
\end{equation}\;
\STATE Server makes update of the model parameter $w^{t+1} \leftarrow w^t - \mu^t g^t$. \;
\STATE Server obtains updated sampling distribution $\hat{p}^{t+1}$ by Algorithm~\ref{alg:OSMD-sampler}. \label{step:samp-update}
}
\ENDFOR
\end{algorithmic}
\end{algorithm}

Recall that $B$ is the local batch size in Algorithm~\ref{alg:min-batch-osmd} and $K=\vert S^t \vert$. Let $D^F \coloneqq F(w^1)-F^{\star}$.
We then have the following convergence guarantee for Algorithm~\ref{alg:min-batch-osmd}.
\begin{theorem}
\label{thm:convg-thm-mini-batch}
Assume Assumption~\ref{assump:diff-smooth}---\ref{assump:local-sg} holds.
Let $\{w^1,\ldots,w^T \}$ be the sequence of iterates generated by Algorithm~\ref{alg:min-batch-osmd} and let $w^R$ denote an element of that sequence chosen uniformly at random. Let 
\begin{equation}
\label{eq:lr-mini-batch-sgd}
\eta = \frac{ K \alpha^3 }{ M G^2 } \sqrt{ \frac{ 2\log M + 4 \beta \log (M / \alpha) }{ T } },
\end{equation}
and $\mu_t \equiv \mu$ for all $t \in [T]$, where
\begin{equation*}
\mu=\min \left\{\frac{1}{L}, \frac{1}{\sigma}\sqrt{ \frac{ D^F K B \alpha  }{ L T }}, \frac{1}{\zeta_T(\alpha,\beta)} \sqrt{  \frac{D^F K}{L T} }, \frac{ \sqrt{D^F K} \alpha^\frac{3}{2} }{\sqrt{L M} T^{\frac{1}{4}}  G \left( \frac{1}{2}\log M + \beta \log \left( M / \alpha \right) \right)^{\frac{1}{4}} } \right\},
\end{equation*}
we then have
\begin{equation}
\label{eq:convg-rate-minibatchSGD}
\begin{aligned}
& \quad \mathbb{E} \left[ \left\Vert \nabla F \left( w^R \right) \right\Vert^2 \right] \\
& \lesssim \frac{D^F L}{T} + \frac{ \sigma \sqrt{D^F L} }{ \sqrt{TKB \alpha } } + \frac{\zeta_T(\alpha,\beta) \sqrt{D^F L }}{\sqrt{T K}} + \frac{ \sqrt{ D^F L} M^{\frac{1}{2}} G }{ T^{\frac{3}{4}} K^{\frac{1}{2}} \alpha^{\frac{3}{2}} } \left( \frac{1}{2}\log M + \beta \log \left( M / \alpha \right)\right)^{\frac{1}{4}}.
\end{aligned}
\end{equation}
\end{theorem}
{
\begin{proof}
The key proof technique is the construction of a ghost subset $\tilde{S}^t$ in each round that is sampled from a carefully designed comparator sampling distribution. In this way, the regret can be compared with the difference between the convergence rate under OSMD Sampler and the convergence rate under the comparator sampling distribution. Note that $\tilde{S}^t$ is only constructed for theoretical analysis, and does not need to be actually computed in practice. The rest of the proof then follows the regret analysis as in Theorem~\ref{thm:upp-bd-general}. See detailed proof in Appendix~\ref{sec:proof-convg-thm-mini-batch}.
\end{proof}
}

To see how the convergence rate in Theorem~\ref{thm:convg-thm-mini-batch} is better than the rate of uniform sampling, recall that the convergence rate of mini-batch SGD under uniform sampling~\citep{ghadimi2013stochastic} is
\begin{equation*}
R^{\text{MB}}_{\text{unif}} \coloneqq \frac{D^F L}{T} + \frac{ \sigma \sqrt{D^F L} }{ \sqrt{TKB } } + \frac{\zeta_{\text{unif}} \sqrt{D^F L }}{\sqrt{T K}}.
\end{equation*}
Denote the right hand side of~\eqref{eq:convg-rate-minibatchSGD} as $R^{\text{MB}}_{\text{osmd}}$, then to have $R^{\text{MB}}_{\text{osmd}} \lesssim R^{\text{MB}}_{\text{unif}}$, we only need that
\begin{equation*}
\frac{ \sqrt{ D^F L} M^{\frac{1}{2}} G }{ T^{\frac{3}{4}} K^{\frac{1}{2}} \alpha^{\frac{3}{2}} } \left( \frac{1}{2}\log M + \beta \log \left( M / \alpha \right)\right)^{\frac{1}{4}} \lesssim \frac{ \left( \zeta_{\text{unif}} - \zeta_T(\alpha,\beta) \right) \sqrt{D^F L }}{\sqrt{T K}},
\end{equation*}
which is equivalent to a requirement that
\begin{equation*}
\frac{  M^{\frac{1}{2}} G }{ \alpha^{\frac{3}{2}} } \left( \frac{ \frac{1}{2}\log M + \beta \log \left( M / \alpha \right) }{ T } \right)^{\frac{1}{4}} \lesssim \zeta_{\text{unif}} - \zeta_T(\alpha,\beta).
\end{equation*}
By treating all the other quantities except for $T$, $\zeta_{\text{unif}}$, and $\zeta_T (\alpha,\beta)$ as constants and setting $\beta = o(T)$, we have the left hand side of the above inequality as $o(1)$. On the other hand, note that $\zeta_T(\alpha,\beta) \leq \zeta_{\text{fix}} \leq \zeta_{\text{unif}}$ for any $\beta \geq 0$ and $0 < \alpha \leq 1$, where $\zeta_{\text{fix}}$ is defined in~\eqref{eq:fix-hetero}, so whenever $\zeta_{\text{unif}}>\zeta_{\text{fix}}$, we have $\zeta_{\text{unif}} - \zeta_T(\alpha,\beta) \geq \zeta_{\text{unif}} - \zeta_{\text{fix}} = \Omega(1)$.
In conclusion, when $\zeta_{\text{unif}}>\zeta_{\text{fix}}$ and by setting $\beta=o(T)$, the OSMD Sampler can achieve a better convergence rate than uniform sampling for mini-batch SGD.

\subsection{Convergence Analysis of FedAvg with OSMD Sampler}
\label{sec:convg-fedavg}

\begin{algorithm}[t]
\caption{FedAvg with OSMD Sampler}
\label{alg:fed-avg-osmd}
\begin{algorithmic}[1]
\STATE {\bfseries Input:} Communication rounds $T$, clients per round $K$, local steps $B$, initial model parameter $w^1$, global stepsizes $\{ \mu^t \}^{T}_{t=1}$, local stepsizes $\{ \mu^t_l \}^{T}_{t=1}$, learning rate $\eta$, and parameter $\alpha \in (0,1]$.
\STATE {\bfseries Output:} The final model parameter $w^R$.
\STATE {\bfseries Initialize:} $\hat{p}^1=\punif$.
\FOR{$t=1,\ldots,T$}{
\STATE Sample $S^t$ with replacement from $[M]$ with probability $\hat{p}^t$, such that $\vert S^t \vert = K$. \label{step:sample-nonuniform-fedavg} \;
\FOR{$m \in S^t$}{
\STATE Download the current model parameter $w^t$ and let $w^{t,0}_m=w^t$.
\FOR{$b=0,1,\ldots,B-1$}{
\STATE Sample $\xi^{t,b}_{m}$ from $[n_m]$ uniformly random.
\STATE Compute $w^{t,b+1}_m = w^{t,b}_m - \mu^t_l \nabla \phi ( w^{t,b}_m ; \xi^{t,b}_{m} )$.
}
\ENDFOR
\STATE Locally compute $g^t_m= w^t - w^{t,B}_m$ and upload it to the server.
}
\ENDFOR
\STATE Server computes 
\begin{equation*}
a^t_m=\frac{\lambda^2_m \Vert g^t_m \Vert^2_2}{(\mu^t_l)^2 B} = \frac{\lambda^2_m}{B} \left\Vert \sum^{B-1}_{b=0} \nabla \phi ( w^{t,b}_m ; \xi^{t,b}_{m} ) \right\Vert^2_2
\end{equation*}
for $m \in S^t$ and let
\begin{equation}
\label{eq:global-model-update}
w^{t+1} = w^t -  \frac{\mu^t}{K} \sum_{m \in S^t} \frac{\lambda_m}{\hat{p}^t_m} g^t_m.
\end{equation}\;
\STATE Server obtains updated sampling distribution $\hat{p}^{t+1}$ by Algorithm~\ref{alg:OSMD-sampler}. \label{step:samp-update-fedavg}
}
\ENDFOR
\end{algorithmic}
\end{algorithm}

We introduce the convergence guarantee for FedAvg with OSMD Sampler. The detailed algorithm is given in Algorithm~\ref{alg:fed-avg-osmd}.
Compared to FedAvg in~\cite{mcmahan2017communication}, the differences between Algorithm~\ref{alg:fed-avg-osmd} are Line~\ref{step:samp-update-fedavg}, where the server updates the sampling distribution by OSMD Sampler, and Line~\ref{step:sample-nonuniform-fedavg}, where the server samples clients from a non-uniform sampling distribution. In addition, in~\eqref{eq:global-model-update}, we use a weighted average to update the global model parameter.
Besides, note that when defining $a^t_m$, we rescale $\Vert g^t_m \Vert^2_2$ by $(\mu^t_l)^2 B$, this is to ensure that $\mathbb{E}[\Vert g^t_m \Vert^2_2/(\mu^t_l)^2 B]=\Theta(1)$ as $\mu^t_l \to 0$ and $B \to \infty$.

We have the following result about FedAvg with OSMD Sampler (Algorithm~\ref{alg:fed-avg-osmd}).
\begin{theorem}
\label{thm:convg-thm-fedavg}
Recall that $B$ is the local batch size in Algorithm~\ref{alg:min-batch-osmd} and $K=\vert S^t \vert$. Let $D^F \coloneqq F(w^1)-F^{\star}$.
Assume Assumption~\ref{assump:diff-smooth}---\ref{assump:local-sg} holds.
Let $\{w^1,\ldots,w^T \}$ be the sequence of iterates generated by Algorithm~\ref{alg:fed-avg-osmd} and let $w^R$ denote an element of that sequence chosen uniformly at random.
Let 
\begin{equation}
\label{eq:lr-fedavg}
\eta = \frac{ K \alpha^3 }{ M B G^2 } \sqrt{ \frac{ 2 \log M + 4 \beta \log (M/\alpha) }{ T } },
\end{equation}
$\mu^t = \mu \geq 1$ and $\mu^t_l = \mu_l$ for all $t \in [T]$, where
\begin{align*}
\mu_l = \min \left\{ \frac{1}{4 \mu B L} \sqrt{\frac{1}{2+1/\alpha}},  \frac{(D^F)^{\frac{1}{3}}}{  \left( 4 + \frac{2}{\alpha}  \right)^{\frac{1}{3}} \mu B L^{\frac{2}{3}} \left( \zeta^2_{\text{unif}}  + \frac{\sigma^2}{2 B} \right)^{\frac{1}{3}} T^{\frac{1}{3}} }, \right.\\ 
\left. \frac{ \sqrt{ 2 D^F }  }{ \mu B \sqrt{L} \sqrt{ \frac{2 \zeta^2_T(\alpha,\beta) }{K} +  \frac{\sigma^2}{K B \alpha } + \sqrt{ \frac{ \frac{1}{2} \log M + \beta \log (M/\alpha) }{T} } } \sqrt{T} } \right\},
\end{align*}
we then have
\begin{equation}
\label{eq:convg-rate-fedavg-osmd}
\begin{aligned}
\mathbb{E}\left[ \left\Vert \nabla F(w^R) \right\Vert^2 \right] \lesssim \frac{D^F L \sqrt{2 + \frac{1}{ \alpha}}}{T} + \frac{ \left( 4 + \frac{2}{\alpha}  \right)^{\frac{1}{3}} (D^F L)^{\frac{2}{3}} \zeta^{\frac{2}{3}}_{\text{unif}}  }{ T^{\frac{2}{3}} } + \frac{ \left( 4 + \frac{2}{\alpha}  \right)^{\frac{1}{3}} (D^F L)^{\frac{2}{3}}  \sigma^{\frac{2}{3}}  }{ B ^{\frac{1}{3}} T^{\frac{2}{3}} } \\
+ \frac{ \sqrt{ D^F L }\zeta_T (\alpha, \beta) }{ \sqrt{T K} } + \frac{ \sqrt{ D^F L } \sigma  }{ \sqrt{ T K B \alpha } } + \frac{ \sqrt{D^F L } }{ \sqrt{T} } \left( \frac{ \frac{1}{2} \log M + \beta \log (M/\alpha) }{T} \right)^{\frac{1}{4}}.
\end{aligned}
\end{equation}
\end{theorem}
\begin{proof}
See proof in Appendix~\ref{sec:proof-thm-fedavg}. Similar to the proof of Theorem~\ref{thm:convg-thm-mini-batch}, the key technique is to construct a novel ghost subset sampled from the comparator sampling distribution. The rest of the proof then follows the regret analysis as in Theorem~\ref{thm:upp-bd-general}.
\end{proof}

To see how OSMD Sampler improves over uniform sampling, note that the convergence rate of FedAvg under uniform sampling ~\citep{karimireddy2020scaffold} is
\begin{equation*}
R^{\text{Avg}}_{\text{unif}} \coloneqq \frac{D^F L }{T} + \frac{  (D^F L)^{\frac{2}{3}} \zeta^{\frac{2}{3}}_{\text{unif}}  }{ T^{\frac{2}{3}} } + \frac{  (D^F L)^{\frac{2}{3}}  \sigma^{\frac{2}{3}}  }{ B^{\frac{1}{3}} T^{\frac{2}{3}} }+ \frac{ \sqrt{ D^F L }\zeta_{\text{unif}} }{ \sqrt{T K} } + \frac{ \sqrt{ D^F L } \sigma  }{ \sqrt{ T K B } }.
\end{equation*}
Denote the right hand side of~\eqref{eq:convg-rate-fedavg-osmd} as $R^{\text{Avg}}_{\text{osmd}}$.
By treating all the other quantities except for $T$, $\zeta_{\text{unif}}$ and $\zeta_T (\alpha,\beta)$ as constants, to have $R^{\text{Avg}}_{\text{osmd}} \lesssim R^{\text{Avg}}_{\text{unif}}$, we only need that
\begin{equation*}
\sqrt{K} \left( \frac{ \frac{1}{2} \log M + \beta \log (M/\alpha) }{T} \right)^{\frac{1}{4}} \lesssim  \zeta_{\text{unif}} - \zeta_T (\alpha, \beta) .
\end{equation*}
Similar to the argument in Section~\ref{sec:convg-analysis-osmd-batch-sgd}, by setting $\beta = o(T)$, we have the left hand side of the above inequality as $o(1)$.
On the other hand, when $\zeta_{\text{unif}}>\zeta_{\text{fix}}$, we have $\zeta_{\text{unif}} - \zeta_T(\alpha,\beta) \geq \zeta_{\text{unif}} - \zeta_{\text{fix}} = \Omega(1)$.
Thus, when $\zeta_{\text{unif}}>\zeta_{\text{fix}}$ and by setting $\beta=o(T)$, we have shown that OSMD Sampler can achieve a better convergence rate than uniform sampling for FedAvg.



\section{Regret Analysis of OSMD Sampler}
\label{sec:thm-osmd-sampler}

In this section, we provide the regret analysis that serves as a key component of the optimization analysis in Section~\ref{sec:convg-analysis-batchSGD}. 
We first describe the dynamic regret used to measure the performance of an online algorithm that generates a sequence of sampling distributions $\{\hat{p}\}_{t\geq1}$ in a non-stationary environment. Given any comparator sequence $q^{1:T} \in \mathcal{P}_{M-1}^T$, the dynamic regret is defined as 
\begin{equation}\label{eq:dynamic-regret}
\text{D-Regret}_T (q^{1:T}) = \bar{L} \left( \hat{p}^{1:T} \right) - \bar{L} \left( q^{1:T} \right).
\end{equation}
In contrast, the static regret measures the performance of an algorithm relative to the best fixed sampling distribution, that is, it restricts $q^1=\dots=q^T$ \citep{namkoong2017adaptive, salehi2017stochastic, borsos2018online, borsos2019online}. 
When using a fixed comparator $q^1=\dots=q^T=q$, we write the regret as $\text{D-Regret}_T(q)$.

Recall that the total variation of a comparator sequence $q^{1:T}$ is $\text{TV} \left( q^{1:T} \right) = \sum^{T-1}_{t=1} \Vert q^{t+1} - q^t \Vert_1$.
The total variation measures how variable a sequence is. The larger the total variation $\text{TV} ( q^{1:T} )$, the more variable $q^{1:T}$ is, and such a comparator sequence is harder to match. 

We also need the following quantities that quantify how far $q^t$ is from $\mathcal{A}$. Given $q^t \in \mathcal{P}_{M-1}$ and $\alpha \in (0,1]$, let
\begin{equation}
\begin{gathered}
\label{eq:def-psi-alpha}
\psi(q^t, \alpha) \coloneqq \sum^M_{m=1} \left( \frac{\alpha}{M} - q^t_m \right)  \mathds{1} \left\{ q^t_m < \frac{\alpha}{M} \right\},
\qquad
\omega(q^t, \alpha)  \coloneqq \frac{ \sum^M_{m=1} \left( \frac{\alpha}{M} - q^t_m \right)  \mathds{1} \left\{ q^t_m < \frac{\alpha}{M} \right\} }{ \sum^M_{m=1} \left( q^t_m - \frac{\alpha}{M} \right) \mathds{1} \left\{ q^t_m \geq \frac{\alpha}{M} \right\} }, \\
\phi(q^t, \alpha) \coloneqq \frac{ \omega(q^t, \alpha) }{ 1 - \omega(q^t, \alpha) \left( 1 - \frac{\alpha}{M} \right)  }.
\end{gathered}
\end{equation}
We will use these quantities to characterize the projection error in the following theorem, which is the main result of this section. 

\begin{theorem}
\label{thm:upp-bd-general}
Let $\eta_t \equiv \eta$ for all $t$ and $\hat{p}^{1:T}$ be a sequence generated by Algorithm~\ref{alg:OSMD-sampler}. For any comparator sequence $q^{1:T}$, where $q^t$ is allowed to be random, we have
\begin{multline*}
\text{D-Regret}_T (q^{1:T}) 
\leq \underbrace{\frac{ \log M  }{ \eta } + \frac{ 2 \log (M / \alpha) }{ \eta } \mathbb{E} \left[\text{TV} \left( q^{1:T} \right) \right] + \frac{\eta M^6}{ 2 K^2 \alpha^6 } \sum^T_{t=1} \mathbb{E} \left[  \left( a^t_{\max}  \right)^2 \right] }_{\text{Intrinsic Regret}} +\\ \underbrace{ \frac{8 \log (M / \alpha)}{\eta} \sum^T_{t=1} \mathbb{E}\left[ \psi(q^t, \alpha) \right] + \frac{1}{K} \sum^T_{t=1} \mathbb{E}\left[\phi(q^t, \alpha) l_t (q^t) \right] }_{\text{Projection Error}},
\end{multline*}
where $a^t_{\max} \coloneqq \max_{1\leq m \leq M} a^t_m = \max_{1\leq m \leq M} \lambda^2_m \Vert g^t_m \Vert^2_2$ for all $t \in [T]$.
\end{theorem}
\begin{proof}
The major challenge of the proof is to construct a projection of \emph{any} comparator sequence $q^{1:T}$ onto $\mathcal{A}^T$ and bound the projection error. To the best of our knowledge, this bound on the projection error of a dynamic sequence is novel. 
Since the comparator sequence can be arbitrary, thus analyzing the projection error is nontrivial.
Another challenge is to deal with the dynamic comparator, which requires us to connect the cumulative regret with the total variation of the comparator sequence.
See Appendix~\ref{sec:proof-thm-bd-1} for more details.
\end{proof}

From Theorem~\ref{alg:OSMD-sampler}, we see that the bound on the dynamic regret consists of two parts. The first part is the intrinsic regret, quantifying the difficulty of tracking a comparator sequence in $\mathcal{A}^T$; the second part is the projection error, arising from projecting the comparator sequence onto $\mathcal{A}^T$. As shown in Appendix~\ref{sec:proof-thm-bd-1}, we have $0 \leq \omega(q^t, \alpha) \leq 1$ for all $\alpha \in [0,1]$, which implies that $\phi(q^t, \alpha) \leq M / \alpha$. 
Besides, $\psi(q^t, \alpha) \leq \sum^M_{m=1} (\alpha/M) \mathds{1} \left\{ q^t_m < (\alpha/M) \right\} \leq \alpha$, and the projection error can be upper bounded by $(8 T \alpha \log (M/\alpha)  )/\eta + (M/\alpha) \sum^T_{t=1} \mathbb{E} [ l_t (q^t) ]$. More importantly, when $q^t_m \in \mathcal{A}$, we have $\psi(q^t, \alpha)=\omega(q^t, \alpha)=\phi(q^t, \alpha)=0$. Thus, when the comparator sequence belongs to $\mathcal{A}^T$, the projection error vanishes and we only have the intrinsic regret. As $\alpha$ decreases from one to zero, the intrinsic regret gets larger, while we are allowing a larger class of comparator sequences; on the other hand, the projection error decreases to zero, since the gap between $\mathcal{A}$ and $\mathcal{P}_{M-1}$ vanishes with $\alpha$. An optimal choice of $\alpha$ balances the two sources of regret.

\section{Adaptive-OSMD Sampler}
\label{sec:adp-osmd-sampler}

In this section, we discuss an extension of OSMD Sampler that can automatically choose the learning rate $\eta$ and is agnostic to the optimization method.
There are two main tuning parameters in OSMD Sampler, namely $\alpha$ and $\eta$. As we show empirically in Section~\ref{sec:robu-alpha}, the performance of the algorithm is relatively robust to the choice of $\alpha$.
However, the choice of $\eta$ may have a large effect on the performance of OSMD Sampler.
One way to choose $\eta$ is by minimizing the regret in Theorem~\ref{thm:upp-bd-general}, which is stated in the following corollary.
\begin{corollary}
\label{corollary:regret-bd-strategy-2}
Let $\eta_t \equiv \eta$ for all $t$ and $\hat{p}^{1:T}$ be a sequence generated by Algorithm~\ref{alg:OSMD-sampler}. 
Assume that there exists $A_{\max}>0$ such that $a^t_{\max} \leq A_{\max}$ for all $t$, where $a^t_{\max} = \max_{1\leq m \leq M} a^t_m = \max_{1\leq m \leq M} \lambda^2_m \Vert g^t_m \Vert^2_2$.
For any comparator sequence $q^{1:T}$, where $q^t$ is allowed to be random, such that $q^t \in \mathcal{A}$ for all $t \in [T]$ and $\mathbb{E} [\text{TV}(q^{1:T})] \leq \beta$, let
\begin{equation}
\label{eq:eta-choice-meta-pre-2}
\eta = \frac{ K \alpha^3 }{ M^3 A_{\max} } \sqrt{ \frac{ 2\log M + 4 \beta \log (M / \alpha) }{ T  } },
\end{equation}
then
\begin{equation}
\label{eq:regret-meta-pre}
\text{D-Regret}_T (q^{1:T}) \leq 
\frac{ M^3 A_{\max} }{ K \alpha^3 } \sqrt{ T \left[ \frac{1}{2}\log M + \beta \log \left( M / \alpha \right)   \right] }.
\end{equation}
\end{corollary}

The proof of Corollary~\ref{corollary:regret-bd-strategy-2} follows directly from Theorem~\ref{thm:upp-bd-general}.
Note that under Assumption~\ref{assump:diff-smooth}---\ref{assump:local-sg} and when $\lambda_m=\frac{1}{M}$ for all $m \in [M]$, we have $A_{\max}=\frac{G^2}{M^2}$ for Mini-batch SGD (Algorithm~\ref{alg:min-batch-osmd}) and $A_{\max}=\frac{BG^2}{M^2}$ for FedAvg (Algorithm~\ref{sec:convg-fedavg})\footnote{See Appendix~\ref{sec:proof-convg-thm-mini-batch} and Appendix~\ref{sec:proof-thm-fedavg} for proof.}, thus~\eqref{eq:eta-choice-meta-pre-2} recovers~\eqref{eq:lr-mini-batch-sgd} and~\eqref{eq:lr-fedavg}.

In practice, since the gradient norm is usually decreasing, we can estimate $A_{\max}$ by adding a pre-training phase where we broadcast the initial model parameter $w^0$ to all devices before the start of the training, and collect the returned $\Vert g^0_m \Vert^2_2$ from all responsive devices, which we denote as $S^0$. Then we can estimate $A_{\max}$ by $\hat{A}_{\max} = \max_{m \in S^0} \lambda_m \Vert g^0_m \Vert^2_2$.

On the other hand, the optimal choice of $\beta$ in~\eqref{eq:eta-choice-meta-pre-2} depends on specific problems, and is hard to estimate before training starts. Thus, it is preferable to have a tuning strategy that is adaptive to any $\beta>0$, which we describe in the following.

The main idea is to run a set of expert algorithms, each with a different learning rate for Algorithm~\ref{alg:OSMD-sampler}. We then use a prediction-with-expert-advice algorithm to track the best performing expert algorithm.\footnote{We refer the reader to~\citet[Chapter 2]{cesa2006prediction} for an overview of prediction-with-expert-advice algorithms.} More specifically, we define the set of expert learning rates as
\begin{equation}\label{eq:expert-rates-set}
\mathcal{E} \coloneqq \left\{ \left.  2^{e-1} \cdot \frac{K \alpha^3 }{ M^3  A_{\max}  } \sqrt{ \frac{2 \log M}{ T  } } \, \right\vert \, e=1,2,\dots,E \right\},
\end{equation}
where
\begin{equation}
\label{eq:expert-rates-set-length}
E = \left\lceil \frac{1}{2} \log_2 \left( 1 + \frac{4 \log (M/\alpha)}{\log M}(T-1) \right)  \right\rceil + 1.
\end{equation}
Then for each $\eta_e \in \mathcal{E}$, Adaptive-OSMD Sampler algorithm runs an expert algorithm to generate a sequence of sampling distributions $\hat{p}^{1:T}_e$. Meanwhile, it also runs a meta-algorithm that uses exponentially-weighted-average strategy to aggregate $\{\hat{p}^{1:T}_e\}^E_{e=1}$ into a single output $\hat{p}^{1:T}$, which achieves performance close to the best expert.

Algorithm~\ref{alg:adap-OSMD-sampler-meta} details Adaptive-OSMD Sampler. Note that since we can compute $\hat{l}_t (\hat{p}^t_e ; \hat{p}^t)$ and $\nabla \hat{l}_t (\hat{p}^t_e ; \hat{p}^t)$ directly, there is no need to use a surrogate loss as in~\citet{van2016metagrad} and \citet{zhang2018adaptive}.

From the computational perspective, the major cost comes from solving step~\ref{alg:expert-solve-mirror-descent} of Algorithm~\ref{alg:adap-OSMD-sampler-meta}, which needs to be run for a total number of $T \vert \mathcal{E} \vert= O (T \log_2 T )$ times. Compared with Algorithm~\ref{alg:OSMD-sampler}, the computational complexity only increases by $\log_2 T$ times.

\begin{algorithm}[t]
\caption{Adaptive-OSMD Sampler}
\label{alg:adap-OSMD-sampler-meta}
\begin{algorithmic}[1]
\STATE {\bfseries Input:} Meta learning rate $\gamma$; the set of expert learning rates $\mathcal{E} = \{\eta_1 \leq \eta_2 \leq \dots \leq \eta_E \}$ with $E=\vert \mathcal{E} \vert$; 
parameter $\alpha \in (0,1]$, $\mathcal{A}=\mathcal{P}_{M-1}\cap[\alpha/M,\infty)^M$; 
number of iterations $T$; initial distribution $p^{\text{init}}$.
\STATE {\bfseries Output:} $\hat{p}^{1:T}$.
\STATE Set $\theta^1_e = (1 + 1/E) / (e(e+1))$ and $\hat{p}^1_e=p^{\text{init}}$, $\forall e\in[E]$.
\FOR{$t=1,2,\dots,T-1$}{
\STATE Compute $\hat{p}^t = \sum^E_{e=1} \theta^t_e \hat{p}^t_e $.
\STATE Sample $S^t$ by $\hat{p}^t$.
\FOR{$e=1,2,\dots,E$}{
\STATE Compute $\hat{l}_t (\hat{p}^t_e ; \hat{p}^t)$ via \eqref{eq:est-loss} and $\nabla \hat{l}_t (\hat{p}^t_e ; \hat{p}^t)$ via \eqref{eq:est-grad}.
\STATE Solve $\hat{p}^{t+1}_e = \arg\min_{ p \in \mathcal{A} } \, \eta_e \langle p,\nabla \hat{l}_t (\hat{p}^t_e ; \hat{p}^t)  \rangle + D_{\Phi} \left( p \,\Vert\, \hat{p}^t_e  \right) $ via Algorithm~\ref{alg:OSMD-sampler-solver}. \label{alg:expert-solve-mirror-descent}
}
\ENDFOR
\STATE Update the weight of each expert:
\begin{equation*}
\theta^{t+1}_e = \frac{ \theta^t_e \exp \left\{ - \gamma \hat{l}_t (\hat{p}^t_e ; \hat{p}^t) \right\} }{ \sum^E_{e=1} \theta^t_e \exp \left\{ - \gamma \hat{l}_t (\hat{p}^t_e ; \hat{p}^t) \right\} }, \quad \forall e \in [E]. \label{step:weight-update}
\end{equation*}
}
\ENDFOR
\end{algorithmic}
\end{algorithm}

We have the following regret guarantee on Algorithm~\ref{alg:adap-OSMD-sampler-meta}.
\begin{theorem}
\label{thm:thm-dap-OSMD-sampler}
Assume that there exists $A_{\max}>0$ such that $a^t_{\max} \leq A_{\max}$ for all $t$, where $a^t_{\max} = \max_{1\leq m \leq M} a^t_m = \max_{1\leq m \leq M} \lambda^2_m \Vert g^t_m \Vert^2_2$.
Let $\hat{p}^{1:T}$ be the output of Algorithm~\ref{alg:adap-OSMD-sampler-meta} with $\gamma=\frac{\alpha}{M}\sqrt{\frac{8 K}{T A_{\max}} }$, $p^{\text{init}}=\punif$ and $\mathcal{E}$ as in \eqref{eq:expert-rates-set}.
Then for any comparator sequence $q^{1:T}$, where $q^t$ is allowed to be random, such that $q^t \in \mathcal{A}$ for all $t \in [T]$ and $\mathbb{E} [\text{TV}(q^{1:T})] \leq \beta$, we have
\begin{equation*}
\text{D-Regret}_T (q^{1:T}) \leq 
\frac{ 3 M^3 A_{\max} }{ K \alpha^3 } \sqrt{ T \left[ \frac{1}{2}\log M + \beta \log \left( M / \alpha \right)   \right] } + \frac{M}{\alpha} \sqrt{ \frac{T A_{\max} }{ 8 K } } \left( 1 + 2 \log E \right).
\end{equation*}
\end{theorem}
\begin{proof}
See Appendix~\ref{sec:proof-thm-dap-OSMD-sampler}.
\end{proof}
Since the additional regret term is $\tilde{O}( (M/\alpha) \sqrt{ T/K } )$, which is no larger than the first term asymptotically in its dependency on $T$ except for $\log$ terms, the bound on the regret is of the same order as in~\eqref{eq:regret-meta-pre}. However, we do not need to specify $\beta$ to set the learning rate.

Following Theorem~\ref{thm:thm-dap-OSMD-sampler} and the proofs of Theorem~\ref{thm:convg-thm-mini-batch} and Theorem~\ref{thm:convg-thm-fedavg}, we then have the following optimization guarantees on the Adaptive-OSMD Sampler.
\begin{theorem}
\label{thm:adp-osmd-opt-thm}
Assume Assumption~\ref{assump:diff-smooth}---\ref{assump:local-sg} holds and $\lambda_m=\frac{1}{M}$ for all $m \in [M]$.
\begin{itemize}
\item Mini-batch SGD with Adaptive-OSMD Sampler. Let $\{w^1,\ldots,w^T \}$ be the sequence of iterates generated by Algorithm~\ref{alg:min-batch-osmd}, where in Line~\ref{step:samp-update}, the sampling distribution is updated by Algorithm~\ref{alg:adap-OSMD-sampler-meta} with $A_{\max}=\frac{G^2}{M^2}$.
Let $w^R$ denote an element of that sequence chosen uniformly at random. Besides, let $\mu_t = \mu$ for all $t \in [T]$, where
\begin{multline*}
\mu=\min \left\{\frac{1}{L}, \frac{1}{\sigma}\sqrt{ \frac{ D^F K B \alpha  }{ L T }}, \frac{1}{\zeta_T(\alpha,\beta)} \sqrt{  \frac{D^F K}{L T} }, \right. \\
\left. \frac{ \sqrt{D^F K} \alpha^\frac{3}{2} }{\sqrt{L M} T^{\frac{1}{4}}  G \left( \frac{1}{2}\log M + \beta \log \left( M / \alpha \right) \right)^{\frac{1}{4}} }, \sqrt{  \frac{ \alpha D^F }{ L G } } \left( \frac{K}{T} \right)^{\frac{1}{2}} \sqrt{  \frac{1}{1 + 2 \log E} } \right\},
\end{multline*}
we then have
\begin{equation}
\label{eq:convg-rate-minibatchSGD-ada}
\begin{aligned}
& \quad \mathbb{E} \left[ \left\Vert \nabla F \left( w^R \right) \right\Vert^2 \right] \\
& \lesssim \frac{D^F L}{T} + \frac{ \sigma \sqrt{D^F L} }{ \sqrt{TKB \alpha } } + \frac{\zeta_T(\alpha,\beta) \sqrt{D^F L }}{\sqrt{T K}} + \frac{ \sqrt{ D^F L} M^{\frac{1}{2}} G }{ T^{\frac{3}{4}} K^{\frac{1}{2}} \alpha^{\frac{3}{2}} } \left( \frac{1}{2}\log M + \beta \log \left( M / \alpha \right)\right)^{\frac{1}{4}} \\
& \quad + \sqrt{ \frac{ D^F L G }{ \alpha } } \left( \frac{1}{K} \right)^{\frac{1}{4}} \left( \frac{1}{T} \right)^{\frac{3}{4}} \sqrt{ 1 + 2 \log E } .
\end{aligned}
\end{equation}
\item FedAvg with Adaptive-OSMD Sampler. 
Let $\{w^1,\ldots,w^T \}$ be the sequence of iterates generated by Algorithm~\ref{alg:fed-avg-osmd}, where in Line~\ref{step:samp-update}, the sampling distribution is updated by Algorithm~\ref{alg:adap-OSMD-sampler-meta} with $A_{\max}=\frac{BG^2}{M^2}$.
Let $w^R$ denote an element of that sequence chosen uniformly at random.
Besides, let $\mu^t = \mu \geq 1$ and $\mu^t_l = \mu_l$ for all $t \in [T]$, where
\begin{multline*}
\mu_l = \min \left\{ \frac{1}{4 \mu B L} \sqrt{\frac{1}{2+1/\alpha}},  \frac{(D^F)^{\frac{1}{3}}}{  \left( 4 + \frac{2}{\alpha}  \right)^{\frac{1}{3}} \mu B L^{\frac{2}{3}} \left( \zeta^2_{\text{unif}}  + \frac{\sigma^2}{2 B} \right)^{\frac{1}{3}} T^{\frac{1}{3}} }, \right.\\ 
\left.  \frac{ \sqrt{ 2 D^F }  }{  \mu B \sqrt{L} \sqrt{ \frac{2 \zeta^2_T (\alpha, \beta)}{K} +  \frac{\sigma^2}{K B \alpha } + \sqrt{ \frac{ \frac{1}{2} \log M + \beta \log (M/\alpha) }{T} } } \sqrt{T} }, \right. \\
\left. \frac{1}{\mu}\sqrt{ \frac{ 2 \alpha D^F  }{ L G \left( 1 + 2 \log E \right)  }  } \left( \frac{1}{B} \right)^{\frac{3}{4}} \left( \frac{8 K}{T} \right)^{\frac{1}{4}} \right\}, 
\end{multline*}
we then have
\begin{multline}
\label{eq:convg-rate-fedavg-osmd-ada}
\mathbb{E} \left[ \left\Vert \nabla F(w^R)  \right\Vert^2_2 \right]
\lesssim \frac{D^F L \sqrt{2 + \frac{1}{\alpha}}}{T} + \frac{ \left( 4 + \frac{2}{\alpha}  \right)^{\frac{1}{3}} (D^F L)^{\frac{2}{3}} \zeta^{\frac{2}{3}}_{\text{unif}}  }{ T^{\frac{2}{3}} } + \frac{ \left( 4 + \frac{2}{\alpha}  \right)^{\frac{1}{3}} (D^F L)^{\frac{2}{3}}  \sigma^{\frac{2}{3}}  }{ B ^{\frac{1}{3}} T^{\frac{2}{3}} } \\
+ \frac{ \sqrt{ D^F L }\zeta_T (\alpha, \beta) }{ \sqrt{T K} } + \frac{ \sqrt{ D^F L } \sigma  }{ \sqrt{ T K B \alpha } } + \frac{ \sqrt{D^F L } }{ \sqrt{T} } \left( \frac{ \frac{1}{2} \log M + \beta \log (M/\alpha) }{T} \right)^{\frac{1}{4}} \\
+ \sqrt{ \frac{D^F L G}{\alpha}  } \left(  \frac{1}{K B}\right)^{\frac{1}{4}} \left( \frac{1}{T} \right)^{\frac{3}{4}} \sqrt{1 + 2 \log E}.
\end{multline}
\end{itemize}
\end{theorem}

Compare~\eqref{eq:convg-rate-minibatchSGD} with~\eqref{eq:convg-rate-minibatchSGD-ada} (respectively, \eqref{eq:convg-rate-fedavg-osmd} with~\eqref{eq:convg-rate-fedavg-osmd-ada}), there is an additional error term in~\eqref{eq:convg-rate-minibatchSGD-ada} (respectively, \eqref{eq:convg-rate-fedavg-osmd-ada}). However, this additional error term is no larger than the previous error term when considering its dependency on $T$.
Furthermore, when we let $\beta=T^{c}$ with $0<c<1$ and treat all the other parameters except for $\beta$ and $T$ to be constants, we then have the last term dominated by the penultimate term.
On the other hand, in exchange to the additional error, we do not need to specify $\beta$ when setting the learning rate~$\eta$.

Note that Adaptive-OSMD Sampler still requires the number of iterations $T$ as an input; however, in some applications, $T$ is not available before the training starts. For example, one may use some stopping criterion to determine when to stop.
To deal with such cases, in Appendix~\ref{sec:adp-osmd-sampler-doub}, we introduce an extension of Adaptive-OSMD Sampler that does not need $T$ as input by using doubling trick.
The extension algorithm enjoys similar regret and optimization guarantees as Adaptive-OSMD Sampler. See Appendix~\ref{sec:adp-osmd-sampler-doub} for more details.


\section{Simulation Experiments}
\label{sec:experiments}

In this section, we use simulated data to demonstrate the performance of Adaptive-OSMD Sampler (Algorithm~\ref{alg:adap-OSMD-sampler-meta}). 
We compare our method against uniform sampling in Section~\ref{sec:AdaDoubOSMD-uniform} and compare against other bandit feedback online learning samplers in Section~\ref{sec:AdaDoubOSMD-mabs-vrb}. In addition, we examine the robustness of Adaptive-OSMD Sampler to the choice of $\alpha$ in Section~\ref{sec:robu-alpha}, while in Section~\ref{sec:dyn-vs-import}, we compare Adaptive-OSMD Sampler with the Lipschitz constant based importance sampling.

We generate data as follows. We set the number of clients as $M=100$, and each client has $n_m=100$ samples, $m \in [M]$. Samples on each client are generated as
\begin{equation}
\label{eq:sync-data-generate}
y_{m,i}=\langle w_{\star},x_{m,i} \rangle + N(0,0.1^2), \qquad i \in [n_m],
\end{equation}
where the coefficient vector $w_{\star} \in \mathbb{R}^d$ has elements generated as i.i.d.~$N(10,3)$, and the feature vector $x_{m,i} \in \mathbb{R}^d$ is generated as $x_{m,i} \sim N(0,\Sigma_m)$, where $\Sigma_m = s_m \cdot \Sigma$, $\Sigma$ is a diagonal matrix with $\Sigma_{jj}=\kappa^{(j-1)/(d-1)-1}$, $\forall j \in [d]$ and $\kappa>0$ is the condition number of $\Sigma$. We generate $\{s_m\}^M_{m=1}$ i.i.d.~from $e^{ N(0,\sigma^2) }$ and rescale them as $s_m \leftarrow (s_m / \max_{m \in [M]} s_m) \times 10$ so that $s_m \leq 10$ for all $m \in [M]$. In this setting, $\kappa$ controls the difficulty of each problem when solved separately, while $\sigma$ controls the level of heterogeneity across clients. In all experiments, we fix $\kappa=25$, which corresponds to a hard problem, and change $\sigma$ to simulate different heterogeneity levels. We expect that uniform sampling suffers when the heterogeneity level is high. The dimension $d$ of the problem is set as $d=10$. The results are averaged over $10$ independent runs.

We use the mean squared error loss defined as 
\[
L(w)= \frac1M \sum^M_{m=1} L_m(w),
\qquad \text{where} \qquad 
L_m(w)= \frac{1}{2 n_m} ( y_{m,i} - \langle w_{\star},x_{m,i} \rangle )^2.
\]
We use the stochastic gradient descent to make global updates. At each round $t$, we choose a subset of $K=5$ clients, denoted as $S^t$. For each client $m \in S^t$, we choose a mini-batch of samples, $\mathcal{B}^t_m$, of size $\bar{B}=10$, and compute the mini-batch stochastic gradient. The parameter $w$ is updated as
\begin{equation*}
w^{t+1} = w^t + \frac{\mu_{\text{SGD}}}{M K \bar{B}} \sum_{m \in S^t} \frac{1}{ p^t_m } \sum_{i \in \mathcal{B}^t_m} \left( y_{m,i} - \langle w_{\star},x_{m,i} \rangle \right) \cdot x_{m,i},
\end{equation*}
where $\mu_{\text{SGD}}$ is the learning rate, set as $\mu_{\text{SGD}}=0.1$ in simulations.

In all experiments, we set $\alpha$ in Adaptive-OSMD Sampler as $\alpha=0.4$. The tuning parameters for MABS, VRB and Avare are set as in their original papers. 

\subsection{Adaptive-OSMD Sampler vs Uniform Sampling}
\label{sec:AdaDoubOSMD-uniform}

\begin{figure}[t]
\centering
\includegraphics[width=0.8\textwidth]{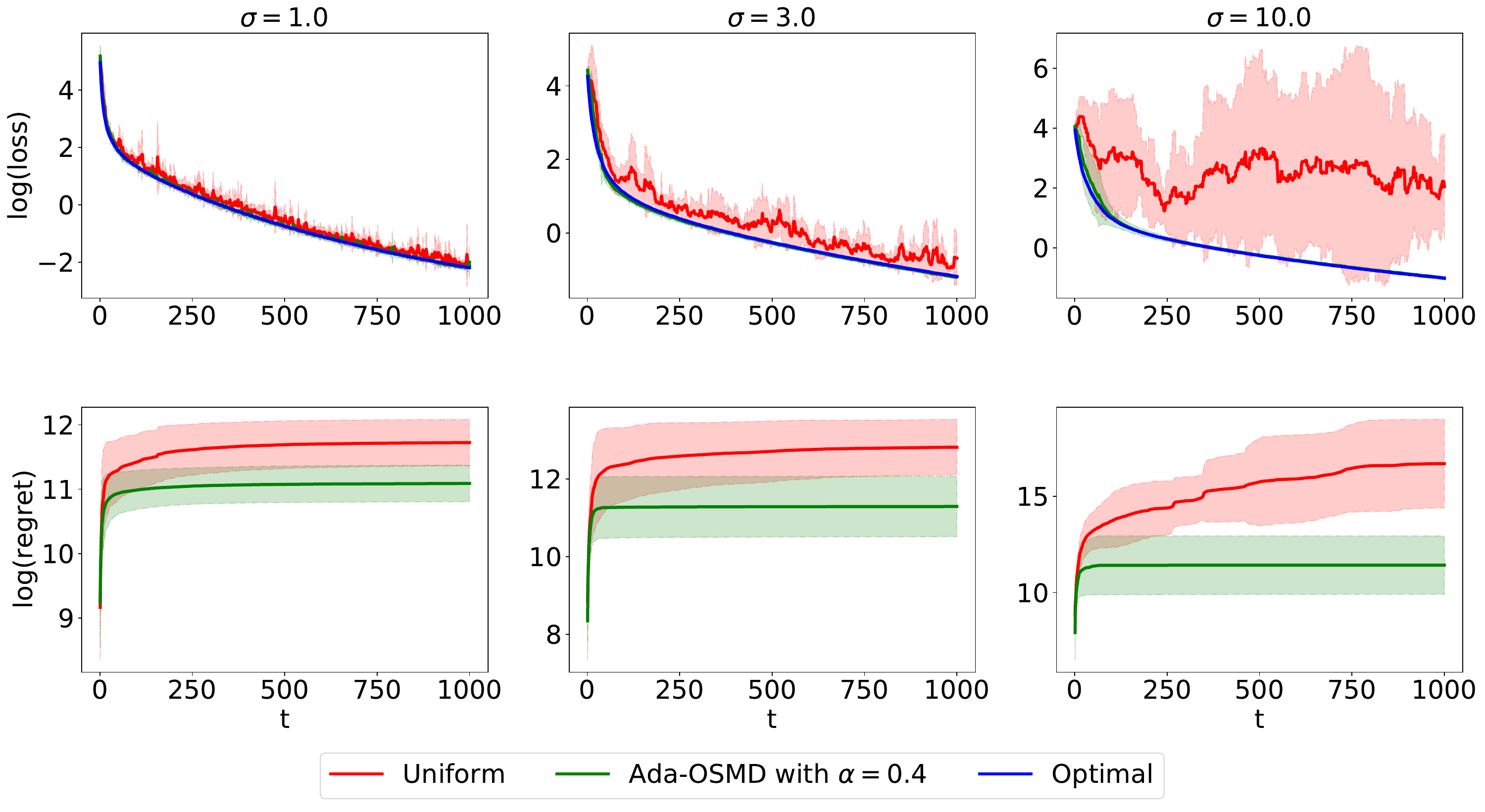}
\caption{The training loss (top row) and cumulative regret (bottom row) are compared for the Adaptive-OSMD Sampler, Uniform Sampler, and Optimal Sampler, under $\sigma = 1.0$, $\sigma = 3.0$, and $\sigma = 10.0$. Solid lines represent the mean values, while the shaded regions indicate $\text{mean} \pm \text{standard deviation}$ across independent runs.}
\label{fig:ada-osmd-uniform-opt-loss-regret}
\end{figure}

The results of the training process and the cumulative regret are shown in Figure~\ref{fig:ada-osmd-uniform-opt-loss-regret}. For the training loss, we see that when the heterogeneity level is low ($\sigma=1.0$), the uniform sampling performs as well as Adaptive-OSMD Sampler and theoretically optimal sampling; however, as the heterogeneity level increases, the performance of uniform sampling gradually suffers; when $\sigma=10.0$, uniform sampling performs poorly. On the other hand, Adaptive-OSMD Sampler performs well across all levels of heterogeneity and is very close to the theoretically optimal sampling. Similarly, for the cumulative regret, when the heterogeneity level is low, the cumulative regret of uniform sampling is close to Adaptive-OSMD Sampler; however, when the heterogeneity level increases, the cumulative regret of uniform sampling gets much larger than Adaptive-OSMD Sampler. Based on the above results, we can conclude that while the widely used choice of uniform sampling may be reasonable when heterogeneity is low, our proposed sampling strategy is robust across different levels of heterogeneity, and thus should be considered as the default option.

\subsection{Adaptive-OSMD Sampler vs MABS vs VRB vs Avare}
\label{sec:AdaDoubOSMD-mabs-vrb}

\begin{figure}[t]
\centering
\includegraphics[width=0.8\textwidth]{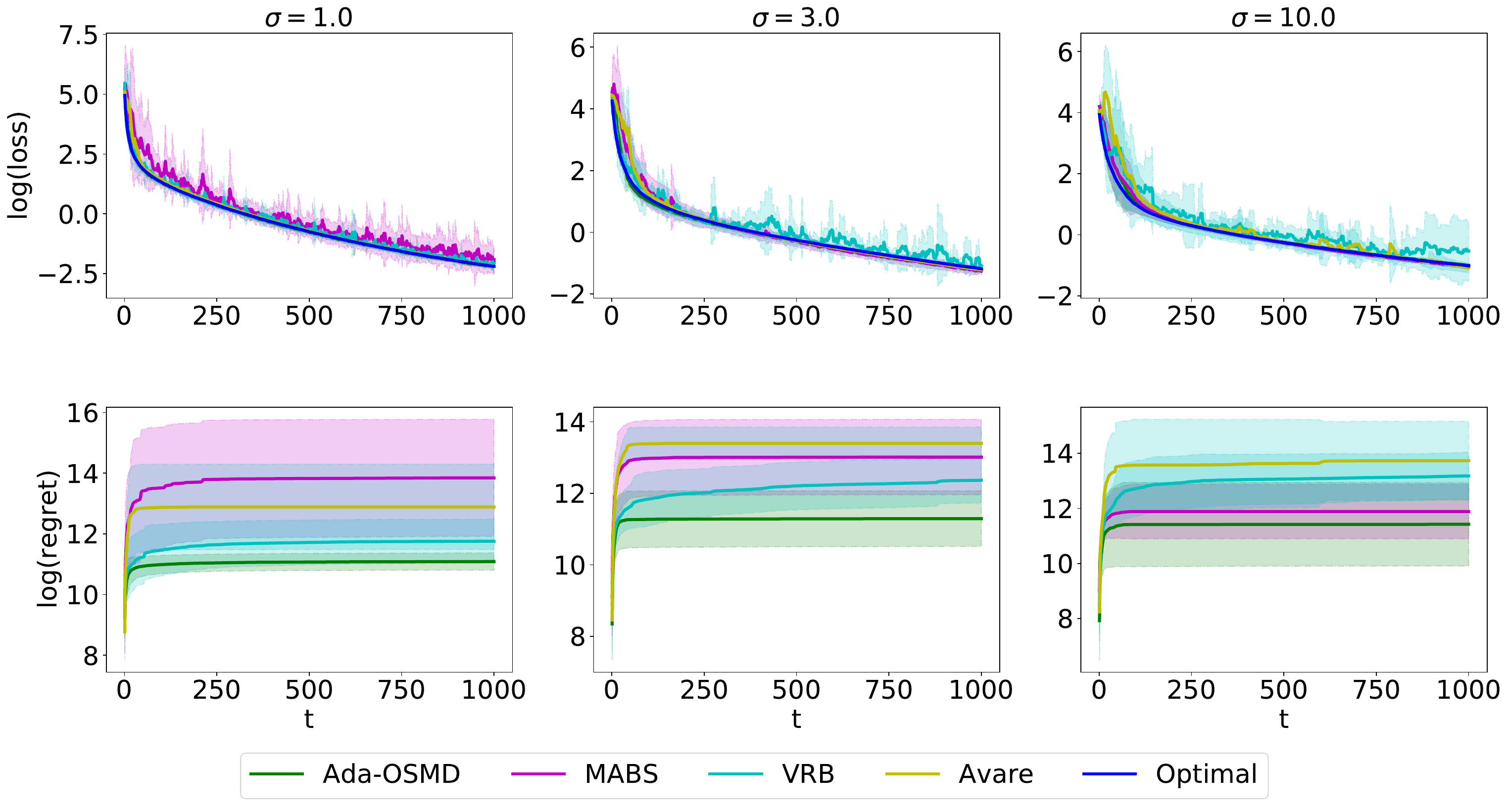}
\caption{The training loss (top row) and cumulative regret (bottom row) are compared across the Adaptive-OSMD Sampler, MABS, VRB, and Avare methods for $\sigma = 1.0$, $\sigma = 3.0$, and $\sigma = 10.0$. Solid lines represent the mean values, while shaded regions indicate $\text{mean} \pm \text{standard deviation}$ across independent runs.}
\label{fig:ada-osmd-MABS-VRB-Avare-loss-regret}
\end{figure}

We compare Adaptive-OSMD Sampler to other bandit feedback online learning samplers: MABS~\citep{salehi2017stochastic}, VRB~\citep{borsos2018online} and Avare~\citep{el2020adaptive}. Training loss and cumulative regret are shown in Figure~\ref{fig:ada-osmd-MABS-VRB-Avare-loss-regret}. We see that while VRB and Avare perform better when the heterogeneity level is low and MABS performs better when the heterogeneity level is high, Adaptive-OSMD Sampler always achieves the best in both training loss and cumulative regret across all different levels of heterogeneity. Thus, we conclude that Adaptive-OSMD is a better choice than other online learning samplers.

\subsection{Robustness of Adaptive-OSMD Sampler to the Choice of $\alpha$}
\label{sec:robu-alpha}

We examine the robustness of Adaptive-OSMD Sampler to the choice of $\alpha$. We run Adaptive-OSMD Sampler separately for each $\alpha \in \{0.01,0.1,0.4,0.7,0.9,1.0\}$. Note that when $\alpha=1.0$, the Adaptive-OSMD Sampler outputs a uniform distribution. Training loss and cumulative regret are shown in Figure~\ref{fig:ada-osmd-alpha-loss-regret}. We observe that Adaptive-OSMD Sampler is robust to the choice of $\alpha$, and performs well as long as $\alpha$ is not too close to zero or too close to one.

\begin{figure}[t]
\centering
\includegraphics[width=0.8\textwidth]{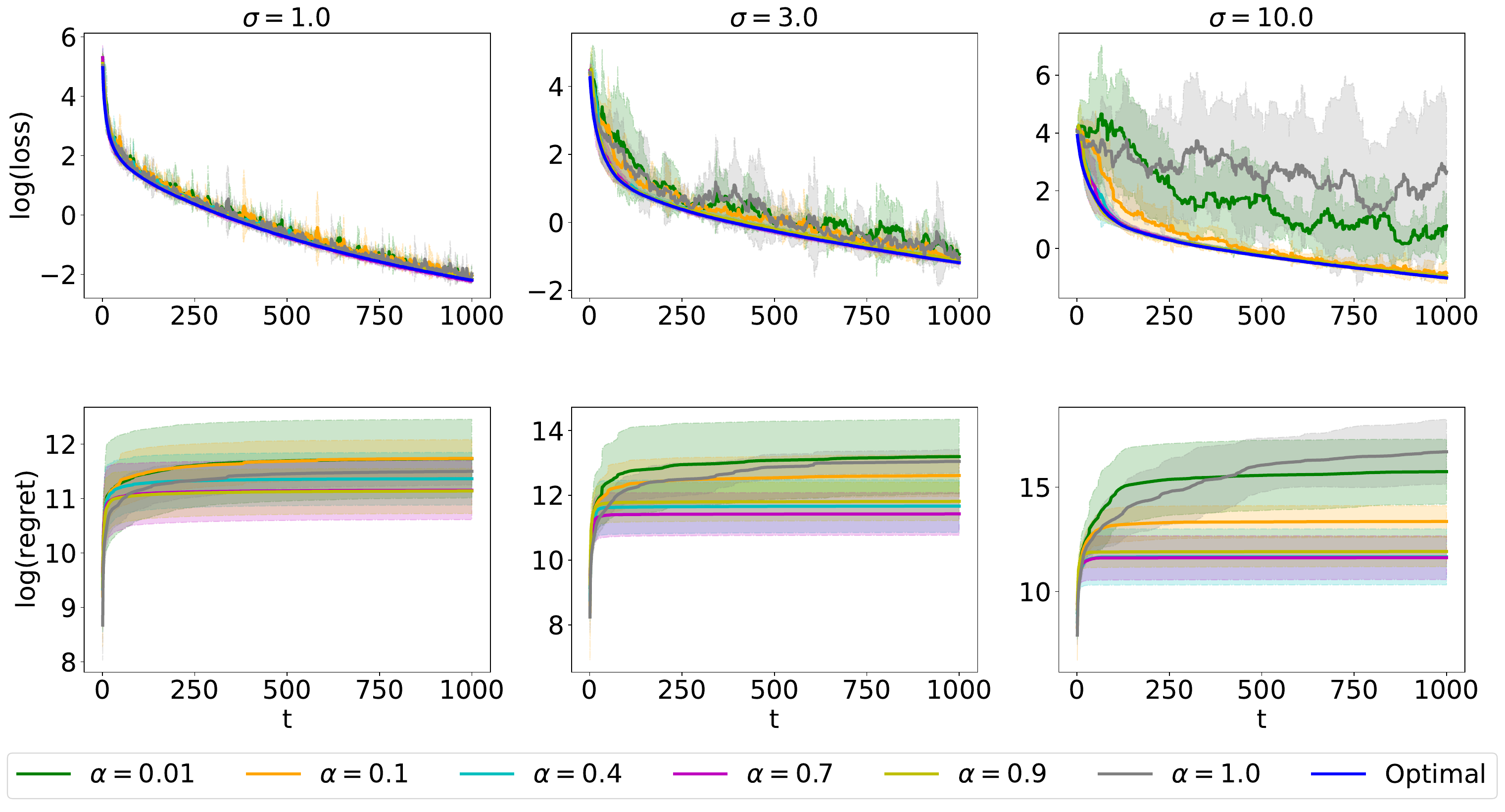}
\caption{The training loss (top row) and cumulative regret (bottom row) are shown for the Adaptive-OSMD Sampler with different values of $\alpha$ under $\sigma = 1.0$, $\sigma = 3.0$, and $\sigma = 10.0$. Solid lines represent the mean values, while shaded regions indicate $\text{mean} \pm \text{standard deviation}$ across independent runs.}
\label{fig:ada-osmd-alpha-loss-regret}
\end{figure}




\subsection{Dynamic Sampling Distribution v.s. Fixed Sampling Distribution}
\label{sec:dyn-vs-import}

In this paper, we allow both our sampling distribution and competitor sampling distribution to change over time, while previous studies either use a fixed sampling distribution~\citep{zhao2015stochastic,needell2014stochastic} or they compare against a fixed sampling distribution~\citep{namkoong2017adaptive, salehi2017stochastic, borsos2018online, borsos2019online}. In this section, we show that under certain settings, a dynamic sampling distribution can achieve a significant advantage over a fixed sampling distribution. More specifically, we compare the Adaptive-OSMD Sampler with the Lipschitz constant-based importance sampling distribution proposed by~\citet{zhao2015stochastic,needell2014stochastic}, which we denote as $p^{\text{IS}}$.

We still use the same model as in~\eqref{eq:sync-data-generate} to generate data. but we generate $w_{\star}$ and $x_{m,i}$ differently. Motivated by~\cite{zhao2022svrg}, for each $m \in [M]$, we choose uniformly at random one dimension among $\mathbb{R}^d$, denoted as $\text{supp}(m) \in [d]$, as the support of $x_{m,i}$ for all $i \in [n_m]$, while the remaining dimensions of $x_{m,i}$ are set to be zero. The nonzero dimension of $x_{m,i}$ is generated from $N(1.0, 0.1^2)$. The entries of $w_{\star}$ are generated i.i.d.~from $e^{N(0, \nu^2)}$. Therefore, $\nu$ controls the variance of entries of $w_{\star}$.

Besides, we choose the optimal stepsize from the set $\{1.0, 0.5, 0.1, 0.05, 0.01\}$ for each method separately. The final result is shown in Figure~\ref{fig:fixed-loss-regret}. We see that Adaptive-OSMD Sampler performs better than $p^{\text{IS}}$ across all levels of $\nu$. Note that in practice, in order to implement $p^{\text{IS}}$, we need prior information about Lipschitz constants of $L_m(\cdot)$'s, while  Adaptive-OSMD Sampler does not need prior information. This way, our proposed method does not only have better practical performance, but also requires less prior information.

\begin{figure}[t]
\centering
\includegraphics[width=0.8\textwidth]{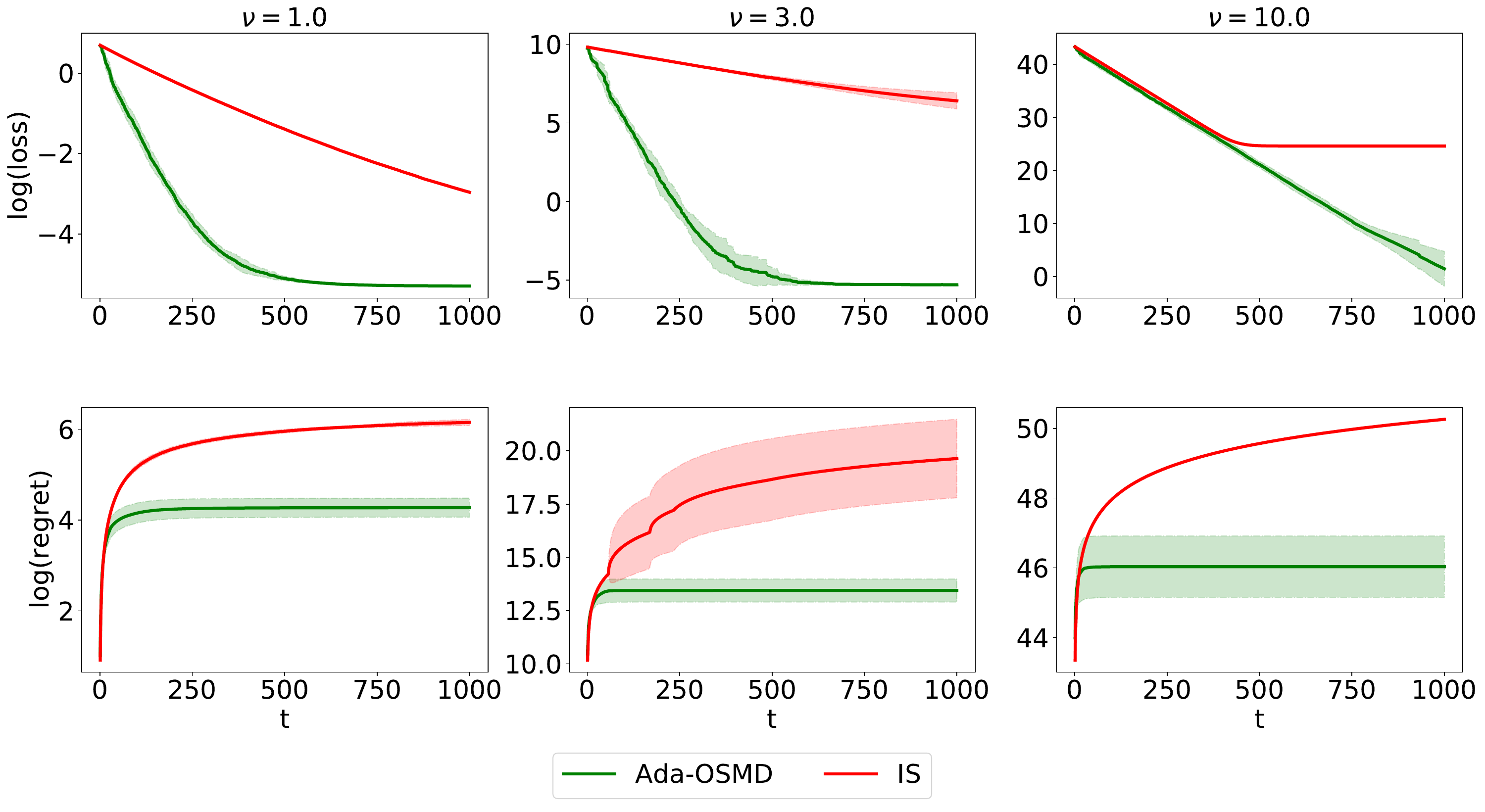}
\caption{The training loss (top row) and cumulative regret (bottom row) are compared between the Adaptive-OSMD Sampler and $p^{\text{IS}}$ for $\nu = 1.0$, $\nu = 3.0$, and $\nu = 10.0$. Solid lines represent the mean values, while shaded regions illustrate $\text{mean} \pm \text{standard deviation}$ across independent runs.}
\label{fig:fixed-loss-regret}
\end{figure}

\section{Real Data Experiment}
\label{sec:real-data-exp}

We compare Adaptive-OSMD Sampler with uniform sampling and other online learning samplers including MABS~\citep{salehi2017stochastic}, VRB~\citep{borsos2018online} and Avare~\citep{el2020adaptive} on real data. We use three commonly used computer vision data sets: MNIST~\citep{lecun-mnisthandwrittendigit-2010}\footnote{Yann LeCun and Corinna Cortes hold the copyright of MNIST data set, which is a derivative work from original NIST data sets. MNIST data set is made available under the terms of the Creative Commons Attribution-Share Alike 3.0 license.}, KMINIST~\citep{clanuwat2018deep}\footnote{KMNIST data set is licensed under a permissive CC BY-SA 4.0 license, except where specified within some benchmark scripts.}, and FMINST~\citep{xiao2017fashion}\footnote{FMNIST data set is under The MIT License (MIT) Copyright © [2017] Zalando SE, https://tech.zalando.com}. We set the number of devices to be $M=500$. To better simulate the situation where our method brings significant convergence speed improvement, we create a highly skewed sample size distribution of the training set among clients: $65$\% of clients have only one training sample, $20$\% of clients have $5$ training samples, $10$\% of clients have $30$ training samples, and $5$\% of clients have $100$ training samples. This setting tries to illustrate a real-life situation where most of the data come from a small fraction of users, while most of the users have only a small number of samples. The skewed sample size distribution is common in other FL data sets, such as LEAF \citep{caldas2018leaf}. The sample size distribution in the training set is shown in Figure~\ref{fig:sample-size-distrib}. In addition, each client has $10$ validation samples used to measure the prediction accuracy of the model over the training process.

\begin{figure}[t]
\centering
\includegraphics[width=0.75\textwidth]{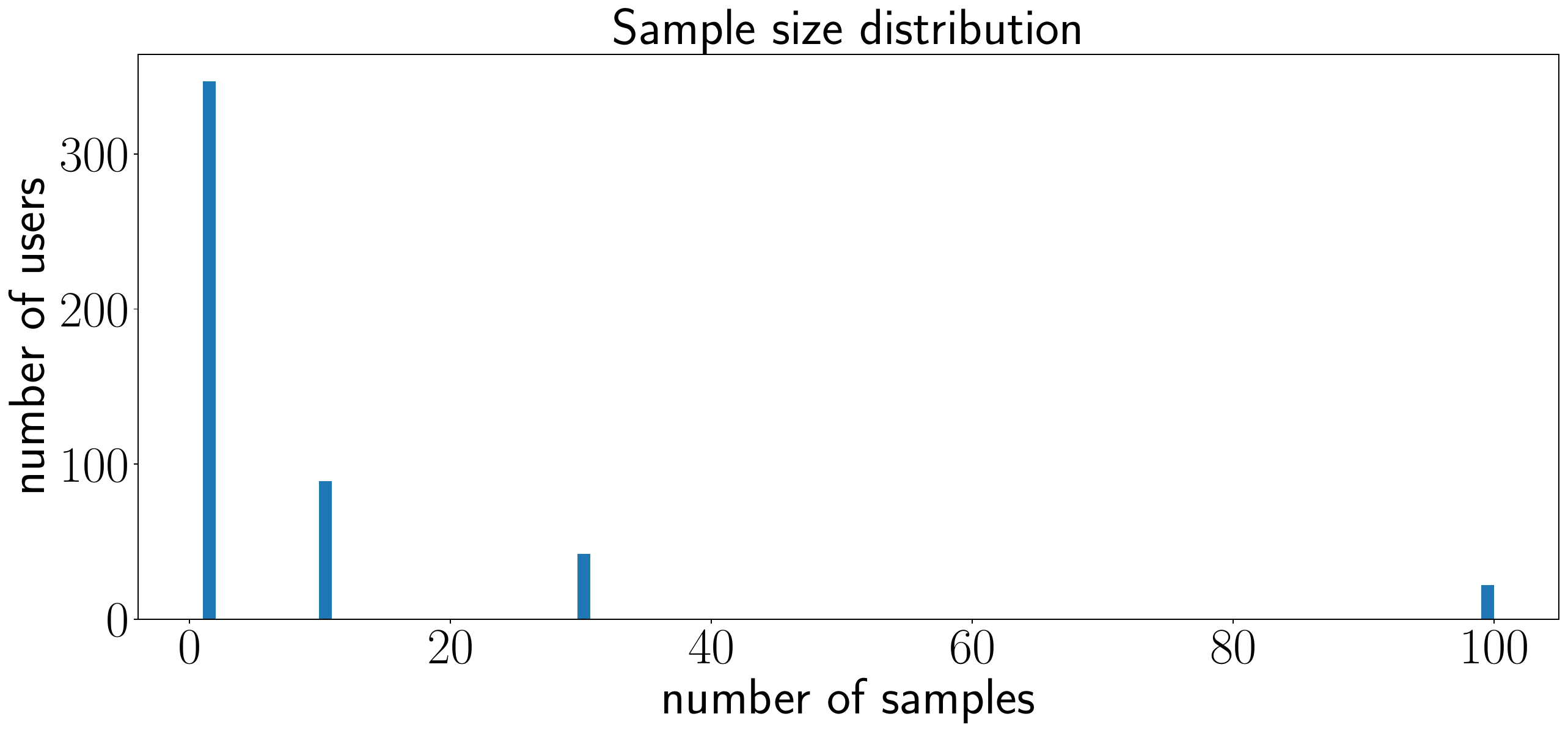}
\caption{The sample size distribution in the training set across clients.}
\label{fig:sample-size-distrib}
\end{figure}

\begin{figure}[t]
\centering
\includegraphics[width=\textwidth]{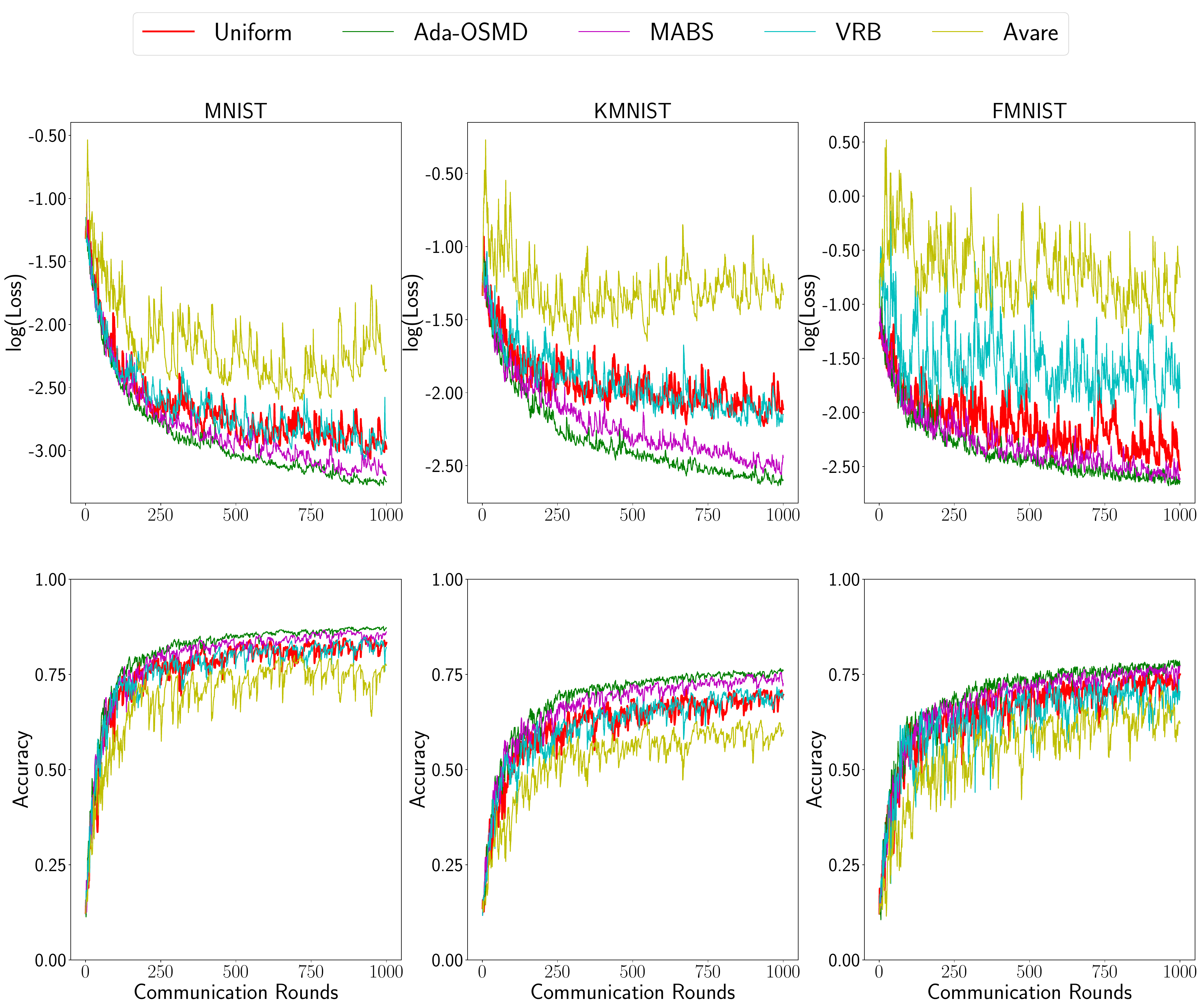}
\caption{Comparison of Adaptive-OSMD Sampler, Uniform Sampler, and Other Online Learning Samplers
The comparison is based on real data, evaluating training loss (top row) and validation accuracy (bottom row). Each column represents a different data set. The Adaptive-OSMD Sampler demonstrates superior performance, being both faster and more stable. The results represent the average performance over five independent runs.}
\label{fig:real-data-cv}
\end{figure}


We use a multi-class logistic regression model. For a given gray scale picture with the label $y \in \{1,2,\dots,C\}$, we unroll its pixel matrix into a vector $x \in \mathbb{R}^{p}$. Given a parameter matrix $W \in \mathbb{R}^{C \times p}$, the training loss function defined in~\eqref{eq:SingleFL-obj} is
\begin{equation*}
\phi (W ; x, y) \coloneqq l_\text{CE} \left(\varsigma (W x) \, ; y \right),
\end{equation*}
where $\varsigma (\cdot) : \mathbb{R}^{C} \rightarrow \mathbb{R}^{C}$ is the softmax function defined as 
\begin{equation*}
\left[ \varsigma (x) \right]_i = \frac{\exp(x_i)}{\sum^K_{j=1} \exp(x_j)}, \quad \text{ for all } x \in \mathbb{R}^C,
\end{equation*}
and $l_\text{CE}(x \,; y)=\sum^C_{i=1} \mathds{1}(y=i) \log x_i$, $x \in \mathbb{R}^C$, $y\in \{1,\dots,C\}$, is the cross-entropy function.

We use the same algorithms and tuning parameters as in Section~\ref{sec:experiments}. Learning rate in SGD is set to $0.075$ for MNIST and KMNIST, and is set to $0.03$ for FMNIST. The total number of communication rounds is to $1,000$. In each round of communication, we choose $K=10$ clients to participate ($2$\% of total number of clients). For a chosen client $m$, we compute its local mini-batch gradient with the batch size equal to $\min\{5,n_m\}$, where $n_m$ is the training sample size on the client $m$.

Figure~\ref{fig:real-data-cv} shows both the training loss and validation accuracy. Each figure shows the average performance over 5 independent runs. We use the same random seed for both Adaptive-OSMD Sampler and competitors, and change random seeds across different runs. The main focus is on minimizing the training loss, and the validation accuracy is only included for completeness. We observe that Adaptive-OSMD Sampler performs better than uniform sampling and other online learning samplers across all data sets. 


\section{Conclusion}
\label{sec:conclusion}

We studied the client sampling problem in FL. We proposed an online learning with bandit feedback approach to tackle client sampling. We used online stochastic mirror descent to solve the online learning problem and applied the online ensemble method to choose the tuning parameters. We established an upper bound on the dynamic regret relative to any sequence of sampling distributions. 
Besides, we provide optimization guarantees for our sampling method when used with mini-batch SGD and FedAvg.
Extensive numerical experiments demonstrated the benefits of our approach over both widely used uniform sampling and other competitors.

In this paper, we have focused on sampling with replacement. However, sampling without replacement would ideally be a more efficient approach. In Section~\ref{sec:samp-wo-replace}, we discussed a natural extension of Adaptive-OSMD Sampler to a setting where sampling without replacement is used. However, this approach does not directly minimize the variance of the gradient $g^t$. When sampling without replacement is used, the variance function becomes more complicated and the design of an algorithm to directly minimize the variance is an interesting future direction.

Besides, in federated learning, privacy is a major concern. In this paper, the non-uniform sampling distribution may make the protection of clients' privacy more challenging than uniform sampling. One possible solution is to add noise to the gradient feedback and protect the clients' privacy under the Differential Privacy (DP) concept~\citep{dwork2008differential}. However, the added noise may hurt the performance of our sampling design and increase the regret. Studying the trade-off between privacy protection and regret is an important direction for addressing societal concerns in real-world applications.

Other fruitful future directions include the design of sampling algorithms for minimizing personalized FL objectives and sampling with physical constraints in the FL system, which we discuss in Appendix~\ref{sec:future-directions}.

\acks{
This work was completed in part with resources provided by the University of Chicago
Booth Mercury Computing Cluster.
The research of MK is supported in part by NSF Grant ECCS-2216912.
}

\FloatBarrier

\newpage

\appendix

\section{Extensions of Adaptive-OSMD Sampler}
\label{sec:extensions}

In this section, we discuss additional extensions of Adaptive-OSMD Sampler. In Section~\ref{sec:adp-osmd-sampler-doub}, we discuss how to choose $\eta$ without knowing $T$ in advance. In Section~\ref{sec:samp-wo-replace}, we discuss how to extend Adaptive-OSMD Sampler to sample without replacement setting.

\subsection{Adaptive-OSMD Sampler with Doubling Trick} 
\label{sec:adp-osmd-sampler-doub}

Algorithm~\ref{alg:adap-OSMD-sampler-meta} requires the total number of iterations $T$ as input, which is not always available in practice. In those cases, we use doubling trick~\citep[Section 2.3]{cesa2006prediction} to avoid this requirement. The basic idea is to restart Adaptive-OSMD Sampler at exponentially increasing time points $T_b=2^{b-1}$, $b\geq1$. The learning rates of experts in Algorithm~\ref{alg:adap-OSMD-sampler-meta} are reset at the beginning of each time interval, and the meta-algorithm learning rate $\gamma$ is chosen optimally for the interval length. 

More specifically, let $A_{\max}>0$ such that $a^t_{\max} \leq A_{\max}$ for all $t$, where $a^t_{\max} = \max_{1\leq m \leq M} a^t_m = \max_{1\leq m \leq M} \lambda^2_m \Vert g^t_m \Vert^2_2$.
At the time point $T_b$, we let
\begin{equation}\label{eq:doub-trick-set}
\mathcal{E}_b \coloneqq \left\{ \left.  2^{e-\frac{b}{2}-\frac{1}{2}} \cdot \frac{K \alpha^3 \sqrt{ 2 \log M } }{ M^3  A_{\max}  }  \, \right\vert \, e=1,2,\dots,E_b \right\},
\end{equation}
where
\begin{equation}
\label{eq:doub-trick-set-length}
E_b = \left\lceil \frac{1}{2} \log_2 \left( 1 + \frac{4 \log (M/\alpha)}{\log M}( 2^{b-1} - 1) \right) \right\rceil   + 1,
\end{equation}
and $\gamma_b=\frac{\alpha}{M} \sqrt{\frac{8 K}{2^{b-1} A_{\max}}}$, $b\geq 1$. 

In a practical implementation, at the time point $t=T_b$, instead of initializing all expert algorithms using uniform distribution, we can initialize them with the output of the meta-algorithm for $t=T_b-1$.
Besides, since the gradient norm is usually decreasing, we can estimate $A_{\max}$ by adding a pre-training phase where we broadcast the initial model parameter $w^0$ to all devices before the start of the training, and collect the returned $\Vert g^0_m \Vert^2_2$ from all responsive devices, which we denote as $S^0$. Then we can estimate $A_{\max}$ by $\hat{A}_{\max} = \max_{m \in S^0} \lambda_m \Vert g^0_m \Vert^2_2$.

Adaptive-Doubling-OSMD Sampler is detailed in Algorithm~\ref{alg:adap-OSMD-sampler-doub-trick}.
From the computational perspective, by the proof of Theorem~\ref{thm:adap-OSMD-sampler-doub-trick}, Algorithm~\ref{alg:adap-OSMD-sampler-doub-trick} needs to run Step~\ref{alg:expert-solve-mirror-descent} of Algorithm~\ref{alg:adap-OSMD-sampler-meta} for a total number of $O(T \vert \mathcal{E} \vert^2)=O(T \floor{\log_2 T})$ times. Therefore, the computational complexity of Adaptive-Doubling-OSMD Sampler is asymptotically the same as that of Adaptive-OSMD Sampler, while it increases by only a $\log(T)$ factor compared to OSMD Sampler. The following theorem provides a bound on the dynamic regret for Adaptive-Doubling-OSMD Sampler.

\begin{theorem}\label{thm:adap-OSMD-sampler-doub-trick}
Suppose the training is stopped after $T$ iterations.
Let $\hat{p}^{1:T}$ be the output of Algorithm~\ref{alg:adap-OSMD-sampler-doub-trick}, where $\punif$ is used in Step~\ref{line:alg4-re}.
Then for any comparator sequence $q^{1:T}$, where $q^t$ is allowed to be random, such that $q^t \in \mathcal{A}$ for all $t \in [T]$ and $\mathbb{E} [\text{TV}(q^{1:T})] \leq \beta$, we have
\begin{multline*}
\text{D-Regret}_T (q^{1:T}) \leq \frac{ 6 M^3 A_{\max} }{ (\sqrt{2}-1) K \alpha^3 } \sqrt{ T \left[ \frac{1}{2}\log M + \beta \log \left( M / \alpha \right) \right] } \\ 
+ \frac{2 M}{(\sqrt{2}-1)\alpha} \sqrt{ \frac{ T A_{\max} }{ 8 K } } (1 + 2 \log E )  .
\end{multline*}
\end{theorem}
\begin{proof}
See Appendix~\ref{sec:proof-thm-5}.
\end{proof}

Compare Theorem~\ref{thm:adap-OSMD-sampler-doub-trick} with Theorem~\ref{thm:thm-dap-OSMD-sampler}, we see that the regret bound of Adaptive-Doubling-OSMD has the same order as that of Adaptive-OSMD Sampler. However, Adaptive-Doubling-OSMD Sampler does not need to know $T$ in advance.
Mimicking the proof of Theorem~\ref{thm:adp-osmd-opt-thm}, we can also show the optimization guarantees on Adaptive-Doubling-OSMD with mini-batch SGD and FedAvg, which is basically the same as Theorem~\ref{thm:adp-osmd-opt-thm}, thus is omitted here.

\begin{algorithm}[t]
\caption{Adaptive-OSMD Sampler with Doubling Trick (Adaptive-Doubling-OSMD)}
\label{alg:adap-OSMD-sampler-doub-trick}
\begin{algorithmic}[1]
\STATE {\bfseries Input:} Paramter $\alpha$ and $A_{\max}$.
\STATE {\bfseries Output:} $\hat{p}^t$ for $t=1,\dots,T$.
\WHILE{True}{
\STATE Set $\mathcal{E}_b$ as in \eqref{eq:doub-trick-set}. 
\STATE Let $\gamma_b=\frac{\alpha}{M} \sqrt{\frac{8 K}{2^{b-1} A_{\max}}}$.
\STATE Obtain $\{\hat{p}^t \}^{2^b-1}_{t=2^{b-1}}$ from Algorithm~\ref{alg:adap-OSMD-sampler-meta} with parameters: $\gamma_b$, $\mathcal{E}_b$, $\alpha$, the number of iterations $2^{b-1}$, and the initial distribution $\punif$ or $\hat{p}^{2^{b-1}-1}$ (when $b>1$). \label{line:alg4-re}
\IF{Training Process is Converged}
\STATE Break.
\ENDIF
\STATE Let $b \leftarrow b + 1$.
}
\ENDWHILE
\end{algorithmic}
\end{algorithm}

\subsection{Adaptive Sampling Without Replacement}
\label{sec:samp-wo-replace}

\begin{algorithm}[ht]
\caption{Adaptive sampling without replacement}
\label{alg:adap-samp-wo-replace}
\begin{algorithmic}[1]
\STATE {\bfseries Input:} $w^1$ and $\hat{p}^1$.
\FOR{$t=1,2,\dots,T-1$}{
\STATE Let $\hat{p}^{t}_{(1)}=\hat{p}^t$ and sample $m^t_1$ from $[M]$ by $\hat{p}^{t}_{(1)}$.
\FOR{$k=2,\cdots,K$}{
\STATE \COMMENT{Design the sampling distribution for sampling the $k$-th client \\in the $t$-th round}
\STATE Construct $\hat{p}^{t}_{(k)}$ by letting 
\begin{equation*}
\hat{p}^{t}_{(k),m} = 
\left\{
\begin{array}{ll}
\left( 1 - \sum^{k-1}_{l=1} \hat{p}^{t}_{m^t_{l} }   \right)^{-1} \hat{p}^{t}_m &  \text{if } m \in [M] \backslash \{ m^t_1, \dots, m^t_{k-1} \} \\
0 & \text{otherwise.}
\end{array}
\right.
\end{equation*}
\STATE \COMMENT{Sample the $k$-th client}
\STATE Sample $m^t_k$ from $[M] \backslash  \{ m^t_1, \dots, m^t_{k-1} \}$ by $\hat{p}^{t}_{(k)}$.
}
\ENDFOR
}
\STATE Let $S^t=\{m^t_1,\cdots,m^t_K\}$. 
\STATE The server broadcasts the model parameter $w^t$ to clients in $S^t$. 
\STATE The clients in $S^t$ compute and upload the set of local gradients $\left\{g^t_{m^t_1},\cdots,g^t_{m^t_K}\right\}$.
\STATE \COMMENT{Construct global gradient estimate}
\STATE Let $g^t_{(1)} = \lambda^t_{m^t_1} g^t_{m^t_1}/\hat{p}^{t}_{(1),m^t_1}$.
\FOR{$k=2,\cdots,K$}{
\STATE Let $g^t_{(k)} = \lambda^t_{m^t_{k}}  g^t_{m^t_k} / \hat{p}^{t}_{(k),m^t_k} + \sum^{k-1}_{l=1} \lambda^t_{m^t_{l}} g^t_{m^t_{l}}$.
}
\ENDFOR
\STATE Let $\tilde{g}^t = K^{-1} \sum^K_{k=1} g^t_{(k)}$. \label{eq:glad_no_replace}
\STATE \COMMENT{Update the model weight based on the global gradient estimate}
\STATE Obtain the updated model parameter $w^{t+1}$ using $w^{t}$ and $\tilde{g}^t$.
\STATE \COMMENT{Update sampling distribution}
\STATE Let $a^t_m=\lambda^2_m \Vert g^t_m \Vert^2$ for $m \in S^t$. 
\STATE Input $\{a^t_m\}_{m \in S^t}$ into Adaptive-OSMD Sampler to get $\hat{p}^{t+1}$.
\ENDFOR
\end{algorithmic}
\end{algorithm}

In the discussion so far, we have assumed that the set $S^t$ is obtained by sampling with replacement from $p^t$. When $K$ is relatively large compared to $M$ and $p^t$ is far from uniform distribution, sampling without replacement can be more efficient than sampling with replacement. However, when sampling without replacement using $p^t$, the variance reduction loss does not have a clean form as in~\eqref{eq:var-reduc-loss}. As a result, an online design of the sampling distribution is more challenging. In this section, we discuss how to use the sampling distribution obtained by Adaptive-OSMD Sampler to sample clients without replacement, following the approach taken in~\cite{el2020adaptive}.

The detailed sampling procedure is described in Algorithm~\ref{alg:adap-samp-wo-replace}. We still use Adaptive-OSMD Sampler to update the sampling distribution. However, we use the designed sampling distribution in a way that no client is chosen twice. Furthermore, Step~\ref{eq:glad_no_replace} of Algorithm~\ref{alg:adap-samp-wo-replace} constructs the gradient estimate with the following properties.

\begin{proposition}[Proposition 3 of~\cite{el2020adaptive}]
\label{prop:propert-no-replacement}
Let $\hat{p}^t=p$ and let $\tilde{g}^t$ be as in Step~\ref{eq:glad_no_replace} of Algorithm~\ref{alg:adap-samp-wo-replace}. Note that $\tilde{g}^t = \tilde{g}^t(p)$ depends on $p$. Recall that $J^t=\sum^M_{m=1}\lambda_m g^t_m$. We have
\[
\mathbb{E}_{S^t}\left[ \tilde{g}^t \right] = J^t
\qquad 
\text{and}
\qquad
\arg\min_{p \in \mathcal{P}_{M-1}} \mathbb{E}_{S^t}\left[ \Vert \tilde{g}^t - J^t \Vert^2_2 \right] = \arg\min_{p \in \mathcal{P}_{M-1}} l_t(p),
\]
where $l_t(\cdot)$ is defined in~\eqref{eq:var-reduc-loss} and the expectation is taken over $S^t$.
\end{proposition}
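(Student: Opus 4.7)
The plan is to establish both assertions using the sequential/martingale structure of the sampling scheme. Introduce the filtration $\mathcal{F}_k := \sigma(m^t_1, \ldots, m^t_k)$, the history set $M_{k-1} := \{m^t_1, \ldots, m^t_{k-1}\}$, the accumulated deterministic offset $c_{k-1} := \sum_{l=1}^{k-1} \lambda_{m^t_l} g^t_{m^t_l}$, and the importance-weighted increment $\mu_k := \lambda_{m^t_k} g^t_{m^t_k}/\hat{p}^t_{(k), m^t_k}$, so that $g^t_{(k)} = \mu_k + c_{k-1}$.

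For the unbiasedness claim, I would first show $\mathbb{E}[g^t_{(k)} \mid \mathcal{F}_{k-1}] = J^t$ for every $k \in [K]$. Conditional on $\mathcal{F}_{k-1}$, the draw $m^t_k$ has distribution $\hat{p}^t_{(k), m} = p_m/(1 - \sum_{l<k} p_{m^t_l})$ supported on $[M]\setminus M_{k-1}$; this normalization cancels the inverse weight, giving
\begin{equation*}
\mathbb{E}[\mu_k \mid \mathcal{F}_{k-1}] = \sum_{m \notin M_{k-1}} \lambda_m g^t_m = J^t - c_{k-1}.
\end{equation*}
Adding the $\mathcal{F}_{k-1}$-measurable term $c_{k-1}$ yields $J^t$, and averaging over $k$ with the tower property gives $\mathbb{E}[\tilde{g}^t] = J^t$.

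For the variance-minimizer identity, Part~1 makes $v^t_k := g^t_{(k)} - J^t$ a martingale difference sequence with respect to $\{\mathcal{F}_k\}$, so all cross-terms in $\|K(\tilde{g}^t - J^t)\|^2 = \|\sum_k v^t_k\|^2$ vanish under iterated expectation, leaving
\begin{equation*}
K^2 \mathbb{E}\bigl[\|\tilde{g}^t - J^t\|^2_2\bigr] = \sum_{k=1}^K \mathbb{E}\bigl[\mathbb{V}[\mu_k \mid \mathcal{F}_{k-1}]\bigr].
\end{equation*}
A direct second-moment calculation using $\hat{p}^t_{(k), m} = p_m/(1-P_{k-1})$ with $P_{k-1} := \sum_{l<k} p_{m^t_l}$ yields
\begin{equation*}
\mathbb{V}[\mu_k \mid \mathcal{F}_{k-1}] = (1 - P_{k-1}) \sum_{m \notin M_{k-1}} \frac{a^t_m}{p_m} - \Bigl\|\sum_{m \notin M_{k-1}} \lambda_m g^t_m\Bigr\|^2.
\end{equation*}
Summing over $k$, taking total expectation, and following the stationarity argument in \cite{el2020adaptive}, the first-order KKT conditions on the interior of $\mathcal{P}_{M-1}$ reduce to $a^t_m/(p^{\star}_m)^2 = \text{const}$ across $m \in [M]$---the same system whose unique solution $p^{\star}_m \propto \sqrt{a^t_m}$ minimizes $l_t$.

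The main obstacle is handling the $p$-dependence that enters through the random indicator $\mathds{1}\{m \notin M_{k-1}\}$: marginal inclusion probabilities under the sequential PPSWOR sampling scheme admit no closed form for $k \geq 3$, so a direct approach is intractable. The clean route is to bypass explicit marginalization by exploiting the recursive definition of $\hat{p}^t_{(k)}$ to rewrite the expected variance in a form where the shared KKT structure with $l_t$ becomes visible algebraically, rather than through evaluating marginals one by one.
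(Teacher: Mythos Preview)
The paper does not give its own proof of this proposition; it merely cites Proposition~3 of~\cite{el2020adaptive}. So there is nothing in-paper to compare against, and your proposal has to stand on its own.

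Your unbiasedness argument is correct and complete: the renormalization $\hat p^t_{(k),m}=p_m/(1-P_{k-1})$ together with the tower property gives $\mathbb{E}[g^t_{(k)}\mid\mathcal{F}_{k-1}]=J^t$ for each $k$. The martingale-difference decomposition of $\|\tilde g^t-J^t\|_2^2$ and your formula for $\mathbb{V}[\mu_k\mid\mathcal{F}_{k-1}]$ are also right.

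The gap is the last step. You assert that the KKT conditions for minimizing $\mathbb{E}\|\tilde g^t-J^t\|_2^2$ over $p$ collapse to $a^t_m/(p^\star_m)^2=\text{const}$, but you give no argument---you simply point back to~\cite{el2020adaptive}. This is exactly the hard part, because the variance depends on $p$ not only through the explicit $a^t_m/p_m$ factors but also through the law of the random set $M_{k-1}$ and through $P_{k-1}=\sum_{l<k}p_{m^t_l}$. Already for $K=2$ the expected variance (up to additive constants) is
\[
L(p)\bigl(2-\|p\|_2^2\bigr)+\langle p,a^t\rangle-\sum_{j}p_j\bigl\|J^t-\lambda_j g^t_j\bigr\|_2^2,\qquad L(p)=\sum_m \frac{a^t_m}{p_m},
\]
and checking stationarity at $p_m\propto\sqrt{a^t_m}$ leaves the residual requirement $\lambda_m\bigl(\langle J^t,g^t_m\rangle-S\|g^t_m\|\bigr)=\text{const}$ in $m$, with $S=\sum_j\lambda_j\|g^t_j\|$. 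This fails for generic gradients (e.g.\ $M=3$, $K=2$, mutually orthogonal $\lambda_m g^t_m$ with unequal norms), so the identity of argmins, read literally, does \emph{not} follow from a direct KKT comparison. Your ``clean route'' is a description of what one hopes will happen, not a demonstration that it does. You should go back to~\cite{el2020adaptive}, check exactly what is proved there (it may be a conditional or step-wise optimality statement rather than a global argmin identity), and either reproduce that argument in full or reformulate the claim to match it.
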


From Proposition~\ref{prop:propert-no-replacement}, we see that $\tilde{g}^t$ is an unbiased stochastic gradient. Furthermore, the variance of $\tilde{g}^t$ is minimized by the same sampling distribution that minimizes the variance reduction loss in~\eqref{eq:var-reduc-loss}. Therefore, it is reasonable to use the sampling distribution generated by Adaptive-OSMD Sampler to design $\tilde{g}^t$.

Following the same simulation setup as in Section~\ref{sec:experiments}, we empirically compare sampling with replacement and sampling without replacement when used together with Adaptive-OSMD sampler. Training loss and cumulative regret are shown in Figure~\ref{fig:ada-osmd-replace-loss-regret}. We observe that using sampling with replacement results in a slightly smaller cumulative regret and a slightly better training loss. However, these differences are not significant.

\begin{figure}[t]
\centering
\includegraphics[width=0.8\textwidth]{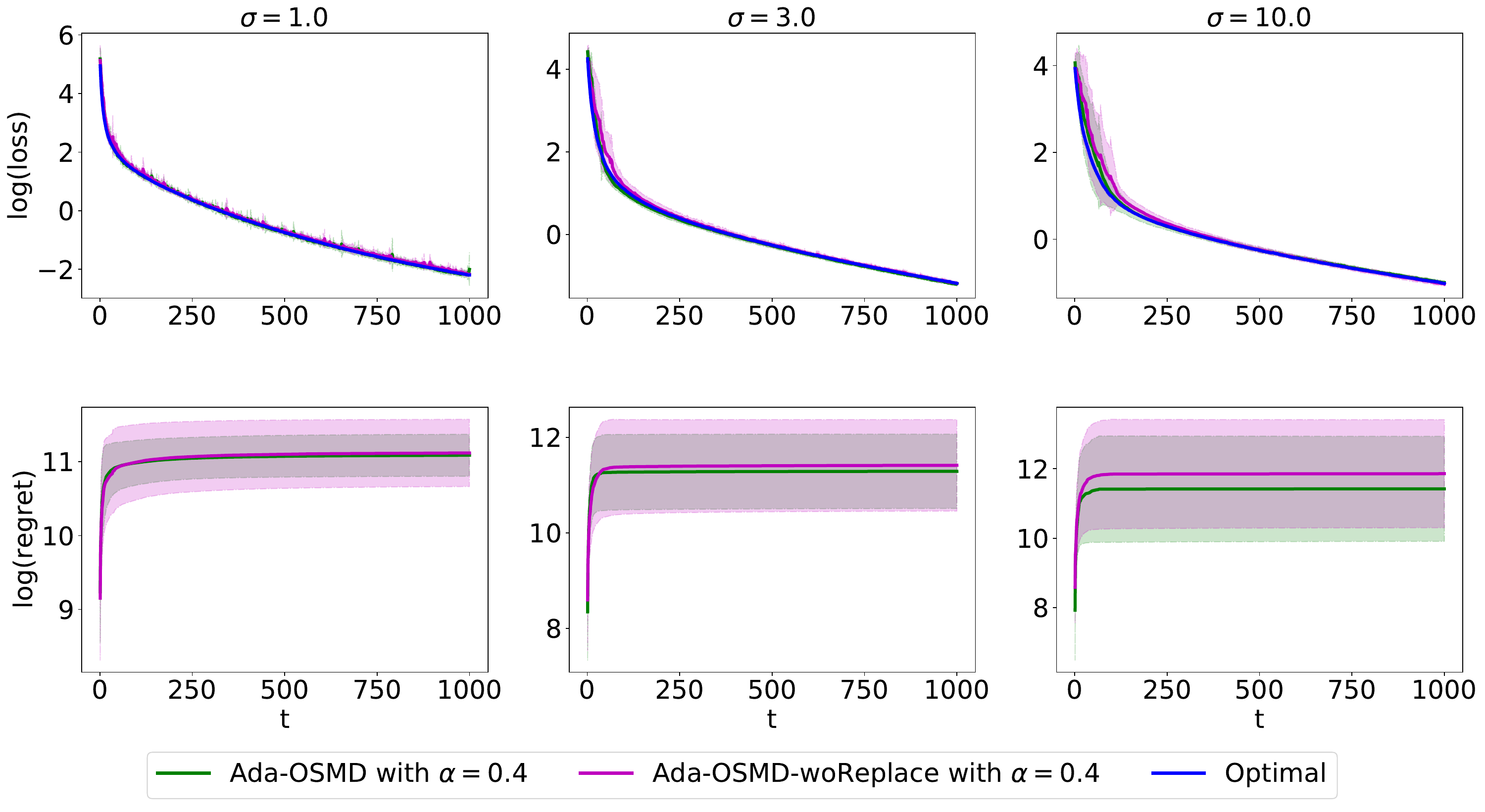}
\caption{The training loss (top row) and cumulative regret (bottom row) are compared for the Adaptive-OSMD Sampler with replacement and without replacement, across $\sigma = 1.0$, $\sigma = 3.0$, and $\sigma = 10.0$. Solid lines represent the mean values, while shaded regions indicate $\text{mean} \pm \text{standard deviation}$ across independent runs.}
\label{fig:ada-osmd-replace-loss-regret}
\end{figure}

\section{Technical proofs}

\subsection{Proof of~\eqref{eq:dyn-hetero-smaller-hetero-fix}}
\label{sec:proof-dyn-hetero-smaller-hetero-fix}

Note that
\begin{align*}
\zeta^2_T (\alpha, \beta) & = \frac{1}{T} \sup_{w^1} \min_{p^1 \in \mathcal{A}} \cdots \sup_{w^T} \min_{p^T \in \mathcal{A}} \, \sum^T_{t=1} V(p^t,w^t) \quad \text{subject to } \text{TV} \left( p^{1:T} \right) \leq \beta \\
& \leq \frac{1}{T} \sup_{w^1} \min_{p^1 \in \mathcal{A}} \cdots \sup_{w^T} \min_{p^T \in \mathcal{A}} \, \sum^T_{t=1} V(p^t,w^t) \quad \text{subject to } \text{TV} \left( p^{1:T} \right) = 0 \\
& = \frac{1}{T} \sup_{w^1} \min_{p \in \mathcal{A}} \sup_{w^2} \cdots \sup_{w^T}  \, \sum^T_{t=1}  V(p,w^t) \\
& \leq \frac{1}{T}  \min_{p \in \mathcal{A}} \sup_{w^1} \cdots \sup_{w^T}  \,  \sum^T_{t=1} V(p,w^t) \\
& \leq \frac{1}{T} \sup_{w^1} \cdots \sup_{w^T}  \,   \sum^T_{t=1} V(p_f,w^t) = \sup_w V(p_f, w) = \zeta^2_{\text{fix}}(\alpha) \leq \zeta^2_{\text{unif}}.
\end{align*}

\subsection{Proposition~\ref{prop:OSMD-solver} and Its Proof}
\label{sec:proof-prop-1}

\begin{proposition}\label{prop:OSMD-solver}
Let 
\[
\tilde{p}^{t+1}_m = p^t_m \exp \left\{ \mathcal{N}\left\{ m \in S^t \right\} \eta_t a^t_m / (K^2 (p^t_m)^3)  \right\}, \qquad m \in [M].
\]
Let $\pi:[M]\mapsto[M]$ be a permutation such that $\tilde{p}^{t+1}_{\pi(1)}\leq\tilde{p}^{t+1}_{\pi(2)}\leq\dots\leq\tilde{p}^{t+1}_{\pi(M)}$. Let $m^t_{\star}$ be the smallest integer $m$ such that
\begin{equation*}
\tilde{p}^{t+1}_{\pi(m)} \left( 1 -  \frac{m-1}{M} \alpha \right) > \frac{\alpha}{M} \sum^M_{j=m} \tilde{p}^{t+1}_{\pi(j)}.
\end{equation*}
Then
\begin{equation*}
\hat{p}^{t+1}_m =
\begin{cases}
\alpha / M & \text{if } \pi(m) < m^t_{\star} \\
\left((1 - ((m^t_{\star}-1) / M) \alpha ) \tilde{p}^{t+1}_{m}\right)/\left(\sum^M_{j=m^t_{\star}} \tilde{p}^{t+1}_{\pi(j)}\right) & \text{otherwise}.
\end{cases}
\end{equation*}
\end{proposition}

\begin{proof}
First, we show that the solution $\hat{p}^{t+1}$ in Step~\ref{alg:solve-mirror-descent} of Algorithm~\ref{alg:OSMD-sampler} can be found as
\begin{align*}
\tilde{p}^{t+1} & =  \arg\min_{ p \in \mathcal{D} } \, \eta_t \langle p,\nabla \hat{l}_t ( \hat{p}^t ; \hat{p}^t)  \rangle + D_{\Phi} \left( p \,\Vert\, \hat{p}^t  \right), \\
\hat{p}^{t+1} & =  \arg\min_{ p \in \mathcal{A} } \, D_{\Phi} \left( p \,\Vert\, \tilde{p}^{t+1}  \right).
\end{align*}
The optimality condition for $\tilde{p}^{t+1}$ implies that
\begin{equation}\label{eq:proof-prop1-1}
\eta_t \nabla \hat{l}_t ( \hat{p}^t ; \hat{p}^t) + \nabla \Phi ( \tilde{p}^{t+1} ) - \nabla \Phi ( \hat{p}^t ) = 0.
\end{equation}
By Lemma~\ref{lemma:convx-opt-cond}, the optimality condition for $\hat{p}^{t+1}$ implies that
\begin{equation*}
\langle p - \hat{p}^{t+1}, \nabla \Phi (\hat{p}^{t+1}) - \nabla \Phi (\tilde{p}^{t+1}) \rangle \geq 0, \quad \text{for all } p \in \mathcal{A}.
\end{equation*}
Combining the last two displays, we have
\begin{equation*}
\langle p - \hat{p}^{t+1}, \eta_t \nabla \hat{l}_t ( \hat{p}^t ; \hat{p}^t) + \nabla \Phi (\hat{p}^{t+1}) - \nabla \Phi (\hat{p}^{t}) \rangle \geq 0, \quad \text{for all } p \in \mathcal{A}.
\end{equation*}
By Lemma~\ref{lemma:convx-opt-cond}, this is the optimality condition for $\hat{p}^{t+1}$ to be the solution in Step~\ref{alg:solve-mirror-descent} of Algorithm~\ref{alg:OSMD-sampler}.

Note that \eqref{eq:proof-prop1-1} implies that
\begin{equation*}
- \frac{\eta_t }{K^2} \cdot \frac{ a^t_m }{ ( p^t_m )^3 } \mathcal{N} \left\{ m \in S^t \right\} + \log (\tilde{p}^{t+1}_m) - \log (\hat{p}^t_m) = 0, \qquad  m \in [M].
\end{equation*}
Therefore, 
\[
\tilde{p}^{t+1}_m = \hat{p}^t_m \exp \left( \frac{\eta_t a^t_m }{ K^2 \left( \hat{p}^t_m \right)^3 } \mathcal{N} \left\{ m \in S^t \right\} \right), \qquad  m \in [M], 
\]
and the final result follows from Lemma~\ref{lemma:mirror-solver-lemma}.

\end{proof}

\subsection{Proof of Theorem~\ref{thm:convg-thm-mini-batch}}
\label{sec:proof-convg-thm-mini-batch}

Our proof follows the similar technique used in the proof of Theorem 2.1 of~\cite{ghadimi2013stochastic} except for the novel technique of construction of a ghost subset that is drawn from $[M]$ from the comparator sampling distribution. Note that the ghost subset is only constructed for theoretical purpose and does not need to be computed in practice.

Given any $\{ w^t \}^T_{t=1} \overset{\Delta}{=}w^{1:T}$, let $q^{1:T}_s$ be the solution of the problem
\begin{equation}
\label{eq:solu-dynamic-game}
\begin{aligned}
\tilde{\zeta}(w^{1:T}) &= \frac{1}{T} \min_{p^1} \ldots \min_{p^T} \sum^T_{t=1} V(p^t,w^t) , \\
& \quad\quad \text{subject to } \text{TV} \left( p^{1:T} \right) \leq \beta \, \text{ and } \, p^t \in \sigma(w^1,\ldots,w^t),
\end{aligned}
\end{equation}
where $\sigma(w^1,\ldots,w^t)$ is the $\sigma$-algebra generated by $\{w^1,\ldots,w^t \}$.
We then have
\begin{align*}
\tilde{\zeta}(w^{1:T}) = \frac{1}{T} \sum^T_{t=1} V(q^t_s(w^t), w^t) \, \text{ and } \, \text{TV}(q^{1:T}_s) \leq \beta.
\end{align*}
Note that by the definition of $\zeta^2_T(\alpha,\beta)$, we have
\begin{equation}
\label{eq:connect-dyn-game-hetero}
\frac{1}{T} \sup_{w^1}\cdots\sup_{w^T}  \sum^T_{t=1} V(q^t_s, w^t) = \zeta^2_T(\alpha,\beta).
\end{equation}


Let $\delta^t=g^t-\nabla F(w^t)$.
Under Assumption~\ref{assump:diff-smooth}, by Lemma~\ref{lemma:lips-upper-bound}, we have
\begin{align*}
F \left( w^{t+1} \right) & \leq F \left( w^t \right) + \left\langle \nabla F \left( w^t \right), w^{t+1} - w^t \right\rangle + \frac{L}{2} \mu^2 \left\Vert g^t \right\Vert^2_2 \\
& = F \left( w^t \right) - \mu \left\langle \nabla F \left( w^t \right), g^t \right\rangle + \frac{L}{2} \mu^2 \left\Vert g^t \right\Vert^2_2 \\
& = F \left( w^t \right) - \mu \left\Vert \nabla F \left( w^t \right) \right\Vert^2_2 - \mu \left\langle \nabla F \left( w^t \right), \delta^t \right\rangle \\
& \quad + \frac{L}{2} \mu^2 \left[  \left\Vert \nabla F \left( w^t \right) \right\Vert^2_2 + 2 \left\langle \nabla F \left( w^t \right), \delta^t \right\rangle +  \left\Vert \delta^t \right\Vert^2_2  \right] \\
& = F \left( w^t \right) - \left( \mu - \frac{L}{2} \mu^2 \right) \left\Vert \nabla F \left( w^t \right) \right\Vert^2_2 - \left( \mu - L \mu^2 \right) \left\langle \nabla F \left( w^t \right), \delta^t \right\rangle + \frac{L}{2} \mu^2 \left\Vert \delta^t \right\Vert^2_2. \numberthis \label{eq:proof-opt-3}
\end{align*}
We use the notation $\mathbb{E}_{S^t}[\cdot]$ to denote the expectation taken with respect to $S^t$.
Note that $\mathbb{E}_{S^t}\left[ g^t \mid w^t, \hat{p}^t \right]=\nabla F(w^t)$, thus we have $\mathbb{E}\left[ \delta^t \mid w^t, \hat{p}^t \right]=0$, and
\begin{equation*}
\mathbb{E}\left[ \left\langle \nabla F \left( w^t \right), \delta^t \right\rangle \right] = \mathbb{E}\left[ \mathbb{E}\left[ \left\langle \nabla F \left( w^t \right), \delta^t \right\rangle \mid w^t, \hat{p}^t \right]\right] = 0. \numberthis \label{eq:proof-opt-2}
\end{equation*}

On the other hand, conditioned on $w^1,\ldots,w^t$, $q^t_s$ is a deterministic sampling distribution.
We can then assume that there is a ghost subset of clients $\tilde{S}^t$ with $\vert \tilde{S}^t \vert=K$, which is drawn from $[M]$ with sampling distribution $q^t_s$.
Recall that $J^t=(1/M)\sum^M_{m=1}g^t_m$.
Besides, we let
\begin{equation*}
\tilde{g}^t \coloneqq \frac{1}{MK} \sum_{m \in \tilde{S}^t} \frac{g^t_m}{q^t_{s,m}}.
\end{equation*}
Then we have
\begin{align*}
\mathbb{E}_{S^t} \left[  \left\Vert \delta^t \right\Vert^2_2 \mid w^1,\ldots,w^t, \hat{p}^t \right] &= \mathbb{E}_{S^t} \left[  \left\Vert g^t - J^t + J^t - \nabla F(w^t) \right\Vert^2_2 \mid w^1,\ldots,w^t, \hat{p}^t \right] \\
& = \mathbb{E}_{S^t} \left[  \left\Vert g^t - J^t \right\Vert^2_2 \mid w^1,\ldots,w^t, \hat{p}^t \right] + \left\Vert J^t - \nabla F(w^t) \right\Vert^2_2 \\
& = l_t \left( \hat{p}^t \right) - \frac{1}{K} \left\Vert J^t \right\Vert^2_2 + \left\Vert J^t - \nabla F(w^t) \right\Vert^2_2 \\
& = l_t \left( q^t_s \right) - \frac{1}{K} \left\Vert J^t \right\Vert^2_2 + \left\Vert J^t - \nabla F(w^t) \right\Vert^2_2 + l_t \left( \hat{p}^t \right) - l_t \left( q^t_s \right) \\
& = \mathbb{E}_{\tilde{S}^t} \left[  \left\Vert \tilde{g}^t - \nabla F(w^t) \right\Vert^2_2 \mid w^1,\ldots,w^t, \hat{p}^t \right] + l_t \left( \hat{p}^t \right) - l_t \left( q^t_s \right).
\end{align*}
Since
\begin{align*}
& \quad \mathbb{E} \left[  \left\Vert \tilde{g}^t - \nabla F(w^t) \right\Vert^2_2 \mid w^1,\ldots,w^t, \hat{p}^t \right]  \\
&= \mathbb{E} \left[  \left\Vert \frac{1}{MK} \sum_{m \in \tilde{S}^t} \frac{g^t_m}{ q^t_{s,m} } - \nabla F(w^t) \right\Vert^2_2 \mid w^1,\ldots,w^t, \hat{p}^t\right] \\
& \leq 2\mathbb{E} \left[  \left\Vert \frac{1}{MK} \sum_{m \in \tilde{S}^t} \frac{g^t_m}{ q^t_{s,m} } - \frac{1}{MK} \sum_{m \in \tilde{S}^t} \frac{\nabla F_m(w^t)}{ q^t_{s,m} } \right\Vert^2_2 \mid w^1,\ldots,w^t, \hat{p}^t\right] \\
& \quad + 2 \mathbb{E} \left[  \left\Vert \frac{1}{MK} \sum_{m \in \tilde{S}^t} \frac{\nabla F_m(w^t)}{ q^t_{s,m} } - \nabla F(w^t) \right\Vert^2_2 \mid w^1,\ldots,w^t, \hat{p}^t \right] \\
& = \frac{2}{M^2 K^2} \mathbb{E} \left[ \sum_{m \in \tilde{S}^t} \mathbb{E} \left[ \frac{ \left\Vert g^t_m - \nabla F_m(w^t) \right\Vert^2_2}{ (q^t_{s,m})^2 } \right] \mid w^1,\ldots,w^t, \hat{p}^t\right] \\
& \quad + \frac{2}{K} \left( \frac{1}{M^2} \sum^M_{m=1} \frac{ \left\Vert \nabla F_m(w^t) \right\Vert^2_2 }{ q^t_{s,m} } - \left\Vert \nabla F(w^t) \right\Vert^2_2 \right) \\
& = \frac{2 \sigma^2}{M^2 K B} \sum^M_{m=1} \frac{1}{ q^t_{s,m} } + \frac{2}{K} V (q^t_s, w^t) \leq \frac{2 \sigma^2}{K B \alpha} + \frac{2}{K} V (q^t_s, w^t),
\end{align*}
where the penultimate line follows that $\mathbb{E} \left[\left\Vert g^t_m - \nabla F_m(w^t) \right\Vert^2_2 \right] \leq \sigma^2 / B$ and the definition of $V(p,w)$, and the last line follows that $q^t_{s,m} \geq \alpha / M$ since $q^t_s \in \mathcal{A}$.

Thus, we have
\begin{align*}
\mathbb{E} \left[  \left\Vert \delta^t \right\Vert^2_2 \mid w^1,\ldots,w^t, \hat{p}^t \right] \leq \frac{2 \sigma^2}{K B\alpha} + \frac{2}{K} V (q^t_s, w^t) + l_t \left( \hat{p}^t \right) - l_t \left( q^t_s \right),
\end{align*}
which implies that
\begin{align*}
\sum^{T}_{t=1} \mathbb{E} \left[  \left\Vert \delta^t \right\Vert^2_2  \right] &\leq \frac{2 T \sigma^2}{K B \alpha} + \frac{2}{K} \mathbb{E} \left[ \sum^T_{t=1} V (q^t_s, w^t) \right] + \mathbb{E} \left[ \sum^{T-1}_{t=0} l_t \left( \hat{p}^t \right) - \sum^{T-1}_{t=0} l_t \left( q^t_s \right) \right] \\
& = \frac{2 T \sigma^2}{K B \alpha} + \frac{2}{K} \mathbb{E} \left[ \sum^T_{t=1} V (q^t_s, w^t) \right] + \text{D-Regret}_T(q^{1:T}_s). \\
& \leq \frac{2 T \sigma^2}{K B \alpha} + \frac{2 T \zeta^2_T(\alpha,\beta)}{K}  + \text{D-Regret}_T(q^{1:T}_s), \numberthis \label{eq:proof-opt-1}
\end{align*}
where the last inequality follows the fact that
\begin{equation*}
\mathbb{E} \left[ \frac{1}{T} \sum^T_{t=1} V (q^t_s, w^t) \right] = \frac{1}{T} \,\mathbb{E}_{w^1,\ldots,w^T} \left[ \sum^T_{t=1} V (q^t_s, w^t) \right]  \leq \frac{1}{T}  \sup_{w^1} \cdots \sup_{w^T} \, \sum^T_{t=1} V (q^t_s, w^t) = \zeta^2_T(\alpha,\beta),
\end{equation*}
and the last equality is by~\eqref{eq:connect-dyn-game-hetero}.
Combine \eqref{eq:proof-opt-3}, \eqref{eq:proof-opt-2} and \eqref{eq:proof-opt-1}, we have
\begin{align*}
& \quad \left( \mu - \frac{L}{2} \mu^2 \right) \sum^{T}_{t=1} \mathbb{E} \left[ \left\Vert \nabla F \left( w^t \right) \right\Vert^2_2 \right] \\
& \leq F(w^1) - F(w^T) + \frac{L}{2} \mu^2 \left( \frac{2 T \sigma^2}{K B \alpha} + \frac{2 T \zeta^2_T(\alpha,\beta)}{K} + \text{D-Regret}_T(q^{1:T}_s) \right)\\
& \leq D^F + \frac{L}{2} \mu^2 \left( \frac{2 T \sigma^2}{K B \alpha} + \frac{2 T \zeta^2_T(\alpha,\beta)}{K} + \text{D-Regret}_T(q^{1:T}_s) \right).
\end{align*}
Since $\mu \leq 1/L$, thus $(\mu - \frac{L}{2} \mu^2)=\mu(1-\frac{L}{2} \mu) \geq \mu/2$, thus
\begin{equation}
\label{eq:proof-opt-new-1}
\frac{1}{T} \sum^{T}_{t=1} \mathbb{E} \left[ \left\Vert \nabla F \left( w^t \right) \right\Vert^2_2 \right] \leq \frac{2 D^F}{T \mu} + L \mu \left( \frac{2 \sigma^2}{K B \alpha} + \frac{2 \zeta^2_T(\alpha,\beta)}{K} + \frac{\text{D-Regret}_T(q^{1:T}_s)}{T}  \right) .
\end{equation}

Next we bound $\text{D-Regret}_T(q^{1:T}_s)$.
Note that
\begin{equation*}
a^t_{\max} = \frac{1}{M^2} \max_{1\leq m \leq M} \Vert g^t_m \Vert^2_2 = \frac{1}{M^2} \max_{1\leq m \leq M} \left\Vert \frac{1}{B} \sum^B_{b=1}\nabla \phi ( w^t ; \xi^{t,b}_{m} ) \right\Vert^2_2 \leq \frac{G^2}{M^2}
\end{equation*}
since $\Vert \nabla \phi (w ; \xi) \Vert_2 \leq G$ for all $w$ and $\xi$, then by Theorem~\ref{thm:upp-bd-general} and the fact that $q^t_s \in \mathcal{A}$ for all $t \in [T]$ and $\text{TV}(q^{1:T}_s) \leq \beta$, we have
\begin{align*}
\text{D-Regret}_T(q^{1:T}_s) & \leq \frac{ \log M  }{ \eta } + \frac{ 2 \log (M / \alpha) }{ \eta } \mathbb{E} \left[\text{TV} \left( q^{1:T} \right) \right] + \frac{\eta M^6}{ 2 K^2 \alpha^6 } \sum^T_{t=1} \mathbb{E} \left[  \left( a^t_{\max}  \right)^2 \right] \\
& \leq \frac{ \log M  }{ \eta } + \frac{ 2 \beta \log (M / \alpha) }{ \eta } + \frac{\eta T M^2 G^4}{ 2 K^2 \alpha^6 }.
\end{align*}
Let
\begin{equation*}
\eta = \frac{ K \alpha^3 }{ M G^2 } \sqrt{ \frac{ 2\log M + 4 \beta \log (M / \alpha) }{ T } },
\end{equation*}
we then have
\begin{equation*}
\text{D-Regret}_T (q^{1:T}_s) \leq 
\frac{ \sqrt{T}  M G^2 }{ K \alpha^3 } \sqrt{ \frac{1}{2}\log M + \beta \log \left( M / \alpha \right) }.
\end{equation*}
Plug the above inequality into~\eqref{eq:proof-opt-new-1}, we have
\begin{align*}
& \quad \frac{1}{T} \sum^{T}_{t=1} \mathbb{E} \left[ \left\Vert \nabla F \left( w^t \right) \right\Vert^2_2 \right] \\ 
& \leq \frac{2 D^F}{T \mu} + L \mu \left( \frac{2 \sigma^2}{K B \alpha} + \frac{2 \zeta^2_T(\alpha,\beta)}{K} + \frac{  M G^2 }{ K \alpha^3 } \sqrt{ \frac{ \frac{1}{2}\log M + \beta \log \left( M / \alpha \right) }{T}  }  \right).
\end{align*}

Finally, let 
\begin{equation}
\label{eq:choice-mu}
\mu=\min \left\{\frac{1}{L}, \frac{1}{\sigma}\sqrt{ \frac{ D^F K B \alpha  }{ L T }}, \frac{1}{\zeta_T(\alpha,\beta)} \sqrt{  \frac{D^F K}{L T} }, \sqrt{ \frac{ D^F K \alpha^3 }{L \sqrt{T} M G^2 \sqrt{\frac{1}{2}\log M + \beta \log \left( M / \alpha \right)} } } \right\},
\end{equation}
then we have
\begin{align*}
& \quad \mathbb{E} \left[ \left\Vert \nabla F \left( w^R \right) \right\Vert^2_2 \right] = \frac{1}{T} \sum^{T}_{t=1} \mathbb{E} \left[ \left\Vert \nabla F \left( w^t \right) \right\Vert^2_2 \right] \\
& \lesssim \frac{D^F}{T} \max \left\{ L, \sigma \sqrt{ \frac{L T}{ D^F K B \alpha } }, \zeta_T(\alpha,\beta) \sqrt{ 
\frac{T L }{D^F K} }, \sqrt{ \frac{ L \sqrt{T} M G^2 \sqrt{\frac{1}{2}\log M + \beta \log \left( M / \alpha \right)}  }{ D^F K \alpha^3 } } \right\} \\
& \quad + \frac{ \sigma \sqrt{D^F L} }{ \sqrt{TKB \alpha } } + \frac{\zeta_T(\alpha,\beta) \sqrt{D^F L }}{\sqrt{T K}} + \frac{ \sqrt{ D^F L} M^{\frac{1}{2}} G }{ T^{\frac{3}{4}} K^{\frac{1}{2}} \alpha^{\frac{3}{2}} } \left( \frac{1}{2}\log M + \beta \log \left( M / \alpha \right)\right)^{\frac{1}{4}} \\
& \lesssim \frac{D^F}{T} \left( L + \sigma \sqrt{ \frac{L T}{ D^F K B \alpha } } + \zeta_T(\alpha,\beta) \sqrt{ 
\frac{T L }{D^F K} } + \sqrt{ \frac{ L \sqrt{T} M G^2 \sqrt{\frac{1}{2}\log M + \beta \log \left( M / \alpha \right)}  }{ D^F K \alpha^3 } }  \right) \\
& \quad + \frac{ \sigma \sqrt{D^F L} }{ \sqrt{TKB \alpha } } + \frac{\zeta_T(\alpha,\beta) \sqrt{D^F L }}{\sqrt{T K}} + \frac{ \sqrt{ D^F L} M^{\frac{1}{2}} G }{ T^{\frac{3}{4}} K^{\frac{1}{2}} \alpha^{\frac{3}{2}} } \left( \frac{1}{2}\log M + \beta \log \left( M / \alpha \right)\right)^{\frac{1}{4}}\\
& \lesssim \frac{D^F L}{T} + \frac{ \sigma \sqrt{D^F L} }{ \sqrt{TKB \alpha } } + \frac{\zeta_T(\alpha,\beta) \sqrt{D^F L }}{\sqrt{T K}} + \frac{ \sqrt{ D^F L} M^{\frac{1}{2}} G }{ T^{\frac{3}{4}} K^{\frac{1}{2}} \alpha^{\frac{3}{2}} } \left( \frac{1}{2}\log M + \beta \log \left( M / \alpha \right)\right)^{\frac{1}{4}}.
\end{align*}

\subsection{Proof of Theorem~\ref{thm:convg-thm-fedavg}}
\label{sec:proof-thm-fedavg}

Similar to the proof in Appendix~\ref{sec:proof-convg-thm-mini-batch}, the key novel technique is the construction of a ghost subset that is drawn from $[M]$ by the comparator sampling distribution. The ghost subset is only constructed for theoretical purpose and does not need to be computed in practice.
Similar to Appendix~\ref{sec:proof-convg-thm-mini-batch}, given any $\{ w^t \}^T_{t=1}$, let $q^{1:T}_s$ be the solution of the problem~\eqref{eq:solu-dynamic-game}.

By Assumption~\ref{assump:diff-smooth} and Lemma~\ref{lemma:lips-upper-bound}, we have
\begin{align*}
F \left( w^{t+1} \right) & \leq F \left( w^t \right) + \left\langle \nabla F \left( w^t \right), w^{t+1} - w^t \right\rangle + \frac{L}{2} \left\Vert w^{t+1} - w^t \right\Vert^2_2,
\end{align*}
where
\begin{equation*}
w^{t+1} - w^t = - \frac{\mu}{MK} \sum_{m \in S^t} \frac{1}{\hat{p}^t_m} g^t_m , \quad g^t_m = \mu_l \sum^{B-1}_{b=0} \nabla \phi ( w^{t,b}_m ; \xi^{t,b}_{m} ).
\end{equation*} 
Let $\mathcal{B}^t_m= \{ \xi^{t,0}_{m},\ldots,\xi^{t,B-1}_{m} \}$ and $\mathbb{E}_{S^t}[\cdot]$ denote the expectation taken with respect to $S^t$. We then have
\begin{align*}
\mathbb{E}_{S^t} \left[ F \left( w^{t+1} \right) \mid w^1, \ldots, w^t, \hat{p}^t \right] & \leq F \left( w^t \right) - \mu \left\langle \nabla F \left( w^t \right), \frac{1}{M} \sum^M_{m=1} g^t_m \right\rangle \\
& \quad + \frac{L \mu^2}{2} \mathbb{E}_{S^t} \left[ \left\Vert \frac{1}{K} \sum_{m \in S^t} \frac{1}{M \hat{p}^t_m} g^t_m \right\Vert^2_2 \mid w^1, \ldots, w^t, \hat{p}^t \right]. 
\end{align*}
Recall that $J^t=(1/M)\sum^M_{m=1}g^t_m$, then we have
\begin{align*}
\mathbb{E}_{S^t} \left[ F \left( w^{t+1} \right) \right] & \leq F \left( w^t \right) - \mu \left\langle \nabla F \left( w^t \right), J^t \right\rangle  \\
& \quad + \frac{L \mu^2}{2 } \mathbb{E}_{S^t} \left[ \left\Vert \frac{1}{K} \sum_{m \in S^t} \frac{1}{M \hat{p}^t_m} g^t_m - J^t + J^t\right\Vert^2_2 \mid w^1, \ldots, w^t, \hat{p}^t \right] \\
& = F \left( w^t \right) - \mu \left\langle \nabla F \left( w^t \right), J^t \right\rangle + \frac{L \mu^2}{2} \left\Vert J^t \right\Vert^2_2\\
& \quad + \frac{L \mu^2}{2 } \mathbb{E}_{S^t} \left[ \left\Vert \frac{1}{K} \sum_{m \in S^t} \frac{1}{M\hat{p}^t_m} g^t_m - J^t \right\Vert^2_2 \mid w^1, \ldots, w^t, \hat{p}^t \right].
\end{align*}

Recall that in Algorithm~\ref{alg:fed-avg-osmd}, we let $a^t_m= \frac{ \Vert g^t_m \Vert^2 }{B  (M \mu_l)^2}$, thus we have
\begin{align*}
\mathbb{E}_{S^t} \left[ \left\Vert \frac{1}{K} \sum_{m \in S^t} \frac{1}{M\hat{p}^t_m} g^t_m - J^t \right\Vert^2_2 \mid w^1, \ldots, w^t, \hat{p}^t \right] & = \frac{1}{K} \sum^M_{m=1} \frac{\left\Vert g^t_m \right\Vert^2_2}{M^2 \hat{p}^t_m  } - \frac{1}{K} \left\Vert J^t \right\Vert^2_2 \\
& = B \mu^2_l \frac{1}{K} \sum^M_{m=1} \frac{\left\Vert g^t_m \right\Vert^2_2}{ B (\mu_l M)^2 \hat{p}^t_m  } - \frac{1}{K} \left\Vert J^t \right\Vert^2_2 \\
& = B \mu^2_l \frac{1}{K} \sum^M_{m=1} \frac{a^t_m}{  \hat{p}^t_m  } - \frac{1}{K} \left\Vert J^t \right\Vert^2_2 \\
& = B \mu^2_l l_t \left( \hat{p}^t \right) - \frac{1}{K} \left\Vert J^t \right\Vert^2_2, \numberthis \label{eq:proof-thm-fedavg-0}
\end{align*}
which then implies that
\begin{equation}
\label{eq:proof-thm-fedavg-1}
\begin{aligned}
& \quad \mathbb{E}_{S^t} \left[ F \left( w^{t+1} \right) \mid w^1, \ldots, w^t, \hat{p}^t \right] \\
& \leq  F \left( w^t \right) - \mu \left\langle \nabla F \left( w^t \right), J^t \right\rangle + \frac{L \mu^2}{2 } \left\Vert J^t \right\Vert^2_2 + \frac{ L \mu^2 }{2} \left( B \mu^2_l l_t(\hat{p}^t) - \frac{1}{K} \left\Vert J^t  \right\Vert^2_2 \right),
\end{aligned}
\end{equation}
where $l_t(\cdot)$ is the variance reduction loss defined in~\eqref{eq:var-reduc-loss}.

Conditioned on $w^1,\ldots,w^t$, $q^t_s$ is a deterministic sampling distribution.
We assume that there is a ghost subset of clients $\tilde{S}^t$ with $\vert \tilde{S}^t \vert=K$, which is drawn from $[M]$ with replacement by sampling distribution $q^t_s$.
Besides, we let
\begin{equation*}
\tilde{g}^t \coloneqq \frac{1}{K} \sum_{m \in \tilde{S}^t} \frac{g^t_m}{M q^t_{s.m}},
\end{equation*}
then similar to~\eqref{eq:proof-thm-fedavg-0}, we have
\begin{equation*}
\mathbb{E}_{\tilde{S}^t} \left[ \left\Vert \tilde{g}^t - J^t   \right\Vert^2_2  \mid w^1, \ldots, w^t, \hat{p}^t \right] = B \mu^2_l l_t(q^t_s) - \frac{1}{K} \left\Vert J^t  \right\Vert^2_2 .
\end{equation*}
Combine the above equation with~\eqref{eq:proof-thm-fedavg-1}, we have
\begin{align*}
& \mathbb{E}_{S^t} \left[ F \left( w^{t+1} \right) \mid w^1, \ldots, w^t, \hat{p}^t \right] \\
& \leq F \left( w^t \right) - \mu \left\langle \nabla F \left( w^t \right), J^t \right\rangle + \frac{L \mu^2}{2 } \left\Vert J^t \right\Vert^2_2  + \frac{ L \mu^2 }{2} \left(  B \mu^2_l l_t(q^t_s) - \frac{1}{K} \left\Vert J^t  \right\Vert^2_2 \right) \\
& \quad + \frac{ B L \mu^2 \mu^2_l }{2} \left( l_t(\hat{p}^t) - l_t(q^t_s) \right) \\
& = F \left( w^t \right) - \mu \left\langle \nabla F \left( w^t \right), J^t \right\rangle + \frac{L \mu^2}{2 } \left\Vert J^t \right\Vert^2_2 +  \frac{B L \mu^2 \mu^2_l }{2} \left( l_t(\hat{p}^t) - l_t(q^t_s) \right) \\
& \quad + \frac{L \mu^2}{2 } \mathbb{E}_{\tilde{S}^t} \left[ \left\Vert \tilde{g}^t - J^t  \right\Vert^2_2  \mid w^1, \ldots, w^t, \hat{p}^t \right] \\
&= F \left( w^t \right) - \mu \left\langle \nabla F \left( w^t \right), J^t \right\rangle + \frac{L \mu^2}{2 } \mathbb{E}_{\tilde{S}^t} \left[ \left\Vert \tilde{g}^t \right\Vert^2_2 \mid w^1, \ldots, w^t, \hat{p}^t \right] +  \frac{B L \mu^2 \mu^2_l }{2} \left( l_t(\hat{p}^t) - l_t(q^t_s) \right).
\end{align*}
Let $\mathbb{E}_t[\cdot]$ denote $\mathbb{E}[ \cdot \mid w^1, \ldots, w^t, \hat{p}^t]$. We thus have
\begin{equation}
\label{eq:proof-thm-fedavg-5}
\begin{aligned}
\mathbb{E}_t \left[ F \left( w^{t+1} \right) \right] & \leq F \left( w^t \right) - \frac{\mu\mu_l}{M} \mathbb{E}_t \left[  \left\langle \nabla F \left( w^t \right), \sum^M_{m=1} \sum^{B-1}_{b=0} \nabla F_m(w^{t,b}_m) \right\rangle \right] \\
& \quad + \frac{L \mu^2}{2 } \mathbb{E}_t \left[ \left\Vert \tilde{g}^t \right\Vert^2_2  \right] +  \frac{B L \mu^2 \mu^2_l }{2} \mathbb{E}_t \left[l_t(\hat{p}^t) - l_t(q^t_s) \right].
\end{aligned}
\end{equation}

Note that for any $u,v \in \mathbb{R}^d$, we have $-\langle u,v \rangle = -\frac{1}{2} \Vert u \Vert^2 + \frac{1}{2} \Vert u - v \Vert - \frac{1}{2} \Vert v \Vert^2 \leq -\frac{1}{2} \Vert u \Vert^2 + \frac{1}{2} \Vert u - v \Vert$. Thus we have
\begin{align*}
&\quad \mathbb{E}_t \left[ - \frac{\mu \mu_l}{M}\sum^M_{m=1} \sum^{B-1}_{b=0} \left\langle \nabla F \left( w^t \right),  \nabla F_m(w^{t,b}_m) \right\rangle \right] \\
& = \mathbb{E}_t \left[ - \mu \mu_l \sum^{B-1}_{b=0} \left\langle \nabla F \left( w^t \right),  \frac{1}{M}\sum^M_{m=1}\nabla F_m(w^{t,b}_m) \right\rangle \right] \\
& \leq -\frac{\mu \mu_l B}{2}  \left\Vert \nabla F \left( w^t \right) \right\Vert^2_2 + \frac{\mu \mu_l}{2} \sum^{B-1}_{b=0} \left\Vert \nabla F \left( w^t \right) -\frac{1}{M}\sum^M_{m=1} \nabla F_m(w^{t,b}_m) \right\Vert^2_2 \\
& = -\frac{\mu \mu_l B}{2}  \left\Vert \nabla F \left( w^t \right) \right\Vert^2_2 + \frac{\mu \mu_l}{2} \sum^{B-1}_{b=0} \left\Vert  \frac{1}{M}\sum^M_{m=1} \left( \nabla F_m \left( w^t \right) - \nabla F_m(w^{t,b}_m) \right) \right\Vert^2_2 \\
& \leq -\frac{\mu \mu_l B}{2}  \left\Vert \nabla F \left( w^t \right) \right\Vert^2_2 + \frac{\mu \mu_l}{2M} \mathbb{E}_t \left[ \sum^M_{m=1} \sum^{B-1}_{b=0} \left\Vert \nabla F_m \left( w^t \right) - \nabla F_m(w^{t,b}_m) \right\Vert^2_2 \right] \\
& \leq -\frac{\mu \mu_l B}{2}  \left\Vert \nabla F \left( w^t \right) \right\Vert^2_2 + \frac{\mu \mu_l L^2}{2M} \mathbb{E}_t \left[ \sum^M_{m=1} \sum^{B-1}_{b=0} \left\Vert w^t - w^{t,b}_m \right\Vert^2_2 \right]. \numberthis \label{eq:proof-thm-fedavg-6}
\end{align*}

Besides, we also have
\begin{align*}
\mathbb{E}_t \left[ \left\Vert \tilde{g}^t \right\Vert^2_2 \right] &= \mathbb{E}_t \left[ \left\Vert \frac{1}{K} \sum_{m \in \tilde{S}^t} \frac{g^t_m}{M q^t_{s,m}} \right\Vert^2_2 \right] = \mu^2_l \mathbb{E}_t \left[ \left\Vert \frac{1}{K} \sum_{m \in \tilde{S}^t} \frac{1}{M q^t_{s,m}} \sum^{B-1}_{b=0} \nabla \phi ( w^{t,b}_m ; \xi^{t,b}_{m} ) \right\Vert^2_2 \right] \\
& = \mu^2_l \underbrace{\mathbb{E}_t \left[ \left\Vert \frac{1}{K} \sum_{m \in \tilde{S}^t} \frac{1}{M q^t_{s,m}} \sum^{B-1}_{b=0} \left( \nabla \phi ( w^{t,b}_m ; \xi^{t,b}_{m} ) - \nabla F_m (w^{t,b}_m \right) \right\Vert^2_2 \right]}_\text{$I_1$} \\
& \quad + \mu^2_l \underbrace{\mathbb{E}_t \left[ \left\Vert \frac{1}{K} \sum_{m \in \tilde{S}^t} \frac{1}{M q^t_{s,m}} \sum^{B-1}_{b=0} \nabla F_m (w^{t,b}_m) \right\Vert^2_2 \right]}_\text{$I_2$}. \numberthis \label{eq:proof-thm-fedavg-2}
\end{align*}

To bound $I_1$, note that
\begin{align*}
I_1 & = \frac{1}{K} \sum^M_{m=1} \frac{1}{M^2 q^t_{s,m} }  \mathbb{E}_t \left[ \left\Vert \sum^{B-1}_{b=0} \nabla \phi ( w^{t,b}_m ; \xi^{t,b}_{m} ) - \nabla F_m (w^{t,b}) \right\Vert^2_2 \right] \\
& = \frac{1}{K} \sum^M_{m=1} \frac{1}{M^2 q^t_{s,m} }  \sum^{B-1}_{b=0}\mathbb{E}_t \left[  \left\Vert \nabla \phi ( w^{t,b}_m ; \xi^{t,b}_{m} ) - \nabla F_m (w^{t,b}) \right\Vert^2_2 \right] \\
& \leq \frac{\sigma^2 B}{M^2K} \sum^M_{m=1} \frac{1}{ q^t_{s,m} }.
\end{align*}
Since we have $q^t_{s,m} \geq \alpha/M$, we then have $I_1 \leq \frac{\sigma^2 B}{K \alpha}$.
We then give an upper bound on $I_2$. Note that
\begin{align*}
I_2 & \leq 2 \mathbb{E}_t \left[  \sum^M_{m=1} \frac{1}{M^2 q^t_{s,m}}  \left\Vert \sum^{B-1}_{b=0} \left( \nabla F_m (w^{t,b}_m) - \nabla F_m(w^t)  \right)  \right\Vert^2_2  \right] \\
& \quad + 2 B^2 \mathbb{E}_t \left[ \left\Vert \frac{1}{K} \sum_{m \in \tilde{S}^t} \frac{1}{M q^t_{s,m}} \nabla F_m(w^t) - \nabla F (w^t)  \right\Vert^2_2 \right] +2 B^2 \left\Vert \nabla F (w^t) \right\Vert^2_2 \\
& \leq 2 B \mathbb{E}_t \left[  \sum^M_{m=1} \frac{1}{M^2 q^t_{s,m}}   \sum^{B-1}_{b=0} \left\Vert  \nabla F_m (w^{t,b}_m) - \nabla F_m(w^t)   \right\Vert^2_2  \right] + 2 B^2 \left\Vert \nabla F (w^t) \right\Vert^2_2 \\
& \quad + \frac{2 B^2}{K} \left( \frac{1}{M^2} \sum^M_{m=1} \frac{1}{q^t_{s,m}} \left\Vert \nabla F_m(w^t)  \right\Vert^2_2 -  \left\Vert \nabla F(w^t)  \right\Vert^2_2 \right) \\
& \leq 2 B L^2 \mathbb{E}_t \left[  \sum^M_{m=1} \frac{1}{M^2 q^t_{s,m}}   \sum^{B-1}_{b=0} \left\Vert  w^{t,b}_m - w^t   \right\Vert^2_2  \right] + 2 B^2 \left\Vert \nabla F (w^t) \right\Vert^2_2 + \frac{2 B^2 }{K} \mathbb{E}_t \left[ V(q^t_s, w^t) \right] , \numberthis \label{eq:proof-thm-fedavg-4}
\end{align*}
where the first three inequalities follow Jensen's inequality and Lemma~\ref{lemma:relax-triangle-ineq}, the fourth inequality follows the definition of $V(p,w)$ in~\eqref{eq:Vpw}, and the final inequality follows Assumption~\ref{assump:diff-smooth}.
Finally, since $q^t_{s,m} \geq \alpha/M$, we have
\begin{equation}
\label{eq:proof-thm-fedavg2-4}
I_2 \leq \frac{2 B L^2}{M\alpha} \mathbb{E}_t \left[  \sum^M_{m=1} \sum^{B-1}_{b=0} \left\Vert  w^{t,b}_m - w^t   \right\Vert^2_2  \right] + 2 B^2 \left\Vert \nabla F (w^t) \right\Vert^2_2 + \frac{2 B^2 }{K} \mathbb{E}_t \left[ V(q^t_s, w^t) \right].
\end{equation}

Combine \eqref{eq:proof-thm-fedavg-2}---\eqref{eq:proof-thm-fedavg2-4}, we then have
\begin{align*}
\frac{L \mu^2 }{2} \mathbb{E}_t \left[ \left\Vert \tilde{g}^t \right\Vert^2_2 \right] & \leq \frac{\mu^2 \mu^2_l  L \sigma^2 B}{2 K \alpha} + \frac{\mu^2 \mu^2_l  L^3 B}{M \alpha}  \mathbb{E}_t \left[  \sum^M_{m=1}  \sum^{B-1}_{b=0} \left\Vert  w^{t,b}_m - w^t   \right\Vert^2_2  \right] \\
& \quad + \mu^2 \mu^2_l  B^2 L \left\Vert \nabla F(w^t)  \right\Vert^2_2 + \frac{\mu^2 \mu^2_l  L B^2}{K} \mathbb{E}_t \left[ V(q^t_s, w^t) \right].
\end{align*}

Combine the above equation with \eqref{eq:proof-thm-fedavg-5} and \eqref{eq:proof-thm-fedavg-6}, we have

\begin{align*}
\mathbb{E}_t \left[ F \left( w^{t+1} \right) \right] & \leq F \left( w^t \right) - \frac{\mu \mu_l B}{2} \left( 1 - 2 \mu \mu_l B L \right) \left\Vert \nabla F(w^t)  \right\Vert^2_2 \\
& \quad + \frac{\mu \mu_l L^2}{M} \left( \frac{1}{2} + \frac{\mu \mu_l L B}{\alpha} \right) \mathbb{E}_t \left[  \sum^M_{m=1} \sum^{B-1}_{b=0} \left\Vert  w^{t,b}_m - w^t   \right\Vert^2_2  \right] \\
& \quad + \frac{\mu^2 \mu^2_l L \sigma^2 B}{2 K \alpha} + \frac{\mu^2 \mu^2_l L B^2}{K} \mathbb{E}_t \left[ V(q^t_s, w^t) \right] + \frac{B L \mu^2 \mu^2_l }{2} \mathbb{E}_t \left[l_t(\hat{p}^t) - l_t(q^t_s) \right].
\end{align*}
By letting $\tilde{\mu}=\mu \mu_l$ and $\tilde{\mu} \leq \frac{1}{4 B L}$, we have
\begin{equation}
\label{eq:proof-thm-fedavg-8}
\begin{aligned}
\mathbb{E}_t \left[ F \left( w^{t+1} \right) \right] & \leq F \left( w^t \right) - \frac{\tilde{\mu} B}{4} \left\Vert \nabla F(w^t)  \right\Vert^2_2 \\
& \quad + \tilde{\mu} L^2 \left( \frac{1}{2} + \frac{1}{4\alpha} \right) \mathbb{E}_t \left[ \frac{1}{M} \sum^M_{m=1} \sum^{B-1}_{b=0} \left\Vert  w^{t,b}_m - w^t   \right\Vert^2_2  \right] \\
& \quad + \frac{\tilde{\mu}^2 B L \sigma^2}{2 K \alpha} + \frac{\tilde{\mu}^2 B^2 L}{K} \mathbb{E}_t \left[ V(q^t_s, w^t) \right] + \frac{B L \mu^2 \mu^2_l}{2} \mathbb{E}_t \left[l_t(\hat{p}^t) - l_t(q^t_s) \right].
\end{aligned}
\end{equation}

Following Lemma 8 of~\cite{karimireddy2020scaffold}, we then show the following claim
\begin{equation}
\label{eq:proof-thm-fedavg-7}
\mathbb{E}_t \left[ \frac{1}{M} \sum^M_{m=1} \sum^{B-1}_{b=0} \left\Vert  w^{t,b}_m - w^t   \right\Vert^2_2  \right] \leq 8 B^3 \mu^2_l \zeta^2_{\text{unif}} + 8 B^3 \mu^2_l \left\Vert \nabla F(w^t)  \right\Vert^2_2 + 4 B^2 \mu^2_l \sigma^2.
\end{equation}
To show the above inequality, first note that when $b=0$, we have $w^{t,0}_m=w^t$, thus we have $\mathbb{E}_t [ \Vert  w^{t,0}_m - w^t   \Vert^2_2 ]=0$. Besides, for $1 \leq b \leq B - 1$, we have
\begin{align*}
& \quad \mathbb{E}_t \left[ \left\Vert w^{t,b}_m - w^t \right\Vert^2_2 \right] \\
& = \mathbb{E}_t \left[ \left\Vert w^{t,b-1}_m - \mu_l \nabla \phi \left( w^{t,b-1}_m ; \xi^{t,b-1}_m \right) - w^t \right\Vert^2_2 \right] \\
& = \mathbb{E}_t \left[ \left\Vert w^{t,b-1}_m - \mu_l \nabla F_m \left( w^{t,b-1}_m  \right) - w^t \right\Vert^2_2 \right] \\
& \quad + \mu^2_l \mathbb{E}_t \left[ \left\Vert  \nabla \phi \left( w^{t,b-1}_m ; \xi^{t,b-1}_m \right) - \nabla F_m \left( w^{t,b-1}_m  \right) \right\Vert^2_2 \right] \\
& \leq \left( 1 + \frac{1}{B-1} \right) \mathbb{E}_t \left[ \left\Vert w^{t,b-1}_m  - w^t \right\Vert^2_2 \right] + B \mu^2_l \mathbb{E}_t \left[ \left\Vert  \nabla F_m \left( w^{t,b-1}_m  \right) \right\Vert^2_2 \right] + \mu^2_l \sigma^2 \\
& \leq \left( 1 + \frac{1}{B-1} \right) \mathbb{E}_t \left[ \left\Vert w^{t,b-1}_m  - w^t \right\Vert^2_2 \right] + 2 B \mu^2_l \left\Vert \nabla F_m \left( w^{t,b-1}_m  \right) - \nabla F_m (w^t) \right\Vert^2_2 \\
& \quad + 2 B \mu^2_l \left\Vert  \nabla F_m (w^t) \right\Vert^2_2 + \mu^2_l \sigma^2 \\
& \leq \left( 1 + \frac{1}{B-1} + 2 B \mu^2_l L^2 \right) \mathbb{E}_t \left[ \left\Vert w^{t,b-1}_m  - w^t \right\Vert^2_2 \right] + 2 B \mu^2_l \left\Vert  \nabla F_m (w^t) \right\Vert^2_2 + \mu^2_l \sigma^2,
\end{align*}
where the first inequality follows Assumption~\ref{assump:local-sg} and Lemma~\ref{lemma:relax-triangle-ineq}, the second inequality follows Jensen's inequality, the third inequality follows Assumption~\ref{assump:diff-smooth}. Let $\mu_l \leq \frac{1}{\sqrt{2} B L}$, then we have $2 B \mu^2_l L^2 \leq \frac{1}{B} \leq \frac{1}{B-1}$ and
\begin{equation*}
\mathbb{E}_t \left[ \left\Vert w^{t,b}_m - w^t \right\Vert^2_2 \right] \leq \left( 1 + \frac{2}{B-1}  \right) \mathbb{E}_t \left[ \left\Vert w^{t,b-1}_m  - w^t \right\Vert^2_2 \right] + 2 B \mu^2_l \left\Vert  \nabla F_m (w^t) \right\Vert^2_2 + \mu^2_l \sigma^2.
\end{equation*}
By induction, we then have
\begin{equation*}
\mathbb{E}_t \left[ \left\Vert w^{t,b}_m - w^t \right\Vert^2_2 \right] \leq \left( 2 B \mu^2_l \left\Vert  \nabla F_m (w^t) \right\Vert^2_2 + \mu^2_l \sigma^2 \right) \sum^{b-1}_{\tau=0} \left( 1 + \frac{2}{B-1} \right)^\tau
\end{equation*}
Since
\begin{align*}
\sum^{b-1}_{\tau=0} \left( 1 +  \frac{2}{B-1} \right)^\tau &= \frac{\left( 1 + \frac{2}{B-1} \right)^{b} - 1  }{  \left( 1 + \frac{2}{B-1} \right) - 1  } = \frac{B-1}{2} \left\{ \left( 1 + \frac{2}{B-1} \right)^{b} - 1  \right\} \\
& \leq \frac{B-1}{2} \left\{ \left( 1 + \frac{2}{B-1} \right)^{B-1} - 1  \right\} \leq \frac{B-1}{2} ( e^2 - 1) \leq 4 (B-1) \leq 4 B.
\end{align*}
thus we have
\begin{equation*}
\mathbb{E}_t \left[ \left\Vert w^{t,b}_m - w^t \right\Vert^2_2 \right] \leq 8 B^2 \mu^2_l \left\Vert  \nabla F_m (w^t) \right\Vert^2_2 + 4 B \mu^2_l \sigma^2,
\end{equation*}
which then implies that
\begin{align*}
& \quad \frac{1}{M} \sum^M_{m=1} \sum^{B-1}_{b=0} \mathbb{E}_t \left[ \left\Vert w^{t,b}_m - w^t \right\Vert^2_2 \right] \\
& \leq 8 B^3 \mu^2_l \frac{1}{M} \sum^M_{m=1}\left\Vert  \nabla F_m (w^t) \right\Vert^2_2 + 4 B^2 \mu^2_l \sigma^2 \\
& \leq 8 B^3 \mu^2_l \left(\frac{1}{M} \sum^M_{m=1}\left\Vert  \nabla F_m (w^t) - \nabla F(w^t) \right\Vert^2_2 +  \left\Vert  \nabla F(w^t) \right\Vert^2_2 \right) + 4 B^2 \mu^2_l \sigma^2 \\
& \leq 8 B^3 \mu^2_l \zeta^2_{\text{unif}} + 8 B^3 \mu^2_l \left\Vert  \nabla F(w^t) \right\Vert^2_2 + 4 B^2 \mu^2_l \sigma^2,
\end{align*}
where the last inequality follows the definition of $\zeta^2_{\text{unif}}$. We thus have proved~\eqref{eq:proof-thm-fedavg-7}. 

Besides, we have
\begin{align*}
& \quad \tilde{\mu} L^2 \left( \frac{1}{2} + \frac{1}{4\alpha} \right) \mathbb{E}_t \left[ \frac{1}{M} \sum^M_{m=1} \sum^{B-1}_{b=0} \left\Vert  w^{t,b}_m - w^t   \right\Vert^2_2  \right] \\
& \leq \left(4 + \frac{2}{\alpha} \right) \tilde{\mu} \mu^2_l B^3 L^2 \zeta^2_{\text{unif}} + \left(4 + \frac{2}{\alpha} \right) \tilde{\mu} \mu^2_l B^3 L^2 \left\Vert \nabla F(w^t) \right\Vert^2_2 + \left( 2 + \frac{1}{ \alpha} \right) \tilde{\mu} \mu^2_l B^2 L^2 \sigma^2.
\end{align*}
Combine the above result with~\eqref{eq:proof-thm-fedavg-8}, we have
\begin{align*}
\mathbb{E}_t \left[ F \left( w^{t+1} \right) \right] & \leq F \left( w^t \right) - \frac{\tilde{\mu} B}{4} \left( 1 - 4 \left( 4 + \frac{2}{\alpha} \right) \mu^2_l B^2 L^2 \right) \left\Vert \nabla F(w^t)  \right\Vert^2_2 \\
& \quad + \left( 4 + \frac{2}{\alpha} \right) \tilde{\mu} \mu^2_l B^2 L^2 \left( B \zeta^2_{\text{unif}} + \frac{\sigma^2}{2} \right) + \frac{ \tilde{\mu}^2 B L }{ K } \left( \frac{ \sigma^2 }{2 \alpha} + B \mathbb{E}_t \left[ V(q^t_s, w^t) \right] \right) \\
& \quad + \frac{B L \mu^2 \mu^2_l}{2} \mathbb{E}_t \left[l_t(\hat{p}^t) - l_t(q^t_s) \right].
\end{align*}
Let $\mu_l \leq \frac{1}{4 B L} \sqrt{\frac{1}{2+1/\alpha}}$, we then have
\begin{align*}
\mathbb{E}_t \left[ F \left( w^{t+1} \right) \right] & \leq F \left( w^t \right) - \frac{\tilde{\mu} B}{8} \left\Vert \nabla F(w^t)  \right\Vert^2_2  + \left( 4 + \frac{2}{\alpha} \right) \tilde{\mu} \mu^2_l B^2 L^2 \left( B \zeta^2_{\text{unif}}  + \frac{\sigma^2}{2} \right) \\
& \quad + \frac{ \tilde{\mu}^2 B L }{ K } \left( \frac{ \sigma^2 }{2 \alpha} + B \mathbb{E}_t \left[ V(q^t_s, w^t) \right]  \right)  + \frac{ B L \tilde{\mu}^2 }{2} \mathbb{E}_t \left[l_t(\hat{p}^t) - l_t(q^t_s) \right],
\end{align*}
which then implies that
\begin{align*}
\left\Vert \nabla F(w^t)  \right\Vert^2_2 & \leq \frac{ 8 }{\tilde{\mu} B } \left( F \left( w^t \right) - \mathbb{E}_t \left[ F \left( w^{t+1} \right) \right] \right) + 8 \left( 4 + \frac{2}{\alpha}  \right) \mu^2_l B^2 L^2 \left( \zeta^2_{\text{unif}}  + \frac{\sigma^2}{2 B} \right) \\
&\quad + \frac{ 8 \tilde{\mu} B L }{ K } \left( \frac{\sigma^2}{ 2 B \alpha } + \mathbb{E}_t \left[ V(q^t_s, w^t) \right]  \right) + 4 L \tilde{\mu} \mathbb{E}_t \left[l_t(\hat{p}^t) - l_t(q^t_s) \right].
\end{align*}
Taking full expectation on both sides, summing over $t=0$ to $t=T-1$ and taking average, we then have
\begin{align*}
\frac{1}{T} \sum^{T}_{t=1} \mathbb{E} \left[ \left\Vert \nabla F(w^t)  \right\Vert^2_2 \right] & \leq \frac{ 8 }{\tilde{\mu} B } \left( F (w^0) - \mathbb{E} \left[ F(w^T) \right] \right) + 8 \left( 4 + \frac{2}{\alpha}  \right) \mu^2_l B^2 L^2 \left( \zeta^2_{\text{unif}}  + \frac{\sigma^2}{2 B} \right) \\
& \quad + \frac{ 8 \tilde{\mu} B L }{ K } \left( \frac{\sigma^2}{ 2 B \alpha } + \frac{1}{T} \sum^T_{t=1} \mathbb{E}_t \left[ V(q^t_s, w^t) \right]  \right) + 4 L \tilde{\mu} \times \frac{ \text{D-Regret}_T \left( q^{1:T} \right) }{T} \\
& \leq \frac{ 8 }{\tilde{\mu} B } \left( F (w^0) - F^{\star} \right) + 8 \left( 4 + \frac{2}{\alpha}  \right) \mu^2_l B^2 L^2 \left( \zeta^2_{\text{unif}}  + \frac{\sigma^2}{2 B} \right) \\
& \quad + \frac{ 8 \tilde{\mu} B L }{ K } \left( \frac{\sigma^2}{ 2 B \alpha } + \frac{1}{T} \sum^T_{t=1} \mathbb{E}_t \left[ V(q^t_s, w^t) \right]  \right) + 4 L \tilde{\mu} \times \frac{ \text{D-Regret}_T \left( q^{1:T} \right) }{T} \\
& \leq \frac{ 8 }{\tilde{\mu} B } \left( F (w^0) - F^{\star} \right) + 8 \left( 4 + \frac{2}{\alpha}  \right) \mu^2_l B^2 L^2 \left( \zeta^2_{\text{unif}}  + \frac{\sigma^2}{2 B} \right) \\
& \quad + \frac{ 8 \tilde{\mu} B L }{ K } \left( \frac{\sigma^2}{ 2 B \alpha } + \zeta^2_T (\alpha,\beta)  \right) + 4 L \tilde{\mu} \times \frac{ \text{D-Regret}_T \left( q^{1:T} \right) }{T},
\end{align*}
where the last inequality follows the fact that
\begin{align*}
\mathbb{E} \left[ \frac{1}{T} \sum^T_{t=1} V (q^t_s, w^t) \right] = \frac{1}{T} \,\mathbb{E}_{w^1,\ldots,w^T} \left[ \sum^T_{t=1} V (q^t_s , w^t) \right] \leq \frac{1}{T}  \sup_{w^1} \cdots \sup_{w^T} \, \sum^T_{t=1} V (q^t_s, w^t) = \zeta^2_T(\alpha,\beta),
\end{align*}
and the last equality is by~\eqref{eq:connect-dyn-game-hetero}.

In summary, when $\tilde{\mu} \leq \frac{1}{4 B L}$ and $\mu_l \leq \frac{1}{4 B L} \sqrt{\frac{1}{2+1/\alpha}}$, we have
\begin{align*}
\mathbb{E} \left[ \left\Vert \nabla F(w^R)  \right\Vert^2_2 \right] & =
\frac{1}{T} \sum^{T}_{t=1} \mathbb{E} \left[ \left\Vert \nabla F(w^t)  \right\Vert^2_2 \right] \\
& \leq \frac{ 8 }{\tilde{\mu} B T } \left( F (w^0) - F^{\star} \right) + 8 \left( 4 + \frac{2}{\alpha}  \right) \mu^2_l B^2 L^2 \left( \zeta^2_{\text{unif}}  + \frac{\sigma^2}{2 B} \right) \\
& \quad +  4 \tilde{\mu} B L  \left( \frac{2 \zeta^2_T (\alpha, \beta)}{K} + \frac{\sigma^2}{K B \alpha } + \frac{ \text{D-Regret}_T \left( q^{1:T}_s \right) }{B T} \right).
\end{align*}
By letting $\mu \geq 1$, $\tilde{\mu}\leq \frac{1}{4 B L} \sqrt{\frac{1}{1+1/(2\alpha)}}$ and recall that $D^F=F (w^1) - F^{\star}$, we then have
\begin{equation}
\label{eq:proof-thm-fedavg-new-1}
\begin{aligned}
\mathbb{E} \left[ \left\Vert \nabla F(w^R)  \right\Vert^2_2 \right] & \leq \frac{ 8 D^F}{\tilde{\mu} B T } + 8 \left( 4 + \frac{2}{\alpha}  \right) \tilde{\mu}^2 B^2 L^2 \left( \zeta^2_{\text{unif}}  + \frac{\sigma^2}{2 B} \right) \\
& \quad +  4 \tilde{\mu} B L  \left( \frac{2 \zeta^2_T (\alpha, \beta)}{K} + \frac{\sigma^2}{K B \alpha } + \frac{ \text{D-Regret}_T \left( q^{1:T}_s \right) }{B T} \right).
\end{aligned}
\end{equation}

We then turn to bound the regret, note that
\begin{equation*}
g^t_m = \mu_l \sum^{B-1}_{b=0} \nabla \phi ( w^{t,b}_m ; \xi^{t,b}_{m} ),
\end{equation*}
thus by Assumption~\ref{assump:local-sg}, we have
\begin{equation*}
\Vert g^t_m \Vert^2_2 \leq B^2 \mu^2_l G^2,
\end{equation*}
which then implies that
\begin{equation*}
a^t_m= \frac{ \Vert g^t_m \Vert^2_2 }{B  (M \mu_l)^2} \leq \frac{BG^2}{M^2} \quad \text{ and } \quad a^t_{\max} \leq \frac{BG^2}{M^2}.
\end{equation*}
Thus, by Theorem~\ref{thm:upp-bd-general} and note that $q^t_s \in \mathcal{A}$ for all $t \in [T]$ and $\text{TV}(q^{1:T}_s) \leq \beta$, we have
\begin{equation*}
\text{D-Regret}_T (q^{1:T}_s) 
\leq \frac{ \log M  }{ \eta } + \frac{ 2 \beta \log (M / \alpha) }{ \eta } + \frac{\eta T M^2 B^2 G^4}{ 2 K^2 \alpha^6 }.
\end{equation*}
Let
\begin{equation*}
\eta = \frac{ K \alpha^3 }{ M B G^2 } \sqrt{ \frac{ 2 \log M + 4 \beta \log (M/\alpha) }{ T } },
\end{equation*}
we then have
\begin{equation*}
\text{D-Regret}_T (q^{1:T}_s) \leq \sqrt{T} \frac{ M B G^2 }{ K \alpha^3 } \sqrt{ \frac{1}{2} \log M + \beta \log (M/\alpha)  }.
\end{equation*}
Putting the above inequality to~\eqref{eq:proof-thm-fedavg-new-1}, we have
\begin{equation}
\label{eq:proof-thm-fedavg-new-2}
\begin{aligned}
\mathbb{E} \left[ \left\Vert \nabla F(w^R)  \right\Vert^2_2 \right] & \leq \frac{ 8 D^F}{\tilde{\mu} B T } + 8 \left( 4 + \frac{2}{\alpha}  \right) \tilde{\mu}^2 B^2 L^2 \left( \zeta^2_{\text{unif}}  + \frac{\sigma^2}{2 B} \right) \\
& \quad +  4 \tilde{\mu} B L  \left( \frac{2 \zeta^2_T (\alpha, \beta)}{K} + \frac{\sigma^2}{K B \alpha } + \frac{ M G^2 }{ K \alpha^3 } \sqrt{ \frac{ \frac{1}{2} \log M + \beta \log (M/\alpha) }{T}   } \right).
\end{aligned}
\end{equation}

When letting
\begin{equation*}
\frac{ 8 D^F}{\tilde{\mu} B T } = 8 \left( 4 + \frac{2}{\alpha}  \right) \tilde{\mu}^2 B^2 L^2 \left( \zeta^2_{\text{unif}}  + \frac{\sigma^2}{2 B} \right),
\end{equation*}
we have
\begin{equation*}
\tilde{\mu} = \frac{(D^F)^{\frac{1}{3}}}{  \left( 4 + \frac{2}{\alpha}  \right)^{\frac{1}{3}} B L^{\frac{2}{3}} \left( \zeta^2_{\text{unif}}  + \frac{\sigma^2}{2 B} \right)^{\frac{1}{3}} T^{\frac{1}{3}} },
\end{equation*}
which implies that
\begin{multline*}
\frac{ 8 D^F}{\tilde{\mu} B T } + 8 \left( 4 + \frac{2}{\alpha}  \right) \tilde{\mu}^2 B^2 L^2 \left( \zeta^2_{\text{unif}}  + \frac{\sigma^2}{2 B} \right) \leq \frac{ 16 \left( 4 + \frac{2}{\alpha}  \right)^{\frac{1}{3}} (D^F)^{\frac{2}{3}} L^{\frac{2}{3}} \left( \zeta^2_{\text{unif}}  + \frac{\sigma^2}{2 B} \right)^{\frac{1}{3}}  }{ T^{\frac{2}{3}} } \\
\leq \frac{ 16 \left( 4 + \frac{2}{\alpha}  \right)^{\frac{1}{3}} (D^F)^{\frac{2}{3}} L^{\frac{2}{3}} \left( \zeta^{\frac{2}{3}}_{\text{unif}}  + \frac{\sigma^{\frac{2}{3}}}{ (2 B)^{\frac{1}{3}} } \right)  }{ T^{\frac{2}{3}} }
\end{multline*}
On the other hand, when letting
\begin{equation*}
\frac{ 8 D^F}{\tilde{\mu} B T } = 4 \tilde{\mu} B L  \left( \frac{2 \zeta^2_T (\alpha, \beta)}{K} + \frac{\sigma^2}{K B \alpha } + \frac{ M G^2 }{ K \alpha^3 } \sqrt{ \frac{ \frac{1}{2} \log M + \beta \log (M/\alpha) }{T}} \right),
\end{equation*}
we have
\begin{equation*}
\tilde{\mu} = \frac{ \sqrt{ 2 D^F }  }{  B \sqrt{L} \sqrt{ \frac{2 \zeta^2_T (\alpha, \beta)}{K} +  \frac{\sigma^2}{K B \alpha } + \sqrt{ \frac{ \frac{1}{2} \log M + \beta \log (M/\alpha) }{T}} } \sqrt{T} },
\end{equation*}
which implies that
\begin{multline*}
\frac{ 8 D^F}{\tilde{\mu} B T } + 4 \tilde{\mu} B L  \left( \frac{2 \zeta^2_T (\alpha, \beta)}{K} + \frac{\sigma^2}{K B \alpha } + \frac{ \text{D-Regret}_T \left( q^{\star} \right) }{B T} \right) \\
\leq \frac{ 8 \sqrt{2} \sqrt{ D^F } \sqrt{ L } \sqrt{ \frac{2 \zeta^2_T (\alpha, \beta)}{K} +  \frac{\sigma^2}{K B \alpha } + \sqrt{ \frac{ \frac{1}{2} \log M + \beta \log (M/\alpha) }{T}}  } }{ \sqrt{T} } \\
\leq \frac{ 8 \sqrt{2} \sqrt{ D^F } \sqrt{ L } }{ \sqrt{T} } \left( \frac{\sqrt{2}\zeta_T (\alpha, \beta)}{\sqrt{K}} + \frac{\sigma}{\sqrt{K B \alpha}} + \left( \frac{ \frac{1}{2} \log M + \beta \log (M/\alpha) }{T} \right)^{\frac{1}{4}} \right).
\end{multline*}
Thus, when $\mu \geq 1$ and
\begin{multline*}
\tilde{\mu} = \min \left\{ \frac{1}{4 B L} \sqrt{\frac{1}{2+1/\alpha}},  \frac{(D^F)^{\frac{1}{3}}}{  \left( 4 + \frac{2}{\alpha}  \right)^{\frac{1}{3}} B L^{\frac{2}{3}} \left( \zeta^2_{\text{unif}}  + \frac{\sigma^2}{2 B} \right)^{\frac{1}{3}} T^{\frac{1}{3}} }, \right.\\ 
\left. \frac{ \sqrt{ 2 D^F }  }{  B \sqrt{L} \sqrt{ \frac{2 \zeta^2_T (\alpha, \beta)}{K} +  \frac{\sigma^2}{K B \alpha } + \sqrt{ \frac{ \frac{1}{2} \log M + \beta \log (M/\alpha) }{T} } } \sqrt{T} } \right\}, 
\end{multline*}
we then have
\begin{align*}
\mathbb{E} \left[ \left\Vert \nabla F(w^R)  \right\Vert^2_2 \right] 
\lesssim \frac{D^F L \sqrt{2 + \frac{1}{\alpha}}}{T} + \frac{ \left( 4 + \frac{2}{\alpha}  \right)^{\frac{1}{3}} (D^F)^{\frac{2}{3}} L^{\frac{2}{3}} \left( \zeta^{\frac{2}{3}}_{\text{unif}}  + \frac{\sigma^{\frac{2}{3}}}{ B^{\frac{1}{3}} } \right)  }{ T^{\frac{2}{3}} } \\
+ \frac{ \sqrt{ D^F } \sqrt{ L } }{ \sqrt{T} } \left( \frac{\zeta_T (\alpha, \beta)}{\sqrt{K}} + \frac{\sigma}{\sqrt{K B \alpha}} + \left( \frac{ \frac{1}{2} \log M + \beta \log (M/\alpha) }{T} \right)^{\frac{1}{4}} \right) \\
\lesssim \frac{D^F L \sqrt{2 + \frac{1}{\alpha}}}{T} + \frac{ \left( 4 + \frac{2}{\alpha}  \right)^{\frac{1}{3}} (D^F L)^{\frac{2}{3}} \zeta^{\frac{2}{3}}_{\text{unif}}  }{ T^{\frac{2}{3}} } + \frac{ \left( 4 + \frac{2}{\alpha}  \right)^{\frac{1}{3}} (D^F L)^{\frac{2}{3}}  \sigma^{\frac{2}{3}}  }{ B ^{\frac{1}{3}} T^{\frac{2}{3}} } \\
+ \frac{ \sqrt{ D^F L }\zeta_T (\alpha, \beta) }{ \sqrt{T K} } + \frac{ \sqrt{ D^F L } \sigma  }{ \sqrt{ T K B \alpha } } + \frac{ \sqrt{D^F L } }{ \sqrt{T} } \left( \frac{ \frac{1}{2} \log M + \beta \log (M/\alpha) }{T} \right)^{\frac{1}{4}}.
\end{align*}

\subsection{Proof of Theorem~\ref{thm:upp-bd-general}}
\label{sec:proof-thm-bd-1}

We first state a proposition that will be used to prove Theorem~\ref{thm:upp-bd-general}. The key difference between Theorem~\ref{thm:upp-bd-general} and Proposition~\ref{prop:upp-bd-general-simplex} is that in Proposition~\ref{prop:upp-bd-general-simplex} the comparator sequence lies in $\mathcal{A}$, and, as a result, there is no projection error.

\begin{proposition}\label{prop:upp-bd-general-simplex}
Suppose the conditions of Theorem~\ref{thm:upp-bd-general} hold. For any comparator sequence $q^{1:T}$ with $q^t \in \mathcal{A}$, $t\in[T]$, we have
\begin{equation*}
\text{D-Regret}_T (q^{1:T})  \leq \frac{ \log M  }{ \eta } + \frac{ 2 \log (M / \alpha) }{ \eta } \mathbb{E} \left[\text{TV} \left( q^{1:T} \right) \right] + \frac{ \eta M^6}{ 2 K^2 \alpha^6 } \sum^T_{t=1} \mathbb{E} \left[  \left( a^t_{\max}  \right)^2 \right].
\end{equation*}
\end{proposition}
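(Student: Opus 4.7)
The plan is to follow the standard dynamic-regret template for online stochastic mirror descent: derive a per-iteration inequality involving telescoping Bregman divergences, sum over $t$, handle the dynamic comparator, and then pass to expectation. For the per-step bound, I would start from convexity of $\hat{l}_t(\cdot;\hat{p}^t)$ on $\mathbb{R}^M_{++}$ (noted just after \eqref{eq:est-grad}) to write
\begin{equation*}
\hat{l}_t(\hat{p}^t;\hat{p}^t) - \hat{l}_t(q^t;\hat{p}^t) \leq \langle \nabla\hat{l}_t(\hat{p}^t;\hat{p}^t),\, \hat{p}^t - \hat{p}^{t+1}\rangle + \langle \nabla\hat{l}_t(\hat{p}^t;\hat{p}^t),\, \hat{p}^{t+1} - q^t\rangle.
\end{equation*}
The second inner product, multiplied by $\eta_t$, is bounded by $D_\Phi(q^t\Vert\hat{p}^t)-D_\Phi(q^t\Vert\hat{p}^{t+1})-D_\Phi(\hat{p}^{t+1}\Vert\hat{p}^t)$ via the three-point inequality coming from the KKT condition of the constrained mirror-descent update in Line~\ref{alg:solve-mirror-descent}. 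The first inner product is controlled by the dual-norm estimate $\eta_t Q_t \Vert\hat{p}^t-\hat{p}^{t+1}\Vert$ together with Young's inequality, and the resulting $\Vert\hat{p}^{t+1}-\hat{p}^t\Vert^2$ is absorbed into $D_\Phi(\hat{p}^{t+1}\Vert\hat{p}^t)$ through the $\rho$-strong convexity of $\Phi$ (Assumption~\ref{assump:strong-conv-Phi}), cancelling the remaining negative Bregman term. This yields
\begin{equation*}
\hat{l}_t(\hat{p}^t;\hat{p}^t) - \hat{l}_t(q^t;\hat{p}^t) \leq \frac{1}{\eta_t}\bigl[D_\Phi(q^t\Vert\hat{p}^t) - D_\Phi(q^t\Vert\hat{p}^{t+1})\bigr] + \frac{\eta_t Q_t^2}{2\rho}.
\end{equation*}

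Summing over $t$ contributes the $\sum_t\eta_t Q_t^2/\rho$ term directly (the constant is absorbed into the stated $2/\rho$). For the Bregman part I would reindex as
\begin{equation*}
\sum_{t=1}^T \frac{D_\Phi(q^t\Vert\hat{p}^t)-D_\Phi(q^t\Vert\hat{p}^{t+1})}{\eta_t} \leq \frac{D_\Phi(q^1\Vert\hat{p}^1)}{\eta_1} + \sum_{t=2}^T\Bigl[\frac{D_\Phi(q^t\Vert\hat{p}^t)}{\eta_t} - \frac{D_\Phi(q^{t-1}\Vert\hat{p}^t)}{\eta_{t-1}}\Bigr],
\end{equation*}
after dropping the non-negative term $D_\Phi(q^T\Vert\hat{p}^{T+1})/\eta_T$. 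The initial term is at most $D_{\max}/\eta_1$ since $\hat{p}^1=\punif$ and Assumption~\ref{assump:Bound-H-Q-D} supplies $D_\Phi(q^1\Vert\punif)\leq D_{\max}$. For each cross term the crucial estimate is
\begin{equation*}
D_\Phi(q^t\Vert\hat{p}^t) - D_\Phi(q^{t-1}\Vert\hat{p}^t) = \Phi(q^t)-\Phi(q^{t-1}) - \langle\nabla\Phi(\hat{p}^t),\, q^t-q^{t-1}\rangle \leq 2H\Vert q^t-q^{t-1}\Vert,
\end{equation*}
which follows from two applications of the dual-norm bound $\Vert\nabla\Phi\Vert_\star\leq H$ on $\mathcal{A}$ — once via convexity of $\Phi$ to majorize $\Phi(q^t)-\Phi(q^{t-1})$ by $\langle\nabla\Phi(q^t),q^t-q^{t-1}\rangle$, and once directly on the inner product with $\nabla\Phi(\hat{p}^t)$. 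Combining this with the nonincreasing property of $\{\eta_t\}$ (so that $1/\eta_{t-1}\leq 1/\eta_T$ in the cross summand) yields the $\frac{2H}{\eta_T}\text{TV}(q^{1:T})$ contribution.

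To finish, I take total expectation over $S^{1:T}$. Because $\hat{p}^t$ is measurable with respect to $S^{0:(t-1)}$ and $q^t$ is the deterministic comparator, unbiasedness of the loss estimator (stated just before Algorithm~\ref{alg:OSMD-sampler}) gives $\mathbb{E}[\hat{l}_t(\hat{p}^t;\hat{p}^t)]=\mathbb{E}[l_t(\hat{p}^t)]$ and $\mathbb{E}[\hat{l}_t(q^t;\hat{p}^t)]=l_t(q^t)$, so the summed left-hand side is exactly $\text{D-Regret}_T(q^{1:T})$; the right-hand side survives the expectation because the $Q_t$ bound is almost sure and the Bregman manipulation is pointwise in the sample path. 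The main technical wrinkle I foresee is the telescoping step — specifically, recovering the precise $D_{\max}/\eta_1$ coefficient on the initial Bregman term rather than the looser $D_{\max}/\eta_T$ that would come from bluntly upper-bounding $1/\eta_t$ by $1/\eta_T$ throughout. This forces the reindexing to separate the very first term before invoking monotonicity of $\eta_t$ on the cross differences, and is the step most sensitive to the exact form of the rearrangement.
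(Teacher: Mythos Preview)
Your proposal follows essentially the same route as the paper: first-order optimality of the mirror-descent step plus the Bregman three-point identity for the per-iteration inequality, strong convexity with Young's inequality to absorb the step term (your $\eta_t Q_t^2/(2\rho)$ is simply the tight version of the paper's $2\eta_t Q_t^2/\rho$), the convexity-of-$\Phi$ bound $D_\Phi(q'\Vert\hat p)-D_\Phi(q\Vert\hat p)\le 2H\Vert q'-q\Vert$ for the comparator shift, and unbiasedness of $\hat l_t$ to pass to $\text{D-Regret}_T$. The only cosmetic difference is where the comparator shift is inserted---the paper adds and subtracts $D_\Phi(q^{t+1}\Vert\hat p^{t+1})$ already at the per-step level (so the telescoping pair is $D_\Phi(q^t\Vert\hat p^t)-D_\Phi(q^{t+1}\Vert\hat p^{t+1})$ before summing), whereas you keep $D_\Phi(q^t\Vert\hat p^{t+1})$ and reindex after summing; the resulting sums are the same, and the ``technical wrinkle'' you flag about the $\eta_t$-weighted telescoping is handled identically (and just as loosely) in the paper's own argument.
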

\begin{proof}
By Lemma~\ref{lemma:convx-opt-cond} and the definition of $\hat{p}^{t+1}$ in Step~\ref{alg:solve-mirror-descent} of Algorithm~\ref{alg:OSMD-sampler}, we have
\begin{equation}\label{eq:proof-thm2-1}
\langle \hat{p}^{t+1} - q^t,  \nabla \hat{l}_t ( \hat{p}^t ; \hat{p}^t ) \rangle \leq \frac{1}{\eta} \langle  \nabla \Phi (\hat{p}^{t}) - \nabla \Phi (\hat{p}^{t+1}), \hat{p}^{t+1} - q^t \rangle.
\end{equation}
By the convexity of $\hat{l}_t ( \cdot ; \hat{p}^t )$, we have
\begin{equation*}
\hat{l}_t ( \hat{p}^t ; \hat{p}^t ) - \hat{l}_t ( q^t ; \hat{p}^t ) \leq \langle \nabla  \hat{l}_t ( \hat{p}^t ; \hat{p}^t ), \hat{p}^t - q^t  \rangle = \langle \nabla  \hat{l}_t ( \hat{p}^t ; \hat{p}^t ), \hat{p}^{t+1} - q^t  \rangle + \langle \nabla  \hat{l}_t ( \hat{p}^t ; \hat{p}^t ), \hat{p}^t - \hat{p}^{t+1}  \rangle.
\end{equation*}
Then, by \eqref{eq:proof-thm2-1}, we further have
\begin{equation*}
\hat{l}_t ( \hat{p}^t ; \hat{p}^t ) - \hat{l}_t ( q^t ; \hat{p}^t ) \leq \frac{1}{\eta} \langle  \nabla \Phi (\hat{p}^{t}) - \nabla \Phi (\hat{p}^{t+1}), \hat{p}^{t+1} - q^t \rangle + \langle \nabla  \hat{l}_t ( \hat{p}^t ; \hat{p}^t ), \hat{p}^t - \hat{p}^{t+1}  \rangle.
\end{equation*}
From the definition of $\mathcal{D}$, we have 
\[
D_{\Phi} \left(x_1 \,\Vert\, x_2 \right) = D_{\Phi} \left(x_3 \,\Vert\, x_2 \right) + D_{\Phi} \left(x_1 \,\Vert\, x_3 \right) + \langle \nabla \Phi(x_2) - \nabla \Phi(x_3), x_3 - x_1 \rangle,
\qquad x_1,x_2,x_3\in\mathcal{D}.
\]
Then
\begin{align*}
\hat{l}_t ( \hat{p}^t ; \hat{p}^t ) & - \hat{l}_t ( q^t ; \hat{p}^t ) \\
& \leq \frac{1}{\eta} \left[  D_{\Phi} \left( q^t \Vert \hat{p}^t \right) -  D_{\Phi} \left( q^t \Vert \hat{p}^{t+1} \right) - D_{\Phi} \left( \hat{p}^{t+1} \Vert \hat{p}^t \right) \right] + \langle \nabla  \hat{l}_t ( \hat{p}^t ; \hat{p}^t ), \hat{p}^t - \hat{p}^{t+1}  \rangle \\
& =  \frac{1}{\eta} \left[ D_{\Phi} \left( q^t \Vert \hat{p}^t \right) - D_{\Phi} \left( q^{t+1} \Vert \hat{p}^{t+1} \right) \right] + \frac{1}{\eta} \left[ D_{\Phi} \left( q^{t+1} \Vert \hat{p}^{t+1} \right) - D_{\Phi} \left( q^t \Vert \hat{p}^{t+1} \right)  \right] \\
& \qquad -  \frac{1}{\eta} D_{\Phi} \left( \hat{p}^{t+1} \Vert \hat{p}^t \right) + \langle \nabla  \hat{l}_t ( \hat{p}^t ; \hat{p}^t ), \hat{p}^t - \hat{p}^{t+1}  \rangle. \numberthis \label{eq:proof-thm2-2}
\end{align*}
We bound the second term in \eqref{eq:proof-thm2-2} as
\begin{align*}
 D_{\Phi} \left( q^{t+1} \Vert \hat{p}^{t+1} \right) - D_{\Phi} \left( q^t \Vert \hat{p}^{t+1} \right)  
& =  \Phi (q^{t+1}) - \Phi (q^t) - \langle \nabla \Phi (\hat{p}^{t+1}), q^{t+1} - q^t \rangle  \\
& \overset{(a)}{\leq}    \langle \nabla \Phi (q^{t+1}) -  \nabla \Phi (\hat{p}^{t+1}), q^{t+1} - q^t   \rangle \\
& \overset{(b)}{\leq}  \Vert \nabla \Phi (q^{t+1}) -  \nabla \Phi (\hat{p}^{t+1}) \Vert_{\infty} \Vert q^{t+1} - q^t \Vert_1 \\
& \overset{(c)}{\leq} 2 \log ( M /\alpha) \Vert q^{t+1} - q^t \Vert_1, \numberthis \label{eq:proof-thm2-3}
\end{align*}
where (a) follows from the convexity of $\Phi(\cdot)$, (b) follows from the fact that the dual norm of $\Vert \cdot \Vert_1$ is $\Vert \cdot \Vert_{\infty}$, and (c) follows from the following fact that
\begin{equation*}
\Vert \nabla \Phi (p) \Vert_{\infty} = \max_{1 \leq m \leq M} \vert \log(p_m) \vert \leq \log (M /\alpha ) \text{ for all } p \in \mathcal{A}.
\end{equation*}

Besides, by Pinsker's inequality, we have $D_{\Phi} \left(p \,\Vert\, q\right) \geq \frac{1}{2} \Vert p - q \Vert^2_1$ for all $p,q \in \mathcal{P}_{M-1}$.
Thus, $\Phi(\cdot)$ is $1$-strongly convex, we can bound the third and fourth term in \eqref{eq:proof-thm2-2} as
\begin{equation*}
- \frac{1}{\eta} D_{\Phi} \left( \hat{p}^{t+1} \Vert \hat{p}^t \right) + \langle \nabla  \hat{l}_t ( \hat{p}^t ; \hat{p}^t ), \hat{p}^t - \hat{p}^{t+1}  \rangle \leq - \frac{1}{2 \eta} \Vert \hat{p}^{t+1} - \hat{p}^t \Vert^2_1 + \Vert \nabla  \hat{l}_t ( \hat{p}^t ; \hat{p}^t ) \Vert_{\infty} \Vert \hat{p}^t - \hat{p}^{t+1} \Vert_1.
\end{equation*}
Since $ab\leq a^2 / (2\epsilon) + b^2 \epsilon / 2$, $a,b,\epsilon>0$, we further have
\begin{align*}
- \frac{1}{\eta} D_{\Phi} \left( \hat{p}^{t+1} \Vert \hat{p}^t \right) + \langle \nabla  \hat{l}_t ( \hat{p}^t ; \hat{p}^t ), \hat{p}^t - \hat{p}^{t+1}  \rangle \leq  \frac{\eta}{2} \Vert \nabla  \hat{l}_t ( \hat{p}^t ; \hat{p}^t ) \Vert^2_{\infty}.
\end{align*}
Let
\begin{equation*}
Q_t=M^3 a^t_{\max} / (K \alpha^3),
\end{equation*}
we then have
\begin{equation*}
\left\vert \left[ \nabla \hat{l}_t (q ; p^t ) \right]_m \right\vert = \frac{1}{K^2} \cdot \frac{ a^t_m }{ q^2_m p^t_m } \mathcal{N} \left\{ m \in S^t \right\} \leq \frac{1}{K^2} \cdot \frac{a^t_{\max}}{ \alpha^3 / M^3 } \cdot K \leq \frac{M^3 a^t_{\max} }{ K \alpha^3 } = Q_t
\end{equation*}
for all $m \in [M]$, thus we have $\Vert \nabla \hat{l}_t (q ; p^t ) \Vert_{\infty} \leq Q_t$, which then implies that
\begin{equation}
\label{eq:proof-thm2-4}
- \frac{1}{\eta} D_{\Phi} \left( \hat{p}^{t+1} \Vert \hat{p}^t \right) + \langle \nabla  \hat{l}_t ( \hat{p}^t ; \hat{p}^t ), \hat{p}^t - \hat{p}^{t+1}  \rangle \leq \frac{\eta}{2}  Q_t^2.
\end{equation}
Combining \eqref{eq:proof-thm2-2}-\eqref{eq:proof-thm2-4}, we have
\begin{equation*}
\hat{l}_t ( \hat{p}^t ; \hat{p}^t ) - \hat{l}_t ( q^t ; \hat{p}^t ) \leq \frac{D_{\Phi} \left( q^t \Vert \hat{p}^t \right)}{\eta}   - \frac{D_{\Phi} \left( q^{t+1} \Vert \hat{p}^{t+1} \right)}{\eta}  + \frac{2\log (M/\alpha)}{\eta} \Vert q^{t+1} - q^t \Vert_1  + \frac{\eta}{2} Q^2_t.
\end{equation*}
This implies that
\begin{align*}
\sum^T_{t=1}\hat{l}_t ( \hat{p}^t ; \hat{p}^t ) &- \sum^T_{t=1} \hat{l}_t ( q^t ; \hat{p}^t ) \\
& \leq  \frac{D_{\Phi} \left( q^1 \Vert \hat{p}^1 \right)}{\eta} - \frac{D_{\Phi} \left( q^{T+1} \Vert \hat{p}^{T+1} \right)}{\eta} + \frac{2\log (M/\alpha)}{\eta} \sum^T_{t=1} \Vert q^{t+1} - q^t \Vert_1 + \frac{\eta}{2} \sum^T_{t=1}  Q^2_t \\
& \leq  \frac{ D_{\Phi} \left( q^1 \Vert \hat{p}^1 \right) }{\eta} + \frac{2 \log (M/\alpha)}{\eta} \sum^T_{t=1} \Vert q^{t+1} - q^t \Vert_1 + \frac{\eta}{2} \sum^T_{t=1}  Q^2_t.
\end{align*}
Since $\hat{p}_1$ is the uniform distribution, we have that
\begin{equation*}
D_{\Phi}(q \,\Vert\, \punif ) = \log M + \sum^M_{m=1} q_m \log q_m \leq \log M \text{ for all } q \in \mathcal{P}_{M-1}.
\end{equation*}
Thus, we have
\begin{equation}
\label{eq:proof-thm2-5}
\sum^T_{t=1}\hat{l}_t ( \hat{p}^t ; \hat{p}^t ) - \sum^T_{t=1} \hat{l}_t ( q^t ; \hat{p}^t ) \leq  \frac{ \log M }{\eta} + \frac{2 \log (M/\alpha)}{\eta} \text{TV} \left( q^{1:T} \right) + \frac{\eta}{2} \sum^T_{t=1} Q^2_t
\end{equation}

Finally, note that
\begin{align*}
\mathbb{E} \left[ \sum^T_{t=1} \hat{l}_t ( \hat{p}^t ; \hat{p}^t ) - \sum^T_{t=1} \hat{l}_t ( q^t ; \hat{p}^t )  \right] & =  \sum^T_{t=1} \mathbb{E} \left[  \mathbb{E}_{S^t} \left[  \hat{l}_t ( \hat{p}^t ; \hat{p}^t ) \right] - \mathbb{E}_{S^t} \left[ \hat{l}_t ( q^t ; \hat{p}^t ) \right]  \right] \\
& =  \sum^T_{t=1} \mathbb{E} \left[ l_t ( \hat{p}^t ) -  l_t ( q^t ) \right] \\
& =  \text{D-Regret}_T (q^{1:T}).
\end{align*}
The conclusion follows by taking expectation on right hand side of~\eqref{eq:proof-thm2-5}.
\end{proof}

We are now ready to  prove Theorem~\ref{thm:upp-bd-general}.
\begin{proof}[Proof of Theorem~\ref{thm:upp-bd-general}]
For any comparator sequence $q^{1:T}$ with $q^t \in \mathcal{P}_{M-1}$, $t \in [T]$, we prove Theorem~\ref{thm:upp-bd-general} by first constructing a suitable sequence $\tilde{q}^{1:T}$ that is defined as
\begin{equation}\label{eq:proj-seq}
\tilde{q}^t_m =
\begin{cases}
\alpha/M & \text{if } q^t_m < \alpha/M, \\
  q^t_m - \omega(q^t, \alpha) \left( q^t_m - \frac{\alpha}{M} \right)  & \text{if } q^t_m \geq \alpha/M,
\end{cases}
\end{equation}
where $\omega(q^t, \alpha)$ is defined in~\eqref{eq:def-psi-alpha}. We now show that $\tilde{q}^t \in \mathcal{A}$, $t \in [T]$, by showing that $\tilde{q}^t_m \geq \alpha / M$, $m \in [M]$, and $\sum_{m\in[M]} \tilde{q}^t_m = 1$. For $m \in [M]$ such that $q^t_m < \alpha/M$, we have from~\eqref{eq:proj-seq} that $\tilde{q}^t_m=\alpha/M$. For $m \in [M]$ such that $q^t_m \geq \alpha/M$, by~\eqref{eq:proj-seq}, we have $\tilde{q}^t_m - {\alpha}/{M} = \left( 1 - \omega(q^t, \alpha) \right) \left( q^t_m - {\alpha}/{M} \right)$.
Thus, we proceed to show that $\omega(q^t, \alpha) \leq 1$. Since
\begin{multline*}
\sum^M_{m=1} q^t_m \mathds{1} \left\{ q^t_m < \frac{\alpha}{M} \right\} + \sum^M_{m=1} q^t_m \mathds{1} \left\{ q^t_m \geq \frac{\alpha}{M} \right\} = 1 \\
\geq \alpha=\sum^M_{m=1} \frac{\alpha}{M} \mathds{1} \left\{ q^t_m < \frac{\alpha}{M} \right\} + \sum^M_{m=1} \frac{\alpha}{M}\mathds{1} \left\{ q^t_m \geq \frac{\alpha}{M} \right\},
\end{multline*}
we have
\begin{equation*}
\sum^M_{m=1} \left( q^t_m - \frac{\alpha}{M} \right) \mathds{1} \left\{ q^t_m \geq \frac{\alpha}{M} \right\} \geq \sum^M_{m=1} \left( \frac{\alpha}{M} - q^t_m \right) \mathds{1} \left\{ q^t_m < \frac{\alpha}{M} \right\}.
\end{equation*}
Therefore, $0 \leq \omega(q^t, \alpha) \leq 1$. Furthermore, $\omega(q^t, 0)=1$ and $\omega(q^t, 1)=1$. Finally, we show that $\sum^M_{m=1} \tilde{q}^t_m = 1$. By~\eqref{eq:proj-seq} and the definition of $\omega(q^t, \alpha)$ in~\eqref{eq:def-psi-alpha}, we have
\begin{align*}
\sum^M_{m=1} \tilde{q}^t_m 
& = \sum^M_{m=1} \frac{\alpha}{M} \mathds{1} \left\{ q^t_m < \frac{\alpha}{M} \right\} + \sum^M_{m=1} q^t_m \mathds{1} \left\{ q^t_m \geq \frac{\alpha}{M} \right\} \\
& \quad - \omega(q^t, \alpha) \sum^M_{m=1} \left( q^t_m - \frac{\alpha}{M}  \right) \mathds{1} \left\{ q^t_m \geq \frac{\alpha}{M} \right\} \\
& = \sum^M_{m=1} \frac{\alpha}{M} \mathds{1} \left\{ q^t_m < \frac{\alpha}{M} \right\} + \sum^M_{m=1} q^t_m \mathds{1} \left\{ q^t_m \geq \frac{\alpha}{M} \right\} - \sum^M_{m=1} \left( \frac{\alpha}{M} - q^t_m \right) \mathds{1} \left\{ q^t_m < \frac{\alpha}{M} \right\} \\
& = \sum^M_{m=1} q^t_m \mathds{1} \left\{ q^t_m \geq \frac{\alpha}{M} \right\} + \sum^M_{m=1} q^t_m \mathds{1} \left\{ q^t_m < \frac{\alpha}{M} \right\} \\
& = 1.
\end{align*}
Therefore, $\tilde{q}^t \in \mathcal{A}$ for any $t \in [T]$.

Note that we then have
\begin{equation}
\label{eq:proof-thm1-1}
\text{D-Regret}_T (q^{1:T}) 
=  \mathbb{E} \left[ \sum^T_{t=1} l_t (\hat{p}^t)  - \sum^T_{t=1} l_t (\tilde{q}^t) + \sum^T_{t=1} l_t (\tilde{q}^t) - \sum^T_{t=1} l_t (q^t) \right].
\end{equation}
By Proposition~\ref{prop:upp-bd-general-simplex}, we further have that
\begin{multline}
\label{eq:proof-thm1-2} 
\mathbb{E} \left[ \sum^T_{t=1} l_t (\hat{p}^t)  - \sum^T_{t=1} l_t (\tilde{q}^t) \right] \leq 
\frac{ \log M }{\eta} + \frac{2 \log (M/\alpha) }{\eta} \mathbb{E} \left[ \text{TV} \left( q^{1:T} \right) \right] \\
+ \frac{ \eta M^6}{ 2 K^2 \alpha^6 } \sum^T_{t=1} \mathbb{E} \left[  \left( a^t_{\max}  \right)^2 \right] + \frac{2 \log (M/\alpha) }{\eta} \mathbb{E} \left[ \text{TV} \left( \tilde{q}^{1:T} \right) - \text{TV} \left( q^{1:T} \right)  \right]. 
\end{multline}
Therefore, to prove Theorem~\ref{thm:upp-bd-general}, we need to bound the terms $\sum^T_{t=1} l_t (\tilde{q}^t) - \sum^T_{t=1} l_t (q^t)$ and $\text{TV} \left( \tilde{q}^{1:T} \right) - \text{TV} \left( q^{1:T} \right)$.

We first bound $\sum^T_{t=1} l_t (\tilde{q}^t) - \sum^T_{t=1} l_t (q^t)$. 
When $q^t_m < \alpha / M$, then $1/\tilde{q}^t_m - 1/q^t_m < 0$; and when $q^t_m \geq \alpha / M$, then
\begin{equation*}
\frac{1}{\tilde{q}^t_m} - \frac{1}{q^t_m} = \frac{1}{q^t_m} \cdot \left[ \frac{1}{1-\omega(q^t, \alpha)\left( 1 - \frac{\alpha}{M q^t_m} \right)} - 1\right] = \frac{1}{q^t_m} \cdot \frac{ \omega(q^t, \alpha)\left( 1 - \frac{\alpha}{M q^t_m} \right) }{ 1-\omega(q^t, \alpha)\left( 1 - \frac{\alpha}{M q^t_m} \right) }.
\end{equation*}
Since 
\[
\omega(q^t, \alpha)\left( 1 - \frac{\alpha}{M q^t_m} \right) \leq \omega(q^t, \alpha)
\quad 
\text{and}
\quad
1-\omega(q^t, \alpha)\left( 1 - \frac{\alpha}{M q^t_m} \right)  \geq 1 - \omega(q^t, \alpha) + \frac{\omega(q^t, \alpha) \alpha}{M}
\]
as $q^t_m \leq 1$, we have
\begin{equation*}
\frac{1}{\tilde{q}^t_m} - \frac{1}{q^t_m} \leq \frac{1}{q^t_m} \cdot \frac{ \omega(q^t, \alpha) }{ 1 - \omega(q^t, \alpha) \left( 1 - \frac{\alpha}{M} \right)  }
= \frac{\phi(q^t, \alpha)}{q^t_m}.
\end{equation*}
Thus,
\begin{align*}
\sum^T_{t=1} l_t (\tilde{q}^t) - \sum^T_{t=1} l_t (q^t) & = \frac{1}{K} \sum^T_{t=1} \sum^M_{m=1} a^t_m \left( \frac{1}{\tilde{q}^t_m} - \frac{1}{q^t_m} \right) \\
& \leq \frac{1}{K} \sum^T_{t=1} \sum^M_{m=1} a^t_m \left( \frac{1}{\tilde{q}^t_m} - \frac{1}{q^t_m} \right) \mathds{1} \left\{ q^t_m \geq \frac{\alpha}{M} \right\} \\
& \leq \frac{1}{K} \sum^T_{t=1} \phi(q^t, \alpha) \sum^M_{m=1} \frac{a^t_m}{q^t_m} \mathds{1} \left\{ q^t_m \geq \frac{\alpha}{M} \right\} \\
& \leq \frac{1}{K} \sum^T_{t=1} \phi(q^t, \alpha) l_t (q^t). \numberthis \label{eq:thm1-proof-helper5}
\end{align*}

Next, we bound $\text{TV} \left( \tilde{q}^{1:T} \right) - \text{TV} \left( q^{1:T} \right)$. Note that
\begin{align*}
\text{TV} \left( \tilde{q}^{1:T} \right) & = \sum^T_{t=2} \left\Vert \tilde{q}^t - \tilde{q}^{t-1} \right\Vert_1 \\
& = \sum^T_{t=2} \left\Vert \tilde{q}^t - q^t + q^t - q^{t-1} + q^{t-1} - \tilde{q}^{t-1} \right\Vert_1 \\
& \leq \sum^T_{t=2} \left\Vert \tilde{q}^t - q^t \right\Vert_1 + \sum^T_{t=2} \left\Vert q^t - q^{t-1} \right\Vert_1 + \sum^T_{t=2} \left\Vert q^{t-1} - \tilde{q}^{t-1} \right\Vert_1 \\
& \leq \text{TV} \left( q^{1:T} \right) + 2 \sum^T_{t=1} \left\Vert \tilde{q}^t - q^t \right\Vert_1.
\end{align*}
We now upper bound $\sum^T_{t=1} \left\Vert \tilde{q}^t - q^t \right\Vert_1$. If $q^t_m < \alpha/M$, then $\vert \tilde{q}^t_m - q^t_m  \vert=\alpha/M - q^t_m$. If $q^t_m \geq \alpha/M$, by~\eqref{eq:proj-seq}, we have $\left\vert \tilde{q}^t_m - q^t_m  \right\vert = \omega(q^t, \alpha) \left( q^t_m - {\alpha}/{M} \right)$. Therefore, recalling the definition of $\psi(q^t, \alpha)$ in~\eqref{eq:def-psi-alpha}, we have
\begin{align*}
\left\Vert \tilde{q}^t - q^t \right\Vert_1 &= \sum^M_{m=1} \left( \frac{\alpha}{M} - q^t_m \right) \mathds{1} \left\{ q^t_m < \frac{\alpha}{M} \right\} + \omega(q^t, \alpha) \sum^M_{m=1} \left( q^t_m  -\frac{\alpha}{M} \right) \mathds{1} \left\{ q^t_m \geq \frac{\alpha}{M} \right\} \\
& = 2 \sum^M_{m=1} \left( \frac{\alpha}{M} - q^t_m \right) \mathds{1} \left\{ q^t_m < \frac{\alpha}{M} \right\} \\
& = 2 \psi(q^t, \alpha)
\end{align*}
and 
\begin{equation}
\label{eq:thm1-proof-helper4}
\text{TV} \left( \tilde{q}^{1:T} \right) - \text{TV} \left( q^{1:T} \right) \leq 4 \sum^T_{t=1}\psi(q^t, \alpha).
\end{equation}

Combining \eqref{eq:proof-thm1-1}, \eqref{eq:proof-thm1-2}, \eqref{eq:thm1-proof-helper5}, and \eqref{eq:thm1-proof-helper4}, and taking expectation on both sides, we obtain the final result.
\end{proof}

\subsection{Proof of Theorem~\ref{thm:thm-dap-OSMD-sampler}}
\label{sec:proof-thm-dap-OSMD-sampler}

Given a comparator sequence $q^{1:T}$, where $q^t$ is allowed to be random, such that $q^t \in \mathcal{A}$ for all $t \in [T]$ and $\mathbb{E} [\text{TV}(q^{1:T})] \leq \beta$, 
let
\begin{equation}
\label{eq:eta-star-comp}
\eta^{\star} = \frac{ K \alpha^3 }{ M^3 A_{\max} } \sqrt{ \frac{ 2\log M + 4 \beta \log (M / \alpha) }{ T  } }.
\end{equation}
The proof proceeds in two steps. First, we show that there exists an expert learning rate $\eta_e \in \mathcal{E}$ such that the regret bound for $\hat{p}^{1:T}_e$ is close to~\eqref{eq:regret-meta-pre}. That is, we show that there exists $\eta_e \in \mathcal{E}$ such that
\begin{equation}
\label{eq:proof-dap-1}
\mathbb{E} \left[ \sum^T_{t=1} \hat{l}_t (\hat{p}^t_e ; \hat{p}^t )  - \sum^T_{t=1} l_t (q^t) \right] \leq \frac{ 3 M^3 A_{\max} }{ K \alpha^3 } \sqrt{ T \left[ \frac{1}{2}\log M + \beta \log \left( M / \alpha \right)   \right] }.
\end{equation}
Note that $S^t \sim \hat{p}^t$. Second, we show that the output of meta-algorithm can track the best expert with small regret. That is, we show that
\begin{equation}
\label{eq:proof-dap-2}
\mathbb{E} \left[ \sum^T_{t=1} l_t (\hat{p}^t ) \right] - \mathbb{E} \left[ \sum^T_{t=1} \hat{l}_t (\hat{p}^t_e ; \hat{p}^t ) \right] \leq \frac{M}{\alpha} \sqrt{ \frac{T A_{\max} }{ 8 K } } (1 + 2 \log E ), \qquad e \in [E].
\end{equation}
The theorem then follows by combining \eqref{eq:proof-dap-1} and \eqref{eq:proof-dap-2}.

We first prove \eqref{eq:proof-dap-1}. Since $0 \leq \beta \leq 2 (T-1)$, we have
\begin{equation*}
\min \mathcal{E} = \frac{ K \alpha^3 }{ M^3 A_{\max} } \sqrt{ \frac{2 \log M}{ T  }  } \leq \eta^{\star} \leq \frac{ K \alpha^3 }{ M^3 A_{\max} } \sqrt{ \frac{ 2 \log M + 8 \log (M/\alpha)(T-1)  }{ T }  } \leq \max \mathcal{E},
\end{equation*}
where $\eta^{\star}$ is defined as in~\eqref{eq:eta-star-comp}. Thus, there exists $\eta_e \in \mathcal{E}$, such that $\eta_e \leq \eta^{\star} \leq 2 \eta_e$. Repeating the proof of~\eqref{eq:proof-thm2-5}, we can show that
\begin{equation*}
\sum^T_{t=1}\hat{l}_t ( \hat{p}^t_e ; \hat{p}^t ) - \sum^T_{t=1} \hat{l}_t ( q^t ; \hat{p}^t ) \leq \frac{ \log M  }{ \eta_e } + \frac{ 2 \log (M / \alpha) }{ \eta_e } \text{TV} \left( q^{1:T} \right) +  \frac{ \eta_e M^6}{ 2 K^2 \alpha^6 } \sum^T_{t=1}   \left( a^t_{\max}  \right)^2,
\end{equation*}
which then implies that
\begin{align*}
\mathbb{E} \left[\sum^T_{t=1}\hat{l}_t ( \hat{p}^t_e ; \hat{p}^t ) - \sum^T_{t=1} \hat{l}_t ( q^t ; \hat{p}^t ) \right] &\leq \frac{ \log M  }{ \eta_e } + \frac{ 2 \log (M / \alpha) }{ \eta_e } \mathbb{E} \left[\text{TV} \left( q^{1:T} \right) \right] \\
& \quad\quad +  \frac{\eta_e M^6}{ 2 K^2 \alpha^6 } \sum^T_{t=1}  \mathbb{E} \left[\left( a^t_{\max} \right)^2 \right] \\
& \leq \frac{ \log M  }{ \eta_e } + \frac{ 2 \beta \log (M / \alpha) }{ \eta_e } +  \frac{\eta_e M^6 T A^2_{\max} }{ 2 K^2 \alpha^6 } .
\end{align*}
Since $\eta^{\star}/2 \leq \eta_e \leq \eta^{\star}$, we further have
\begin{align*}
& \quad \mathbb{E} \left[\sum^T_{t=1}\hat{l}_t ( \hat{p}^t_e ; \hat{p}^t ) - \sum^T_{t=1} \hat{l}_t ( q^t ; \hat{p}^t ) \right] \\
& \leq  \frac{ 2 \log M  }{ \eta^{\star} } + \frac{ 4 \beta \log (M / \alpha) }{ \eta^{\star} }  +   \frac{\eta^{\star} M^6 T A^2_{\max} }{ 2 K^2 \alpha^6 } \\
& = \frac{ 3 M^3 A_{\max} }{ K \alpha^3 } \sqrt{ T \left[ \frac{1}{2}\log M + \beta \log \left( M / \alpha \right)   \right] }.
\end{align*}
Now, \eqref{eq:proof-dap-1} follows, since
\begin{equation*}
\mathbb{E} \left[ \sum^T_{t=1} \hat{l}_t (q^t ; \hat{p}^t )  \right] = \sum^T_{t=1} \mathbb{E} \left[ \mathbb{E}_{S^t} \left[ \hat{l}_t ( q^t ; \hat{p}^t ) \right]  \right] = \sum^T_{t=1} \mathbb{E} \left[ l_t (q^t)  \right].
\end{equation*}

We prove~\eqref{eq:proof-dap-2} next. Let
\begin{equation*}
\hat{L}^e_t = \sum^t_{s=1} \hat{l}_s ( \hat{p}^s_e ; \hat{p}^s ) \quad e \in [E], t \in [T].
\end{equation*}
Recall the update for $\theta^t_e$ in Step~\ref{step:weight-update} of Alg~\ref{alg:adap-OSMD-sampler-meta}. We have
\begin{equation*}
\theta^t_e = \frac{ \theta^1_e \exp \left( - \gamma \hat{L}^e_{t-1} \right) }{ \sum^E_{b=1} \theta^1_b \exp \left( - \gamma \hat{L}^b_{t-1} \right)  }, \qquad t = 2, \dots T.
\end{equation*}
Let $\Theta_t = \sum^E_{b=1} \theta^1_b \exp \left\{ - \gamma \hat{L}^b_t \right\}$. Then
\begin{align*}
\log \Theta_1 & = \log \left( \sum^E_{b=1} \theta^1_b  \exp \left\{ -\gamma \hat{L}^b_1 \right\}  \right) \\
\intertext{and, for $t \geq 2$, }
\log \left( \frac{\Theta_t}{\Theta_{t-1}} \right) 
& =  \log \left( \frac{ \sum^E_{b=1} \theta^1_b \exp \left\{ -\gamma \hat{L}^b_{t-1} \right\}  \exp \left\{  - \gamma \hat{l}_t (\hat{p}^t_b ; \hat{p}^t) \right\}  }{ \sum^E_{b=1} \theta^1_b \exp \left\{ -\gamma \hat{L}^b_{t-1} \right\} } \right) \\
& =  \log \left( \sum^E_{b=1} \theta^t_b \exp \left\{  - \gamma \hat{l}_t (\hat{p}^t_b ; \hat{p}^t) \right\} \right).
\end{align*}
We have
\begin{align*}
\log \Theta_T 
& = \log \Theta_1 + \sum^T_{t=1} \log \left( \frac{\Theta_t}{\Theta_{t-1}} \right) \\
& = \sum^T_{t=1} \log \left( \sum^E_{b=1} \theta^t_b \exp \left\{ -\gamma \hat{l}_t (\hat{p}^t_b ; \hat{p}^t)  \right\}  \right) \\
& \leq \sum^T_{t=1} \left( 
- \gamma \sum^E_{b=1} \theta^t_b \hat{l}_t (\hat{p}^t_b ; \hat{p}^t) + \frac{\gamma^2 M^2 a^t_{\max} }{ 8 K \alpha^2 } \right) && \text{(Lemma~\ref{lemma:Hoeffding-inequality})} \\
& \leq 
- \gamma \sum^T_{t=1} \hat{l}_t (\hat{p}^t ; \hat{p}^t) + \frac{\gamma^2 M^2 \left( \sum^T_{t=1} a^t_{\max} \right) }{ 8 K \alpha^2 } && \text{(Jensen's inequality)}
\end{align*}
and
\begin{multline*}
\log \left( \Theta_T  \right) = \log \left( \sum^E_{b=1} \theta^1_b \exp \left\{ - \gamma \hat{L}^b_T \right\} \right) \\
\geq \log \left(  \max_{1\leq b \leq E} \theta^1_b \exp \left\{ - \gamma \hat{L}^b_T \right\} \right) 
= - \gamma \min_{1 \leq b \leq E} \left\{  \hat{L}^b_T + \frac{1}{\gamma} \log \frac{1}{\theta^1_b} \right\}.
\end{multline*}
Combining the last two displays, we have
\begin{equation*}
- \gamma \min_{1 \leq b \leq E} \left\{  \hat{L}^b_T + \frac{1}{\gamma} \log \frac{1}{\theta^1_b} \right\}
\leq - \gamma \sum^T_{t=1} \hat{l}_t (\hat{p}^t ; \hat{p}^t) + \frac{\gamma^2 M^2 \left( \sum^T_{t=1} a^t_{\max} \right) }{ 8 K \alpha^2 },
\end{equation*} 
which implies that
\begin{equation*}
\sum^T_{t=1} \hat{l}_t (\hat{p}^t ; \hat{p}^t) - \hat{L}^e_T \leq \frac{\gamma M^2 \left( \sum^T_{t=1} a^t_{\max} \right) }{ 8 K \alpha^2 } + \frac{1}{\gamma} \log \frac{1}{\theta^1_e} \leq \frac{\gamma M^2 T A_{\max} }{ 8 K \alpha^2 } + \frac{1}{\gamma} \log \frac{1}{\theta^1_e},
\quad e \in [E].
\end{equation*}
Taking expectation on both sides, we then have
\begin{equation*}
\mathbb{E} \left[ \sum^T_{t=1} \hat{l}_t (\hat{p}^t ; \hat{p}^t) - \hat{L}^e_T \right] \leq \frac{\gamma M^2 T A_{\max} }{ 8 K \alpha^2 } + \frac{1}{\gamma} \log \frac{1}{\theta^1_e}.
\end{equation*}

Since $\theta^1_e \geq \frac{1}{E^2}$, $\log 1/\theta^1_e \leq 2 \log E$. Let $\gamma=\sqrt{8 K \alpha^2 / (T M^2 A_{\max} ) }$ to minimize the right hand side of the above inequality with $\log 1/\theta^1_e$ substituted by 1. Then 
\begin{align*}
\mathbb{E} \left[ \sum^T_{t=1} \hat{l}_t (\hat{p}^t ; \hat{p}^t) - \hat{L}^e_T \right] & = \mathbb{E} \left[ \sum^T_{t=1} \hat{l}_t (\hat{p}^t ; \hat{p}^t) - \sum^T_{t=1} \hat{l}_t ( \hat{p}^t_e ; \hat{p}^t ) \right] \\
& \leq \frac{M}{\alpha} \sqrt{ \frac{T A_{\max} }{ 8 K } }  \left( 1 + 2 \log E \right),
\quad e \in [E].
\end{align*}

\subsection{Proof of Theorem~\ref{thm:adp-osmd-opt-thm}}

For Mini-batch SGD, following~\eqref{eq:proof-opt-new-1} and Theorem~\ref{thm:thm-dap-OSMD-sampler}, we have
\begin{align*}
& \mathbb{E} \left[ \left\Vert \nabla F \left( w^R \right) \right\Vert^2_2 \right] \\ 
& \leq \frac{2 D^F}{T \mu} + L \mu \left( \frac{2 \sigma^2}{K B \alpha} + \frac{2 \zeta^2_T(\alpha,\beta)}{K} + \frac{  M G^2 }{ K \alpha^3 } \sqrt{ \frac{ \frac{1}{2}\log M + \beta \log \left( M / \alpha \right) }{T}  } \right. \\
& \quad\quad\quad\quad\quad\quad\quad \left. + \frac{G}{\alpha} \sqrt{  \frac{1}{8 K T} } \left( 1 + 2 \log E \right) \right).
\end{align*}
Let
\begin{equation*}
\frac{2 D^F}{T \mu} = \frac{\mu L G}{\alpha} \sqrt{  \frac{1}{8 K T} } \left( 1 + 2 \log E \right),
\end{equation*}
we have
\begin{equation*}
\mu = \sqrt{  \frac{ 2 \alpha D^F }{ L G } } \left( \frac{8 K}{T} \right)^{\frac{1}{2}} \sqrt{  \frac{1}{1 + 2 \log E} },
\end{equation*}
and
\begin{equation*}
\frac{2 D^F}{T \mu} + \frac{\mu L G}{\alpha} \sqrt{  \frac{1}{8 K T} } \left( 1 + 2 \log E \right) \leq \sqrt{ \frac{ 2 D^F L G }{ \alpha } } \left( \frac{1}{8K} \right)^{\frac{1}{4}} \left( \frac{1}{T} \right)^{\frac{3}{4}} \sqrt{ 1 + 2 \log E }.
\end{equation*}
Then follow the same argument as in the proof of Theorem~\ref{thm:convg-thm-mini-batch}, when
\begin{multline*}
\mu=\min \left\{\frac{1}{L}, \frac{1}{\sigma}\sqrt{ \frac{ D^F K B \alpha  }{ L T }}, \frac{1}{\zeta_T(\alpha,\beta)} \sqrt{  \frac{D^F K}{L T} }, \right. \\
\left. \frac{ \sqrt{D^F K} \alpha^\frac{3}{2} }{\sqrt{L M} T^{\frac{1}{4}}  G \left( \frac{1}{2}\log M + \beta \log \left( M / \alpha \right) \right)^{\frac{1}{4}} }, \sqrt{  \frac{ \alpha D^F }{ L G } } \left( \frac{K}{T} \right)^{\frac{1}{2}} \sqrt{  \frac{1}{1 + 2 \log E} } \right\},
\end{multline*}
we have
\begin{equation*}
\begin{aligned}
& \quad \mathbb{E} \left[ \left\Vert \nabla F \left( w^R \right) \right\Vert^2 \right] \\
& \lesssim \frac{D^F L}{T} + \frac{ \sigma \sqrt{D^F L} }{ \sqrt{TKB \alpha } } + \frac{\zeta_T(\alpha,\beta) \sqrt{D^F L }}{\sqrt{T K}} + \frac{ \sqrt{ D^F L} M^{\frac{1}{2}} G }{ T^{\frac{3}{4}} K^{\frac{1}{2}} \alpha^{\frac{3}{2}} } \left( \frac{1}{2}\log M + \beta \log \left( M / \alpha \right)\right)^{\frac{1}{4}} \\
& \quad + \sqrt{ \frac{ D^F L G }{ \alpha } } \left( \frac{1}{K} \right)^{\frac{1}{4}} \left( \frac{1}{T} \right)^{\frac{3}{4}} \sqrt{ 1 + 2 \log E } .
\end{aligned}
\end{equation*}

For FedAvg, following~\eqref{eq:proof-thm-fedavg-new-1} and Theorem~\ref{thm:thm-dap-OSMD-sampler}, we have
\begin{equation*}
\begin{aligned}
\mathbb{E} \left[ \left\Vert \nabla F(w^R)  \right\Vert^2_2 \right] & \leq \frac{ 8 D^F}{\tilde{\mu} B T } + 8 \left( 2 + \frac{1}{\alpha}  \right) \tilde{\mu}^2 B^2 L^2 \left( \zeta^2_{\text{unif}}  + \frac{\sigma^2}{2 B} \right) \\
& \quad +  4 \tilde{\mu} B L  \left( \frac{2 \zeta^2_T (\alpha, \beta)}{K} + \frac{\sigma^2}{K B \alpha } + \frac{ 3 M G^2 }{ K \alpha^3 } \sqrt{ \frac{ \frac{1}{2} \log M + \beta \log (M/\alpha) }{T}   } \right. \\
& \quad\quad\quad\quad\quad\quad \left. + \frac{G}{\alpha} \sqrt{ \frac{ 1 }{ 8 K B T} } \left( 1 + 2 \log E \right) \right).
\end{aligned} 
\end{equation*}
Let
\begin{equation*}
\frac{ 8 D^F}{\tilde{\mu} B T } = 4 \tilde{\mu} B L \times \frac{G}{\alpha}  \sqrt{ \frac{ 1 }{ 8 K B T} } \left( 1 + 2 \log E \right),
\end{equation*}
we have
\begin{equation*}
\tilde{\mu} = \sqrt{ \frac{ 2 \alpha D^F  }{ L G \left( 1 + 2 \log E \right)  }  } \left( \frac{1}{B} \right)^{\frac{3}{4}} \left( \frac{8 K}{T} \right)^{\frac{1}{4}},
\end{equation*}
and
\begin{equation*}
\frac{ 8 D^F}{\tilde{\mu} B T } + 4 \tilde{\mu} B L \times \frac{G}{\alpha}  \sqrt{ \frac{ 1 }{ 8 K B T} } \left( 1 + 2 \log E \right) = 8^{\frac{3}{4}} \sqrt{2} \sqrt{ \frac{D^F L G}{\alpha}  } \left(  \frac{1}{K B}\right)^{\frac{1}{4}} \left( \frac{1}{T} \right)^{\frac{3}{4}} \sqrt{1 + 2 \log E}.
\end{equation*}
Then follow the same argument as in the proof of Theorem~\ref{thm:convg-thm-fedavg}, when $\mu \geq 1$ and
\begin{multline*}
\tilde{\mu} = \min \left\{ \frac{1}{4 B L} \sqrt{\frac{1}{1+1/(2\alpha)}},  \frac{(D^F)^{\frac{1}{3}}}{  \left( 2 + \frac{1}{\alpha}  \right)^{\frac{1}{3}} B L^{\frac{2}{3}} \left( \zeta^2_{\text{unif}}  + \frac{\sigma^2}{2 B} \right)^{\frac{1}{3}} T^{\frac{1}{3}} }, \right.\\ 
\left. \frac{ \sqrt{ 2 D^F }  }{  B \sqrt{L} \sqrt{ \frac{2 \zeta^2_T (\alpha, \beta)}{K} +  \frac{\sigma^2}{K B \alpha } + \sqrt{ \frac{ \frac{1}{2} \log M + \beta \log (M/\alpha) }{T} } } \sqrt{T} }, \right. \\
\left. \sqrt{ \frac{ 2 \alpha D^F  }{ L G \left( 1 + 2 \log E \right)  }  } \left( \frac{1}{B} \right)^{\frac{3}{4}} \left( \frac{8 K}{T} \right)^{\frac{1}{4}} \right\}, 
\end{multline*}
we have
\begin{multline*}
\mathbb{E} \left[ \left\Vert \nabla F(w^R)  \right\Vert^2_2 \right]
\lesssim \frac{D^F L \sqrt{1 + \frac{1}{2 \alpha}}}{T} + \frac{ \left( 2 + \frac{1}{\alpha}  \right)^{\frac{1}{3}} (D^F L)^{\frac{2}{3}} \zeta^{\frac{2}{3}}_{\text{unif}}  }{ T^{\frac{2}{3}} } + \frac{ \left( 2 + \frac{1}{\alpha}  \right)^{\frac{1}{3}} (D^F L)^{\frac{2}{3}}  \sigma^{\frac{2}{3}}  }{ B ^{\frac{1}{3}} T^{\frac{2}{3}} } \\
+ \frac{ \sqrt{ D^F L }\zeta_T (\alpha, \beta) }{ \sqrt{T K} } + \frac{ \sqrt{ D^F L } \sigma  }{ \sqrt{ T K B \alpha } } + \frac{ \sqrt{D^F L } }{ \sqrt{T} } \left( \frac{ \frac{1}{2} \log M + \beta \log (M/\alpha) }{T} \right)^{\frac{1}{4}} \\
+ \sqrt{ \frac{D^F L G}{\alpha}  } \left(  \frac{1}{K B}\right)^{\frac{1}{4}} \left( \frac{1}{T} \right)^{\frac{3}{4}} \sqrt{1 + 2 \log E}.
\end{multline*}

\subsection{Proof of Theorem~\ref{thm:adap-OSMD-sampler-doub-trick}}
\label{sec:proof-thm-5}

Recall that $T_b=2^{b-1}$.
Let $B=\lceil \log_2 (T+1) \rceil$, we then have $T_B \leq T \leq T_{B+1} - 1$, which implies that $1 \leq T - T_B + 1 \leq T_{B+1} - T_B = 2^B $.

Note that $\hat{p}^{T_b}$ is reinitialized as the uniform distribution. Let
\begin{equation*}
\text{D-Regret}_b  = \mathbb{E} \left[ \sum^{T_{b+1}-1}_{t=T_b} l_t (\hat{p}^t)  - \sum^{T_{b+1}-1}_{t=T_b} l_t (q^t) \right].
\end{equation*}
Similar to the proof of~\eqref{eq:proof-dap-1} and~\eqref{eq:proof-dap-2}, we have
\begin{align*}
& \quad \text{D-Regret}_b \\
& \leq  \frac{ 3 M^3 A_{\max} }{ K \alpha^3 } \sqrt{ \left[ \frac{1}{2}\log M + \log \left( M / \alpha \right) \mathbb{E} \left[ \text{TV} \left( q^{T_b:(T_{b+1}-1)} \right) \right] \right] (T_{b+1} - T_b) } \\
& \qquad \qquad + \frac{M}{\alpha} \sqrt{ \frac{ (T_{b+1}-T_b) A_{\max} }{ 8 K } } (1 + 2 \log E_b ) \\
& =  \frac{ 3 M^3 A_{\max} }{ K \alpha^3 } \sqrt{ \left[ \frac{1}{2}\log M + \log \left( M / \alpha \right) \mathbb{E} \left[ \text{TV} \left( q^{T_b:(T_{b+1}-1)} \right) \right] \right] } \left( \sqrt{2} \right)^{b-1} \\
& \qquad \qquad + \frac{M}{\alpha} \sqrt{ \frac{ A_{\max} }{ 8 K } } (1 + 2 \log E_b ) \left( \sqrt{2} \right)^{b-1} \\
& \leq  \frac{ 3 M^3 A_{\max} }{ K \alpha^3 } \sqrt{ \left[ \frac{1}{2}\log M + \log \left( M / \alpha \right) \beta \right] } \left( \sqrt{2} \right)^{b-1} + \frac{M}{\alpha} \sqrt{ \frac{ A_{\max} }{ 8 K } } (1 + 2 \log E ) \left( \sqrt{2} \right)^{b-1},
\end{align*}
where $E_b$ is defined in~\eqref{eq:doub-trick-set-length} and $E$ is defined in~\eqref{eq:expert-rates-set-length}.
We can similarly obtain
\begin{multline*}
\mathbb{E} \left[ \sum^T_{t=T_B} l_t (\hat{p}^t)  - \sum^T_{t=T_B} l_t (q^t) \right] \leq  \frac{ 3 M^3 A_{\max} }{ K \alpha^3 } \sqrt{ \left[ \frac{1}{2}\log M + \log \left( M / \alpha \right) \beta \right] }  \left( \sqrt{2} \right)^B \\
+ \frac{M}{\alpha} \sqrt{ \frac{ A_{\max} }{ 8 K } } (1 + 2 \log E ) \left( \sqrt{2} \right)^B.
\end{multline*}

\begingroup
\allowdisplaybreaks

Combine the last two displays, we have
\begin{align*}
\text{D-Regret}_T (q^{1:T}) &=  \mathbb{E} \left[ \sum^{T}_{t=1} l_t (\hat{p}^t)  - \sum^{T}_{t=1} l_t (q^t) \right] \\
& = \sum^{B-1}_{b=1} \mathbb{E} \left[ \sum^{T_{b+1}-1}_{t=T_b} l_t (\hat{p}^t)  - \sum^{T_{b+1}-1}_{t=T_b} l_t (q^t) \right] + \mathbb{E} \left[ \sum^T_{t=T_B} l_t (\hat{p}^t)  - \sum^T_{t=T_B} l_t (q^t) \right] \\
& \leq \frac{ 3 M^3 A_{\max} }{ K \alpha^3 } \sqrt{ \left[ \frac{1}{2}\log M + \log \left( M / \alpha \right) \beta \right] } \left( \sum^{B-1}_{b=1} \left( \sqrt{2} \right)^{b-1} + \left( \sqrt{2} \right)^{B} \right) \\ 
& \quad + \frac{M}{\alpha} \sqrt{ \frac{ A_{\max} }{ 8 K } } (1 + 2 \log E ) \left( \sum^{B-1}_{b=1} \left( \sqrt{2} \right)^{b-1} + \left( \sqrt{2} \right)^{B} \right) \\
& = \frac{ 3 M^3 A_{\max} }{ K \alpha^3 } \sqrt{ \left[ \frac{1}{2}\log M + \log \left( M / \alpha \right) \beta \right] } \frac{ \left( \sqrt{2} \right)^{B+1} - \sqrt{2} }{\sqrt{2}-1} \\ 
& \quad + \frac{M}{\alpha} \sqrt{ \frac{ A_{\max} }{ 8 K } } (1 + 2 \log E ) \frac{ \left( \sqrt{2} \right)^{B+1} - \sqrt{2} }{\sqrt{2}-1} \\
& \leq \frac{ 3 M^3 A_{\max} }{ K \alpha^3 } \sqrt{ \left[ \frac{1}{2}\log M + \log \left( M / \alpha \right) \beta \right] } \frac{ 2 }{\sqrt{2}-1} \sqrt{ T } \\ 
& \quad + \frac{M}{\alpha} \sqrt{ \frac{ A_{\max} }{ 8 K } } (1 + 2 \log E )  \frac{ 2 }{\sqrt{2}-1} \sqrt{ T } .
\end{align*}

\endgroup

\section{Useful Lemmas}

\begin{lemma}\label{lemma:convx-opt-cond}
Suppose that $f$ is a differentiable convex function defined on $\text{dom}f$, and $\mathcal{X} \subseteq \text{dom}f$ is a closed convex set. Then $x$ is the minimizer of $f$ on $\mathcal{X}$ if and only if
\begin{equation*}
\nabla f (x)^{\top} (y-x) \geq 0 \quad \text{for all } y \in \mathcal{X}. 
\end{equation*}
\end{lemma}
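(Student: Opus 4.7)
The plan is to prove the two directions separately; this is a textbook first-order optimality condition for constrained convex minimization, so no deep ideas are needed, only two short arguments that exploit convexity of $f$ and of $\mathcal{X}$.

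For the necessity direction, suppose $x \in \mathcal{X}$ minimizes $f$ over $\mathcal{X}$. Fix an arbitrary $y \in \mathcal{X}$ and consider the line segment $x_t \coloneqq x + t(y-x)$ for $t \in [0,1]$; since $\mathcal{X}$ is convex, $x_t \in \mathcal{X}$, hence $f(x_t) \geq f(x)$ for all such $t$. Dividing $(f(x_t)-f(x))/t \geq 0$ and letting $t \downarrow 0$, the directional derivative gives $\nabla f(x)^\top (y-x) \geq 0$. This is the one place where differentiability is used.

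For the sufficiency direction, assume the variational inequality $\nabla f(x)^\top (y-x) \geq 0$ holds for every $y \in \mathcal{X}$. By convexity of $f$ on $\text{dom}\,f$, we have the subgradient inequality $f(y) \geq f(x) + \nabla f(x)^\top (y-x)$ for all $y$ in the domain, in particular for all $y \in \mathcal{X}$. Combining the two yields $f(y) \geq f(x)$ for every $y \in \mathcal{X}$, so $x$ is a minimizer of $f$ on $\mathcal{X}$.

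There is no substantive obstacle here: both directions are one-liners once the right inequalities are written down. The only thing worth being careful about is to invoke convexity of $\mathcal{X}$ (to keep $x_t$ feasible in the necessity part) and convexity of $f$ (for the subgradient inequality in the sufficiency part), and to note that differentiability is needed only to pass from the difference quotient to $\nabla f(x)^\top (y-x)$.
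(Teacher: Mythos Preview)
Your proof is correct and is precisely the standard argument for this first-order optimality condition. The paper itself does not give a proof at all but simply cites Section~4.2.3 of Boyd and Vandenberghe, \emph{Convex Optimization}; your two one-line arguments (directional derivative along the segment for necessity, the gradient inequality for sufficiency) are exactly what that reference contains, so you have supplied what the paper only points to.
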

\begin{proof}
See Section 4.2.3 of~\cite{boyd2004convex}.
\end{proof}

\begin{lemma}
\label{lemma:lips-upper-bound}
Let $f:\mathcal{W} \to \mathbb{R}$ be defined on $\mathcal{W} \subseteq \mathbb{R}^d$, where $\mathcal{W}$ is a convex set. Suppose that $f$ is continuously differentiable and first-order $L$-smooth, that is,
\begin{equation*}
\Vert \nabla f(x) - \nabla f(y) \Vert \leq L \Vert x - y \Vert \quad \text{ for all } x,y \in \mathcal{W},
\end{equation*}
then we have
\begin{equation*}
\left\vert f(y) - f(x) - \left\langle \nabla f(x), y - x \right\rangle \right\vert \leq \frac{L}{2} \left\Vert x -y \right\Vert^2_2 \quad \text{ for all } x,y \in \mathcal{W}.
\end{equation*}
\begin{proof}
We follow Theorem 2.1.5 of~\cite{nesterov2018lectures}.
Note that
\begin{align*}
& \quad \left\vert f(y) - f(x) - \left\langle \nabla f(x), y - x \right\rangle \right\vert \\
& = \left\vert \int^1_0 \left\langle \nabla f\left(x + t (y-x) \right) , y - x \right\rangle dt - \left\langle \nabla f(x),  y - x\right\rangle \right\vert \\
& = \left\vert \int^1_0 \left\langle \nabla f\left(x + t (y-x) \right) - \nabla f(x) , y - x \right\rangle dt \right\vert \\
& \leq \int^1_0 \left\Vert \nabla f\left(x + t (y-x) \right) - \nabla f(x)  \right\Vert \left\Vert y - x  \right\Vert dt \\
& \leq \int^1_0 L t \left\Vert y - x  \right\Vert^2_2 dt \\
& = \frac{L}{2} \left\Vert y - x  \right\Vert^2_2,
\end{align*}
which then implies the conclusion.
\end{proof}
\end{lemma}

\begin{lemma}[Relaxed Triangle Inequality]
\label{lemma:relax-triangle-ineq}
This is Lemma~3 of~\cite{karimireddy2020scaffold}. Let $\{v_1,\ldots,v_\tau\}$ be $\tau$ vectors in $\mathbb{R}^d$. Then the following are true:
\begin{align*}
\Vert v_i + v_j \Vert^2 & \leq (1+a) \Vert v_i \Vert^2 + \left( 1+ \frac{1}{a} \right) \Vert v_j \Vert^2 \quad \text{ for any } a > 0, \\
\left\Vert \sum^{\tau}_{i=1}  v_i \right\Vert^2_2 & \leq \tau \sum^{\tau}_{i=1}  \left\Vert  v_i \right\Vert^2_2.
\end{align*}
\begin{proof}
See Lemma~3 of~\cite{karimireddy2020scaffold}.
\end{proof}

\end{lemma}

\begin{lemma}
\label{lemma:diam-max-Phi}
For $q \in \mathcal{P}_{M-1}$ we have $D_{\Phi}(q \, \Vert \, \punif) \leq \log M$, where $\Phi$ is the unnormalized negative entropy.
\end{lemma}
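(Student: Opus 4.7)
The plan is to prove this by direct computation, unfolding the definition of the Bregman divergence with $\Phi = \Phi_e$. First I would note that $\nabla \Phi_e(x) = (\log x_1, \dots, \log x_M)^\top$, so that evaluating at the uniform distribution gives $\nabla \Phi_e(\punif) = (-\log M, \dots, -\log M)^\top$. Plugging into the definition
\[
D_{\Phi_e}(q \,\Vert\, \punif) = \Phi_e(q) - \Phi_e(\punif) - \langle \nabla \Phi_e(\punif), q - \punif \rangle,
\]
I would compute $\Phi_e(\punif) = -\log M - 1$ and $\Phi_e(q) = \sum_m q_m \log q_m - 1$, and observe that the inner product term vanishes because $\langle \nabla \Phi_e(\punif), q - \punif \rangle = -\log M \cdot \sum_m (q_m - 1/M) = 0$ since both $q$ and $\punif$ sum to one.

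After simplification, this yields the clean identity
\[
D_{\Phi_e}(q \,\Vert\, \punif) = \log M + \sum_{m=1}^M q_m \log q_m.
\]
The final step is to observe that $\sum_m q_m \log q_m \leq 0$ for any $q \in \mathcal{P}_{M-1}$, since $q_m \in [0,1]$ implies $\log q_m \leq 0$ (with the convention $0 \log 0 = 0$); equivalently, this is the non-positivity of the negative Shannon entropy on the simplex. Combining, $D_{\Phi_e}(q \,\Vert\, \punif) \leq \log M$.

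There is no real obstacle: every step is a direct substitution, and the identity has already been stated in passing in the proof of Corollary~\ref{corollary:bd-l1-norm-best}. The only mild care needed is the handling of boundary points of $\mathcal{P}_{M-1}$ where some $q_m = 0$, which is resolved by the standard convention $0 \log 0 = 0$ fixed in the notation section of the paper.
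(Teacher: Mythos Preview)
Your proposal is correct and follows essentially the same direct-computation approach as the paper: expand the Bregman divergence, observe the inner product term vanishes, and bound the remaining entropy term by zero. In fact your version is slightly cleaner, arriving at the exact identity $D_{\Phi_e}(q\,\Vert\,\punif)=\log M+\sum_m q_m\log q_m$ (which the paper also records in the proof of Corollary~\ref{corollary:bd-l1-norm-best}) rather than bounding $\Phi_e(q)$ and $\Phi_e(\punif)$ separately.
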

\begin{proof}
Since $\Phi(q)=\sum^M_{m=1} q_m (\log q_m - 1) \leq 0$, $\Phi(\punif) = - \log M$, and 
\[
\langle \nabla \Phi (\punif), q - \punif \rangle = \sum^M_{m=1} (q_m - \frac{1}{M}) \log \frac{1}{M} = 0,
\]
we have $D_{\Phi}(q \, \Vert \, p) \leq \log M$.
\end{proof}

\begin{lemma}[Hoeffding’s Inequality]
\label{lemma:Hoeffding-inequality}
Let $X$ be a random variable with $a \leq X \leq b$ for $a,b \in \mathbb{R}$. Then for all $s \in \mathbb{R}$, we have
\begin{equation*}
\log \mathbb{E} \left[ e^{sX} \right] \leq s \mathbb{E} [X] + \frac{s^2(b-a)^2}{8}.
\end{equation*}
\end{lemma}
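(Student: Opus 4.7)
The plan is to reduce to the mean-zero case and then control the log-moment-generating function via its second derivative. First, I would introduce $Y = X - \mathbb{E}[X]$, so that $Y$ has mean zero and takes values in an interval of length $b - a$ (namely $[a - \mathbb{E}[X], b - \mathbb{E}[X]]$). Since $\log \mathbb{E}[e^{sX}] = s\,\mathbb{E}[X] + \log \mathbb{E}[e^{sY}]$, the claim reduces to showing $\psi(s) \coloneqq \log \mathbb{E}[e^{sY}] \leq s^2(b-a)^2/8$ for all $s \in \mathbb{R}$.

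Next I would examine $\psi$ as a function of $s$. We have $\psi(0) = 0$ and, by differentiating under the expectation (justified since $Y$ is bounded, so $e^{sY}$ has derivatives in $s$ of all orders with integrable bounds),
\begin{equation*}
\psi'(s) = \frac{\mathbb{E}[Y e^{sY}]}{\mathbb{E}[e^{sY}]}, \qquad \psi''(s) = \frac{\mathbb{E}[Y^2 e^{sY}]}{\mathbb{E}[e^{sY}]} - \left( \frac{\mathbb{E}[Y e^{sY}]}{\mathbb{E}[e^{sY}]} \right)^{\!2}.
\end{equation*}
In particular $\psi'(0) = \mathbb{E}[Y] = 0$, and $\psi''(s)$ is exactly the variance of $Y$ under the tilted probability measure $\mathrm{d}\mathbb{Q}_s / \mathrm{d}\mathbb{P} = e^{sY}/\mathbb{E}[e^{sY}]$.

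The main step is the uniform bound $\psi''(s) \leq (b-a)^2/4$. Under $\mathbb{Q}_s$, the random variable $Y$ still lies in an interval of length $b-a$, so it suffices to prove the deterministic fact that any random variable $Z$ supported in an interval of length $L$ satisfies $\mathrm{Var}(Z) \leq L^2/4$ (Popoviciu's inequality). I would prove this by noting $\mathrm{Var}(Z) \leq \mathbb{E}[(Z - c)^2]$ for any constant $c$, and choosing $c$ to be the midpoint of the interval, which gives $(Z - c)^2 \leq L^2/4$ pointwise; taking expectation yields the bound. This is the one nontrivial ingredient; everything else is calculus.

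Finally I would assemble the pieces via Taylor's theorem with Lagrange remainder applied to $\psi$ on the interval between $0$ and $s$: there exists $\xi$ between $0$ and $s$ such that
\begin{equation*}
\psi(s) = \psi(0) + s\,\psi'(0) + \frac{s^2}{2}\,\psi''(\xi) \leq 0 + 0 + \frac{s^2}{2} \cdot \frac{(b-a)^2}{4} = \frac{s^2(b-a)^2}{8}.
\end{equation*}
Adding back the $s\,\mathbb{E}[X]$ term yields the desired inequality for $\log \mathbb{E}[e^{sX}]$. I do not anticipate any real obstacle; the only subtlety is justifying differentiation under the integral and invoking Popoviciu's variance bound cleanly.
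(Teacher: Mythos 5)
Your proof is correct and complete. The paper itself does not prove this lemma---it simply cites Section~2 of Wainwright's \emph{High-Dimensional Statistics}---and your argument is the standard proof of Hoeffding's lemma: center the variable, observe that $\psi''(s)$ is the variance of $Y$ under the exponentially tilted measure, bound that variance by $(b-a)^2/4$ via Popoviciu's inequality (midpoint argument), and conclude by Taylor's theorem using $\psi(0)=\psi'(0)=0$. All steps, including the differentiation under the integral (justified by boundedness of $Y$) and the fact that the tilted measure keeps $Y$ supported in the same length-$(b-a)$ interval, are sound, so nothing is missing.
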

\begin{proof}
See Section 2 of~\cite{wainwright2019high}.
\end{proof}

\begin{lemma}[Based on Exercise 26.12 of \cite{lattimore2020bandit}]
\label{lemma:mirror-solver-lemma}
Let $\alpha \in [0,1]$, $\mathcal{A}=\mathcal{P}_{M-1} \cap [ \alpha/M, 1 ]^M$, $\mathcal{D}=[0,\infty)^M$, and $\Phi$ is the unnormalised entropy on $\mathcal{D}$. For $y \in [0,\infty)^M$, let $x=\arg\min_{v \in \mathcal{A}} D_{\Phi}(v \Vert y)$. Suppose $y_1\leq y_2 \leq \dots \leq y_M$. Let $m^{\star}$ be the smallest value such that
\begin{equation*}
y_{m^{\star}} \left( 1 -  \frac{ m^{\star}-1 }{M} \alpha \right) > \frac{  \alpha}{M} \sum^M_{m=m^{\star}} y_m.
\end{equation*}
Then 
\begin{equation*}
x_m = 
\begin{cases}
\frac{\alpha}{M} & \text{if } m < m^{\star} \\
\frac{(1 - \frac{m^{\star} -1}{M} \alpha ) y_m}{\sum^M_{n=m^{\star}} y_n } & \text{otherwise}.
\end{cases}
\end{equation*}
\end{lemma}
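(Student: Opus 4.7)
The plan is to treat this as a constrained convex optimization problem and extract the structure of the minimizer from the KKT conditions. First I would simplify the objective: using $\Phi_e(x)=\sum_m (x_m \log x_m - x_m)$, a direct computation gives $D_{\Phi_e}(v \Vert y) = \sum_m v_m \log(v_m/y_m) - \sum_m v_m + \sum_m y_m$, and on the simplex the middle sum is the constant $1$, so (up to constants) we minimize $\sum_m v_m \log(v_m/y_m)$ over $v \in \mathcal{A}$. This objective is strictly convex and $\mathcal{A}$ is a nonempty compact convex set (since $\alpha \leq 1$), so the minimizer $x$ is unique.

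Next, I would form the Lagrangian with a multiplier $\lambda \in \mathbb{R}$ for the equality constraint $\sum_m v_m = 1$ and multipliers $\mu_m \geq 0$ for the bound constraints $v_m \geq \alpha/M$. Stationarity gives $\log(x_m/y_m) + 1 + \lambda - \mu_m = 0$, so, writing $c := e^{-1-\lambda}$, we obtain $x_m = c\, y_m \, e^{\mu_m}$. Complementary slackness then forces the water-filling dichotomy: either $\mu_m = 0$ and $x_m = c y_m$, or $\mu_m \geq 0$ and $x_m = \alpha/M$, with the latter requiring $c y_m \leq \alpha/M$ for consistency with $\mu_m \geq 0$, and the former requiring $c y_m > \alpha/M$ (the boundary case is harmless by uniqueness). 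Let $S \subseteq [M]$ denote the active set where $x_m = \alpha/M$.

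Since $y_1 \leq y_2 \leq \dots \leq y_M$, the condition $c y_m \leq \alpha/M$ is preserved as $m$ decreases, which forces $S$ to be a prefix $\{1,\dots,m^{\star}-1\}$ for some $m^{\star}$. The normalization $\sum_m x_m = 1$ determines $c$ uniquely from $m^{\star}$:
\[
\frac{(m^{\star}-1)\alpha}{M} + c \sum_{n=m^{\star}}^{M} y_n = 1 \quad \Longrightarrow \quad c = \frac{1 - \frac{m^{\star}-1}{M}\alpha}{\sum_{n=m^{\star}}^{M} y_n}.
\]
Inserting this $c$ into the inactive-set condition $c y_{m^{\star}} > \alpha/M$ gives exactly $y_{m^{\star}}(1 - \tfrac{m^{\star}-1}{M}\alpha) > \tfrac{\alpha}{M} \sum_{n=m^{\star}}^{M} y_n$, which matches the definition of $m^{\star}$ in the statement. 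The closed form for $x_m$ then follows by substituting $c$ back into $x_m = c y_m$ for $m \geq m^{\star}$.

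The step that requires the most care is verifying that the candidate threshold prescribed in the lemma really does produce a KKT-admissible solution, i.e.\ that \emph{both} parts of the dichotomy hold simultaneously. I would handle this by a short monotonicity argument: (i) existence of $m^{\star}$, which follows by checking the inequality at $m=M$, where it reduces to $y_M(1 - \tfrac{M-1}{M}\alpha) > \tfrac{\alpha}{M}y_M$, equivalent to $1 > \alpha$ (the boundary $\alpha=1$ collapses $\mathcal{A}$ to $\{\punif\}$ and is handled separately); and (ii) consistency on the active set, i.e.\ $c y_m \leq \alpha/M$ for all $m < m^{\star}$. Part (ii) uses $y_m \leq y_{m^{\star}-1}$ together with the minimality of $m^{\star}$, which gives $y_{m^{\star}-1}(1 - \tfrac{m^{\star}-2}{M}\alpha) \leq \tfrac{\alpha}{M}\sum_{n\geq m^{\star}-1} y_n$; rearranging this inequality and using $y_{m^{\star}-1} \geq 0$ yields the desired $c y_{m^{\star}-1} \leq \alpha/M$, and hence $c y_m \leq \alpha/M$ for all smaller $m$. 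Since all KKT conditions are satisfied and the objective is strictly convex, the constructed point is the unique minimizer.
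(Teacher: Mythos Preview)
Your proposal is correct and follows essentially the same approach as the paper: both set up the KKT conditions for the KL-type objective under the simplex-plus-lower-bound constraints, read off the water-filling structure $x_m = c\,y_m e^{\mu_m}$, use the ordering of $y$ to force the active set to be a prefix, and determine $c$ from normalization. The only minor presentational difference is that the paper first identifies the minimizer's threshold and then proves it is the \emph{smallest} index satisfying the inequality (their display~(\ref{eq:solv-lemma-7})), whereas you take the lemma's $m^{\star}$ as given and verify dual feasibility on the active set via the rearrangement at $m^{\star}-1$; these are two directions of the same equivalence.
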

\begin{proof}
Consider the following constrained optimization problem:
\begin{align*}
\min_{ u \in [0,\infty)^M } \, & \sum^M_{m=1} u_m \log \frac{u_m}{y_m}, \\
\text{s.t. } & \sum^M_{m=1} u_m = 1, \\
& u_m \geq \frac{\alpha}{M}, \qquad m \in [M].
\end{align*}
Since $x$ is the solution to this problem, by the optimality condition, there exists $\lambda,\nu_1,\dots,\nu_M \in \mathbb{R}$ such that
\begin{align}
\log \frac{x_m}{y_m} + 1- \lambda - \nu_m & =  0, 
\qquad m \in [M], \label{eq:solv-lemma-1} \\
\sum^M_{m=1} x_m & = 1, \label{eq:solv-lemma-2} \\
x_m - \frac{\alpha}{M} & \geq 0, \qquad m \in [M], \label{eq:solv-lemma-3} \\
\nu_m & \geq 0,  \qquad m \in [M], \label{eq:solv-lemma-4} \\
\nu_m \left( x_m - \frac{\alpha}{M} \right) &= 0, 
\qquad m \in [M]. \label{eq:solv-lemma-5}
\end{align}
By~\eqref{eq:solv-lemma-1}, we have $x_m=y_m \exp ( -1 + \lambda + \nu_m )$. By~\eqref{eq:solv-lemma-4} and \eqref{eq:solv-lemma-5}, when $x_m=\alpha/M$, we have $x_m=y_m \exp ( -1 + \lambda + \nu_m ) \geq y_m \exp ( -1 + \lambda )$; when $x_m>\alpha/M$, we have $x_m = y_m \exp ( -1 + \lambda )$.
Assume that $x_1 = \dots = x_{m^{\star}-1} = \alpha/M < x_{m^{\star}} \leq \dots \leq x_M$. Then
\begin{equation*}
1 = \sum^M_{m=1} x_m = ( m^{\star} -1 ) \frac{\alpha}{M} + \exp (-1+\lambda) \cdot \sum^M_{m=m^{\star}} y_m,
\end{equation*}
which implies that
\begin{equation}\label{eq:lemma4-proof-helper1}
\exp (-1+\lambda) = \frac{ 1 - ( m^{\star} -1 ) \frac{\alpha}{M}  }{  \sum^M_{m=m^{\star}} y_m }.
\end{equation}
Thus, we have
\begin{equation*}
x_{m^{\star}} = y_{m^{\star}} \exp (-1+\lambda) = y_{m^{\star}} \frac{ 1 - ( m^{\star} -1 ) \frac{\alpha}{M}  }{  \sum^M_{m=m^{\star}} y_m } > \frac{\alpha}{M}, 
\end{equation*}
which implies that
\begin{equation}\label{eq:solv-lemma-6}
y_{m^{\star}} \left( 1 -  \frac{ m^{\star}-1 }{M} \alpha \right) > \frac{  \alpha}{M} \sum^M_{m=m^{\star}} y_m .
\end{equation}

To complete the proof, we then only need to show that
\begin{equation}\label{eq:solv-lemma-7}
y_{m^{\prime}} \left( 1 -  \frac{ m^{\prime}-1 }{M} \alpha \right) \leq \frac{  \alpha}{M} \sum^M_{m=m^{\prime}} y_m
\end{equation}
for all $1 \leq m^{\prime} \leq m^{\star}-1$. The result then follows from~\eqref{eq:solv-lemma-6} and \eqref{eq:solv-lemma-7}.
To prove\eqref{eq:solv-lemma-7}, recall that for any $1 \leq m^{\prime} \leq m^{\star}-1$, we have $\alpha/M=y_{m^{\prime}} \exp ( -1 + \lambda + \nu_{m^{\prime}} )$, and because $y_1 \leq \cdots \leq y_M$, we have $\nu_1 \geq \cdots \geq \nu_{m^{\star}-1}$. This way, we have
\begin{equation}\label{eq:lemma4-proof-helper2}
\left( m^{\star}-m^{\prime} \right) \frac{\alpha}{M} = \sum^{m^{\star}-1}_{m=m^{\prime}} y_m \exp \left( -1 + \lambda + \nu_m \right) \leq \exp \left( -1 + \lambda + \nu_{m^{\prime}} \right) \sum^{m^{\star}-1}_{m=m^{\prime}} y_m.
\end{equation}
On the other hand, by~\eqref{eq:lemma4-proof-helper1}, we have
\begin{equation}\label{eq:lemma4-proof-helper3}
1 - ( m^{\star} -1 ) \frac{\alpha}{M} = \exp (-1+\lambda) \sum^M_{m=m^{\star}} y_m \leq \exp \left( -1 + \lambda + \nu_{m^{\prime}} \right) \sum^M_{m=m^{\star}} y_m.
\end{equation}
Combining \eqref{eq:lemma4-proof-helper2} and \eqref{eq:lemma4-proof-helper3}, we have
\begin{align*}
\frac{ 1 - ( m^{\prime} -1 ) \frac{\alpha}{M}  }{  \sum^M_{m=m^{\prime}} y_m } &=
\frac{ 1 - ( m^{\star} -1 ) \frac{\alpha}{M} + ( m^{\star} -m^{\prime} ) \frac{\alpha}{M} }{  \sum^M_{m=m^{\prime}} y_m } \\
& \leq \frac{ \exp \left( -1 + \lambda + \nu_{m^{\prime}} \right) \left( \sum^{m^{\star}-1}_{m=m^{\prime}} y_m + \sum^M_{m=m^{\star}} y_m \right)  }{  \sum^M_{m=m^{\prime}} y_m } \\
& = \exp \left( -1 + \lambda + \nu_{m^{\prime}} \right) \\
& = \frac{ \frac{  \alpha}{M} }{ y_{m^{\prime}} }, 
\end{align*}
which then implies~\eqref{eq:solv-lemma-7}.
\end{proof}

\section{Additional Future Directions}
\label{sec:future-directions}

In this section, we discuss two additional future directions.
In Section~\ref{sec:sampling-personalized}, we discuss the design of sampling algorithms for minimizing personalized FL objectives. Besides, in Section~\ref{sec:samp-phy-constraint}, we discuss sampling with physical constraint in FL system.

\subsection{Client Sampling with Personalized FL Objective}
\label{sec:sampling-personalized}

Data distributions across clients are often heterogeneous. Personalized FL has emerged as one effective way to handle such heterogeneity~\citep{kulkarni2020survey}. \cite{hanzely2021personalized} illustrated how many existing approaches to personalization can be studied through a unified framework, and, in this section, we discuss a natural extension of Adaptive-OSMD Sampler to this personalized objective. Specifically, we study the following optimization problem
\begin{equation}\label{eq:PrimProblem}
\min_{w, \beta}  F(w,\beta) \coloneqq \sum^{M}_{m=1} \lambda_m \phi(w,\beta_m ; \mathcal{D}_m),
\end{equation}
where $w\in \mathbb{R}^{d_0}$ corresponds to the shared parameter and $\beta = (\beta_1, \dots, \beta_M)$ with $\beta_m\in \mathbb{R}^{d_m}$ corresponds to the local parameters. The objective in \eqref{eq:PrimProblem} coves a wide range of personalized federated learning problems~\citep{hanzely2021personalized}. We further generalize the approach and study the following bilevel optimization problem:
\begin{equation}\label{eq:PrimProblem-bilevel}
\begin{aligned}
\min_{w}\  & h(w) \coloneqq \sum^{M}_{m=1} \lambda_m F_m(w,\hat{\beta}_m(w)) \coloneqq \sum^{M}_{m=1} \lambda_m \phi(w,\hat{\beta}_m (w) ; \mathcal{D}_m) \\
& \text{subject to}\quad \hat{\beta}_m  (w) = \arg\min_{\beta_m} G_m(w, \beta_m) \coloneqq \phi(w,\beta_m ; \bar{\mathcal{D}}_m).
\end{aligned}
\end{equation}
When $\mathcal{D}_m=\bar{\mathcal{D}}_m$, then~\eqref{eq:PrimProblem-bilevel} recovers~\eqref{eq:PrimProblem}. When $\bar{\mathcal{D}}_m \neq \mathcal{D}_m$, we then optimize the shared and local parameters on different data sets, which may prevent overfitting. The formulation in~\eqref{eq:PrimProblem-bilevel} is closely related to the implicit MAML~\citep{rajeswaran2019meta}.

In the following, we use $\nabla_{w}$ to denote a partial derivative with respect to $w$ with $\beta_m$ fixed, $\nabla_{\beta_m}$ to denote a partial derivative with respect to $\beta_m$ with $w$ fixed, and $\nabla$ to denote a derivative with respect to $w$ where $\beta_m (w)$ is treated as a function of $w$. Let $\nabla^2_{\beta_m \beta^{\top}_m} G_m (w, \beta_m) \in \mathbb{R}^{d_m\times d_m}$ be the Hessian matrix of $G_m$ with respect to $\beta_m$ where $w$ is fixed, and $\nabla^2_{w \beta^{\top}_m} G_m (w, \beta_m) \in \mathbb{R}^{d_0 \times d_m}$ be the Hessian matrix of $G_m$ with respect to $w$ and $\beta_m$, that is,
\begin{align*}
\left[ \nabla^2_{\beta_m \beta^{\top}_m} G_m (w, \beta_m) \right]_{i,j} & =  \frac{ \partial G_m (w, \beta_m) }{ \partial \beta_{m,i} \beta_{m,j} } \quad \text{for all} \, i,j = 1,2,\dots, d_m, \\
\left[ \nabla^2_{w \beta^{\top}_m} G_m (w, \beta_m) \right]_{i,j} & =  \frac{ \partial G_m (w, \beta_m) }{ \partial w_i \beta_{m,j} } \quad \text{for all} \, i = 1,2,\dots, d_0, j=1,2,\dots,d_m.
\end{align*}
By the implicit function theorem, we have
\begin{equation}
\label{eq:graidents-two-components}
\nabla h (w) = \underbrace{\frac{1}{M} \sum^M_{m=1} \lambda_m \nabla_1 F_m (w, \hat{\beta}_m (w) )}_{\nabla_1 h (w)} + \underbrace{\frac{1}{M} \sum^M_{m=1} \lambda_m \nabla_2 F_m (w, \hat{\beta}_m (w) )}_{\nabla_2 h (w)}
\end{equation}
where
\begin{align*}
\nabla_1 F_m (w, \hat{\beta}_m (w) ) & \coloneqq  \nabla_{w} F_m (w, \hat{\beta}_m (w)), \\
\nabla_2 F_m (w, \hat{\beta}_m (w) ) & \coloneqq - \nabla^2_{w \beta^{\top}_m} G_m (w, \hat{\beta}_m (w)) \left[ \nabla^2_{\beta_m \beta^{\top}_m} G_m (w, \hat{\beta}_m (w)) \right]^{-1} \nabla_{\beta_m} F_m (w, \hat{\beta}_m (w)).
\end{align*}
There are two parts to $\nabla h (w^t)$ and, therefore, instead of choosing a single subset of clients for computing both parts, we decouple $S^t$ into two subsets $S^t_1$ and $S^t_2$, $S^t=S^t_1 \cup S^t_2$. We use clients in $S^t_1$ to compute local updates of the first part, and clients in $S^t_2$ to compute the local updates of the second part. 
To get an estimate of $\nabla h (w)$, we can estimate $\nabla_1 h (w)$ and $\nabla_2 h (w)$ separably and then combine.
Assume that $g^t_{1,m}$ is an estimate of $\nabla_1 F_m (w, \hat{\beta}_m (w) )$ and $g^t_{2,m}$ is an estimate of $\nabla_2 F_m (w, \hat{\beta}_m (w) )$, we can then construct estimates of $\nabla_1 h (w)$ and $\nabla_2 h (w)$ as
\begin{align*}
g^t_1 =  \frac{1}{K_1} \sum_{m \in S^t_1} \lambda_m \frac{g^t_{1,m}}{p^t_{1,m}}, \qquad 
g^t_2 =  \frac{1}{K_2} \sum_{m \in S^t_2} \lambda_m \frac{g^t_{2,m}}{p^t_{2,m}},
\end{align*}
where $K_1=\vert S^t_1 \vert$ and $K_2=\vert S^t_2 \vert$. Then $g^t=g^t_1+g^t_2$ is an estimate of $\nabla h (w)$.

We design $p^t_1$ and $p^t_2$ to choose $S^t_1$ and $S^t_2$ by minimizing the variance of the gradients. Note that 
\begin{multline*}
\min_{ p^t_1} \mathbb{E}_1 \left[ \mathbb{E}_{S^t_1} \left[ \left\Vert g^t_1  - \nabla_1 h (w^t)  \right\Vert^2  \right] \right] + \min_{ p^t_2} \mathbb{E}_2 \left[ \mathbb{E}_{S^t_2} \left[ \left\Vert g^t_2  - \nabla_2 h (w^t)  \right\Vert^2  \right] \right] \\
\leq  \min_{ p^t_1 = p^t_2 = p^t } \mathbb{E} \left[ \mathbb{E}_{S^t} \left[ \left\Vert g^t_1  - \nabla_1 h (w^t)  + g^t_2  - \nabla_2 h (w^t)  \right\Vert^2  \right] \right],
\end{multline*}
so that the decomposition allows us to better minimize the variance. We term this approach as \emph{doubly variance reduction for personalized Federated Learning}. The first part minimizes the variance of updates to the shared global parameter, when the best local parameters are fixed; and the second part minimizes the variance of updates to local parameters, when the global part is fixed. While these two parts are related, any given machine will have different contributions to these two tasks.

Adaptive-OSMD Sampler can be used to minimize the variance for both parts of the gradient. We note that this is a heuristic approach to solving the client sampling problem when minimizing a personalized FL objective. Personalized FL objectives have additional structures that should be used to design more efficient sampling strategies. Furthermore, designing sampling strategies that improve the statistical performance of trained models, rather than improving computational speed, is important in the heterogeneous setting. Addressing these questions is an important area for future research.

\subsection{Sampling with Physical Constraint in FL System}
\label{sec:samp-phy-constraint}

In this paper, we assume that all clients are available in each round. However, in practical FL applications, a subset of the clients may be inactive due to physical constraints, thus we have to assign zero probabilities to them. In this section, we propose a simple extension of our proposed sampling method to such case. 

Specifically, denote the subset of clients that are active at the beginning of round $t$ as $I^t \subseteq [M]$. If we have $\vert I^t \vert \leq K$, we can then use all clients in $I^t$ to make updates in round $t$; otherwise, we would like to choose a smaller subset $S^t \subseteq I^t$ to participate. This can be achieved by rescaling the output sampling distribution of any of our proposed methods, which we denote as $\hat{p}^t$. We let $\tilde{p}^t_m = \hat{p}^t_m / (\sum_{i \in I^t} \hat{p}^t_i)$ and $\tilde{p}^t_m=0$ for all $m \notin I^t$. We can then use $\tilde{p}^t_m$ to choose $S^t$ from $I^t$.

However, analyzing such a method in terms of convergence and regret guarantee is highly non-trivial. Typically, for general active clients sequence $\{I^t\}^T_{t=1}$, the optimization algorithms are not guaranteed to converge even if we involve all clients in $I^t$ in each round. This can happen, for example, if a client is active for only once in the whole training process. Thus, to ensure convergence, we need additional assumptions about $\{I^t\}^T_{t=1}$. Moreover, deriving regret bound is also very challenging, as assigning zero probability to any client will make the variance-reduction loss unbounded and thus the regret can be arbitrarily large. To achieve such theoretical result, one may need to appropriately redefine the regret concept. Such an analysis is beyond the scope of this paper and we leave it for future research.

\section{Application of OSMD Sampler on SCAFFOLD}

{
We explore the potential of implementing our proposed sampler in more sophisticated federated learning optimization techniques. Beyond merely utilizing gradients, numerous state-of-the-art algorithms incorporate additional auxiliary variables to achieve a faster rate of convergence. These auxiliary variables complicate the process of constructing the appropriate surrogate variance reduction loss as outlined in~\eqref{eq:var-reduc-loss}, thus making the extension of our proposed sampling method to these algorithms challenging. Although a thorough exploration in this area is essential for future studies, in this section, we aim to provide insight by focusing on one of the most widely adopted state-of-the-art algorithms, SCAFFOLD~\citep{karimireddy2020scaffold}, as a case in point. We propose a straightforward strategy to incorporate our OSMD sampler into SCAFFOLD. Through simulation experiments, we validate our approach, with the results offering encouraging evidence of its effectiveness.

In addition to utilizing gradients, SCAFFOLD incorporates control variates on both the server and client sides to mitigate client-side heterogeneity. When implementing the OSMD sampler as described in Algorithm~\ref{alg:OSMD-sampler} within the SCAFFOLD framework, it is essential to define environment feedback $\{ a^t_m \}_{m \in S^t}$. For our experiments, we chose to set $a^t_m = \lambda^2_m \left\Vert w^{t+}_m - w^t \right\Vert^2$, where $w^t$ denotes the global model parameter at the start of round $t$, and $w^{t+}_m$ represents the updated local model parameter for client $m$ after performing local mini-batch SGD during round $t$. The learning rate for the OSMD sampler was set as $10^{-3}$. For all additional aspects of the SCAFFOLD algorithm, such as hyperparameter selection and tuning methodology, we adhered to the guidelines in the original paper.

Our experimental setup mirrors that of Section~\ref{sec:AdaDoubOSMD-uniform}. Figure~\ref{fig:loss-scaffold} presents the results, indicating that the OSMD sampler performs marginally better when the heterogeneity is high. This observation implies that utilizing adaptive sampling could potentially enhance SCAFFOLD's effectiveness. A more detailed investigation into this possibility is reserved for subsequent studies.
}

\begin{figure}[t]
\centering
\includegraphics[width=\textwidth]{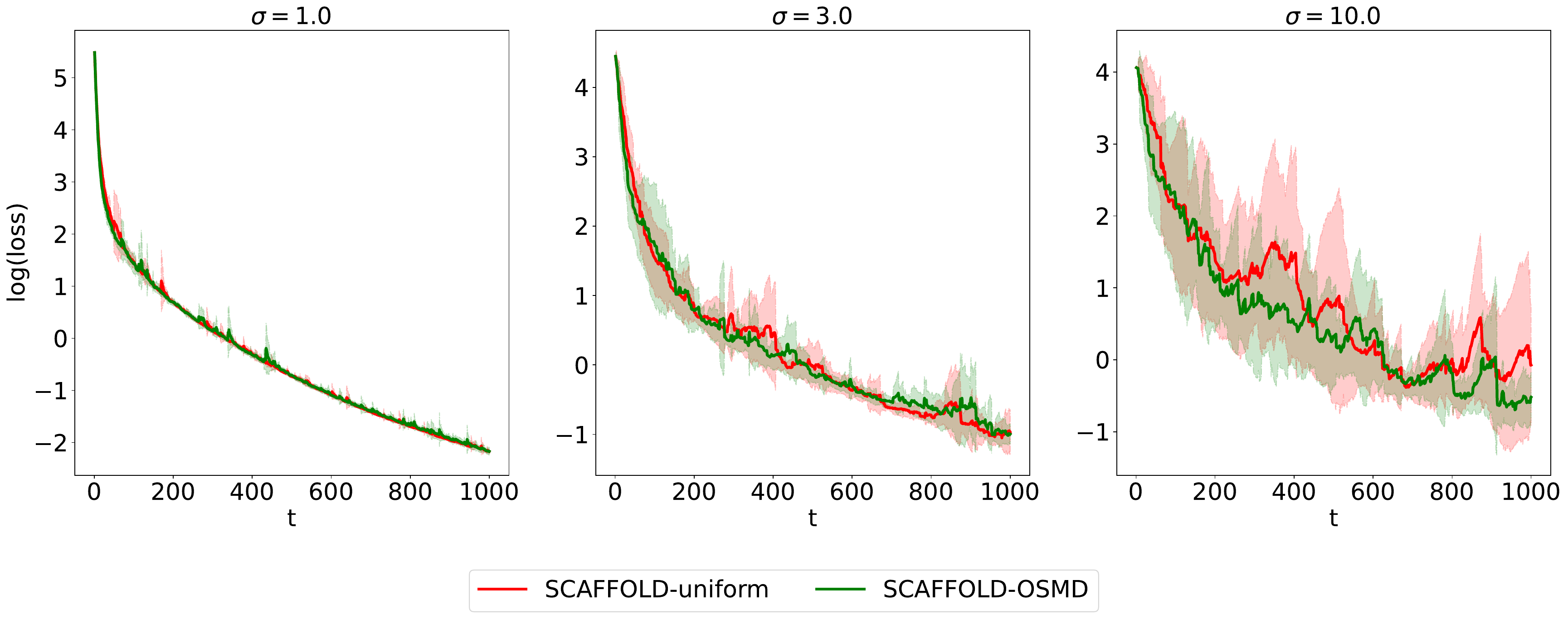}
\caption{The training loss is compared between SCAFFOLD with uniform sampling and SCAFFOLD with the OSMD sampler. Solid lines represent the mean values, while shaded regions indicate $\text{mean} \pm \text{standard deviation}$ across independent runs.}
\label{fig:loss-scaffold}
\end{figure}

\begingroup
\allowdisplaybreaks

\endgroup

\newpage

\bibliography{boxinz-papers}

\end{document}